\documentclass{article}
\usepackage{iclr2027_conference,times}
\iclrfinalcopy
\makeatletter\g@addto@macro\@maketitle{\lhead{Preprint}}\makeatother

\title{Stochastic Estimation of\\Transduced Language Models}

\usepackage[utf8]{inputenc}
\usepackage[T1]{fontenc}
\usepackage{microtype}
\usepackage{graphicx}
\usepackage{xcolor}
\usepackage{booktabs}
\usepackage{subcaption}
\usepackage{natbib}
\usepackage{url}
\usepackage{hyperref}
\usepackage{amsmath}
\usepackage{amssymb}
\usepackage{amsfonts}
\usepackage{mathtools}
\usepackage{amsthm}
\usepackage{ellipsis}

\usepackage[inline]{enumitem}
\setitemize{noitemsep,topsep=0pt,parsep=0pt,partopsep=0pt,leftmargin=*}
\setenumerate{noitemsep,topsep=0pt,parsep=0pt,partopsep=0pt,leftmargin=*}

\usepackage[capitalize,noabbrev]{cleveref} 
\usepackage{fancyvrb}
\crefname{FancyVerbLine}{line}{lines}
\makeatletter

\renewcommand{\p@FancyVerbLine}{}
\def\FV@refstepcounter#1{%
  \stepcounter{#1}%
  \protected@edef\@currentlabel{\arabic{FancyVerbLine}}%
  \cref@constructprefix{FancyVerbLine}{\cref@result}%
  \protected@edef\cref@currentlabel{[FancyVerbLine][\arabic{FancyVerbLine}][\cref@result]\arabic{FancyVerbLine}}%
}
\makeatother
\usepackage{cancel}
\usepackage{inconsolata}
\usepackage{mleftright}
\usepackage{xspace}
\usepackage[frozencache,cachedir=minted-cache]{minted}
\usepackage{tikz}
\usetikzlibrary{automata,arrows,arrows.meta,backgrounds,decorations.markings,decorations.pathmorphing,fit,matrix,positioning,shapes,shapes.arrows,shapes.geometric,shapes.misc}
\usepackage[flushmargin,hang]{footmisc}
\usepackage{bbm}
\usepackage{dsfont}
\usepackage{thmtools}
\usepackage{thm-restate}
\usepackage{float}
\usepackage{flafter}
\usepackage{relsize}
\usepackage{pifont}

\setminted{linenos,firstnumber=last,escapeinside=@@,numbersep=4pt}

\tikzset{
  CenterFix/.style={baseline={(current bounding box.center)}},
  TopFix/.style={baseline={(current bounding box.north)}},
}

\theoremstyle{plain}
\newtheorem{theorem}{Theorem}[section]
\declaretheorem[style=plain,sibling=theorem,name=Proposition]{proposition}

\definecolor{ETHGray}{RGB}{111,111,111}

\crefname{section}{\S}{\S\S}
\crefname{table}{Tab.}{Tab.}
\crefname{figure}{Fig.}{Figs.}
\crefname{algorithm}{Alg.}{}
\crefname{equation}{Eq.}{Eq.}
\crefname{appendix}{App.}{}
\crefname{theorem}{Theorem}{}
\crefname{prop}{Proposition}{}
\crefname{definition}{Def.}{}
\crefname{cor}{Corollary}{}
\crefname{observation}{Observation}{}
\crefformat{section}{\S#2#1#3}
\crefname{namedtheorem}{Hyp.}{Hypotheses}

\crefname{footnote}{footnote}{footnotes}
\Crefname{footnote}{Footnote}{Footnotes}

\definecolor{slatelake}{HTML}{6A8FA8}
\definecolor{saffron}{HTML}{C4A94D}
\definecolor{jademist}{HTML}{5A9E8F}
\hypersetup{
  colorlinks=true,
  linkcolor=blue!35!black,   
  citecolor=blue!35!black,   
  urlcolor=blue!35!black     
}
\tikzset{
transition/.style={-{Latex[length=1.8mm, width=1.4mm]}}, 
every state/.style={thick, fill=gray!10, text height=1.4ex, text depth=0.25ex},
every loop/.append style=-{Latex[length=1.8mm, width=1.4mm]},
initial text=$ $,
line width=0.7pt,
node distance=10mm,
minimum size=4mm,
}

\DeclareMathAlphabet{\mathdutchcal}{U}{dutchcal}{m}{n}

\newcommand{\mymacro}[1]{#1}

\renewcommand{\Pr}[2][]{\mathop{\mathrm{Pr}}_{#1}\mleft[#2\mright]}

\newcommand{\myMintedFunc}[1]{\mbox{\texttt{{\color{blue}#1}}}\xspace}
\newcommand{\isLive}{\myMintedFunc{is\_live}}
\newcommand{\isMember}{\myMintedFunc{is\_member}}
\newcommand{\isCylinder}{\myMintedFunc{is\_cylinder}}
\newcommand{\pruneFunc}{\myMintedFunc{prune}}
\newcommand{\pruneTau}{\prune_{\pruneThreshold}}
\newcommand{\algSMC}{\myMintedFunc{smc\_simple}}
\newcommand{\algSMCrb}{\myMintedFunc{smc\_rb}}
\newcommand{\algSWOR}{\myMintedFunc{beam\_swor}}
\newcommand{\algResample}{\myMintedFunc{resample}}
\newcommand{\algTopM}{\prune_M}
\newcommand{\sworB}{\myMintedFunc{beam\_swor\_adaptive}}
\newcommand{\sworHalt}{\myMintedFunc{swor\_adaptive}}
\newcommand{\algPPS}{\myMintedFunc{pps}}
\newcommand{\queuep}{\queue^{\prime}}
\newcommand{\algQuotient}{{\color{TokenColor}\texttt{Q}}}
\newcommand{\algRemainder}{{\color{TokenColor}\texttt{R}}}
\newcommand{\algBeam}{\myMintedFunc{beam\_summing}}
\newcommand{\swor}{\myMintedFunc{swor}}
\newcommand{\algQuotientp}{{\color{TokenColor}\texttt{Q}}'}
\newcommand{\algRemainderp}{{\color{TokenColor}\texttt{R}}'}
\newcommand{\algFixedWOR}{\myMintedFunc{systematic}}
\providecommand{\prune}{\myMintedFunc{prune}}

\makeatletter
\renewcommand*{\mathellipsis}{%
  \mathinner{%
    \kern\ellipsisbeforegap%
    {\ldotp}\kern\ellipsisgap%
    {\ldotp}\kern\ellipsisgap%
    {\ldotp}\kern\ellipsisaftergap%
  }%
}
\renewcommand*{\dotsb@}{%
  \mathinner{%
    \kern\ellipsisbeforegap%
    {\cdotp}\kern\ellipsisgap%
    {\cdotp}\kern\ellipsisgap%
    {\cdotp}\kern\ellipsisaftergap%
  }%
}
\renewcommand*{\@cdots}{%
  \mathinner{%
    \kern\ellipsisbeforegap%
    {\cdotp}\kern\ellipsisgap%
    {\cdotp}\kern\ellipsisgap%
    {\cdotp}\kern\ellipsisaftergap%
  }%
}
\renewcommand*{\ellipsis@default}{%
  \ellipsis@before
  \kern\ellipsisbeforegap
  .\kern\ellipsisgap
  .\kern\ellipsisgap
  .\kern\ellipsisgap
  \ellipsis@after\relax}
\renewcommand*{\ellipsis@centered}{%
  \ellipsis@before
  \kern\ellipsisbeforegap
  .\kern\ellipsisgap
  .\kern\ellipsisgap
  .\kern\ellipsisaftergap
  \ellipsis@after\relax}
\AtBeginDocument{%
  \DeclareRobustCommand*{\dots}{%
    \ifmmode\@xp\mdots@\else\@xp\textellipsis\fi}}
\def\ellipsisgap{-.05em}
\def\ellipsisbeforegap{-.05em}
\def\ellipsisaftergap{.05em}
\makeatother

\newcommand{\justification}[1]{%
    \refstepcounter{equation}%
    \tag{\theequation \textcolor{black!50}{, \footnotesize{#1}}}
}

\newcommand{\defn}[1]{\textbf{#1}}
\newcommand{\defeq}{\mathrel{\overset{\raisebox{-0.25ex}{\textnormal{\tiny def}}}{=}}}
\newcommand{\simIID}{\mathrel{\overset{\raisebox{-0.25ex}{\textnormal{\tiny i.i.d.}}}{\sim}}}
\newcommand{\indicator}[1]{\mathbbm{1}\mleft\{#1\mright\}}
\newcommand{\bigo}{{\mymacro{ \mathcal{O}}}}
\newcommand{\bigO}[1]{{\mymacro{ \bigo\mleft(#1\mright)}}}

\newcommand{\Quotient}{{\color{TokenColor}\mathcal{Q}}}
\newcommand{\Remainder}{{\color{TokenColor}\mathcal{R}}}
\newcommand{\preCover}{{\color{TokenColor}\mathcal{P}}}
\newcommand{\maxQuotient}{{\color{TokenColor}\mathcal{C}}}
\newcommand{\eos}{{\mymacro{\textsc{eos}}}}
\newcommand{\basisSet}[1]{\langle #1\rangle}
\definecolor{CharacterColor}{RGB}{98,115,19}
\definecolor{TokenColor}{RGB}{140,10,89}
\newcommand{\charfont}[1]{{\color{CharacterColor}\ifmmode\else\strut\fi\texttt{#1}}}
\newcommand{\rvX}[0]{{\color{TokenColor}X}}

\newcommand{\transFnSym}[0]{f}
\newcommand{\transFn}{\ensuremath{{\color{CharacterColor}\transFnSym}}\xspace}
\newcommand{\STPTB}{{\transST}_{\mathrm{ptb}}}
\newcommand{\transFnDNA}{\ensuremath{\transFn_{\mathrm{dna2aa}}}}
\newcommand{\STDNA}{{\transST}_{\mathrm{dna2aa}}}
\newcommand{\srcSym}{{\color{TokenColor}x}}
\newcommand{\srcSymp}{{\color{TokenColor}x^\prime}}
\newcommand{\srcSympp}{{\color{TokenColor}x^{\prime\prime}}}
\newcommand{\srcStr}
{\ensuremath{{\color{TokenColor}\boldsymbol{x}}}\xspace}
\newcommand{\srcStrp}[0]{{\color{TokenColor}\srcStr^\prime}}
\newcommand{\srcStrpp}[0]{{\color{TokenColor}\srcStr^{\prime\prime}}}
\newcommand{\srcStrLt}{\srcStr_{<t}}
\newcommand{\gpt}{\mymacro{p_{\mathsf{gpt2}}}}
\newcommand{\pbigram}{\mymacro{p_{\mathsf{bigram}}}}
\newcommand{\pdna}{\mymacro{p_{\mathsf{dna}}}}
\newcommand{\tgtSym}{{\color{CharacterColor}y}}
\newcommand{\tgtStr}{{\color{CharacterColor}\boldsymbol{y}}}
\newcommand{\pSource}{\mymacro{p_{\srcAlphabet}}}
\newcommand{\pTarget}{\mymacro{p_{\tgtAlphabet}}}
\DeclareRobustCommand{\prefixLanguageOp}[1]{\overrightarrow{#1}}
\newcommand{\prefixSource}{{\mymacro{\prefixLanguageOp{\pSource}}}}
\newcommand{\prefixTarget}{{\mymacro{\prefixLanguageOp{\pTarget}}}}
\newcommand{\srcAlphabet}[0]{{\color{TokenColor}\ensuremath{\mathcal{X}}}\xspace}
\newcommand{\tgtAlphabet}[0]{{\color{CharacterColor}\ensuremath{\mathcal{Y}}}\xspace}
\newcommand{\srcStrings}[0]{\ensuremath{\srcAlphabet^*}\xspace}
\newcommand{\tgtStrings}[0]{\ensuremath{\tgtAlphabet^*}\xspace}
\newcommand{\transST}{{\color{CharacterColor}\texttt{f}}}
\newcommand{\States}{\mymacro{\mathsf{S}}}
\newcommand{\state}{\mymacro{s}}

\newcommand{\srcStringsSub}{{\color{TokenColor}Z}}
\newcommand{\pr}{{\color{TokenColor}\operatorname{pf}}}
\newcommand{\queue}[0]{\mathdutchcal{P}}
\newcommand{\queueBar}{\overline{\queue}}
\newcommand{\VBar}{\overline{V}}
\newcommand{\threshold}{\eta}
\newcommand{\pruneThreshold}{\tau}
\newcommand{\idfunc}{\mathds{1}}

\newcommand{\w}{\mymacro{w}}
\newcommand{\weight}{\w}
\newcommand{\potential}{\mymacro{\phi}}
\newcommand{\rvSym}{{\color{TokenColor}X}}
\newcommand{\proposal}{\mymacro{q}}
\newcommand{\prefixProposal}{\mymacro{\overrightarrow{\proposal}}}
\newcommand{\prefixProposalt}{\mymacro{\overrightarrow{\proposal_t}}}
\newcommand{\expected}[2]{\mymacro{\mathbb{E}_{#1}[#2]}}
\providecommand{\liveset}[1]{\mathcal{A}(#1)}
\providecommand{\cert}{\mathcal{K}}
\providecommand{\interTarget}{\mu}
\providecommand{\twist}{\psi}
\providecommand{\twistOptimal}{\twist^{*}}
\newcommand{\evZ}{Z}
\newcommand{\propZ}{\nu}
\newcommand{\liveZ}{\lambda}
\newcommand{\estZ}{\hat{Z}}
\newcommand{\accum}{\mymacro{A}}
\newcommand{\swsum}{\widehat{\Sigma}}
\newcommand{\cmark}{\textcolor{green!60!black}{\ding{51}}}
\newcommand{\xmark}{\textcolor{red!70!black}{\ding{55}}}
\providecommand{\significance}{p}
\providecommand{\dll}{\Delta\mathrm{LL}}

\renewcommand{\eth}{\textmd{\algQuotient}}
\newcommand{\ku}{\textmd{\algRemainder}}
\newcommand{\chifro}{\chi}

\author{%
\textbf{Vésteinn Snæbjarnarson}$^{\eth\ku}$~~~~
\textbf{Samuel Kiegeland}$^{\eth\chifro}$~~~~
\textbf{Manuel de Prada Corral}$^{\eth}$ \\
\textbf{Ryan Cotterell}$^{\eth}$~~~~
\textbf{Tim Vieira}$^{\eth}$
\\
$^\eth$ETH Z\"{u}rich\quad
$^\ku$University of Copenhagen \quad $^\chifro$CHI-FRO\\
\texttt{\{\href{mailto:vest.snae@gmail.com}{vest.snae},
\href{mailto:samuel.kiegeland@gmail.com}{samuel.kiegeland},
\href{mailto:manueldeprada@gmail.com}{manueldeprada},
\href{mailto:tim.f.vieira@gmail.com}{tim.f.vieira}\}@gmail.com}\\
\texttt{\href{mailto:ryan.cotterell@inf.ethz.ch}{ryan.cotterell}@inf.ethz.ch} \\
\vspace{-0.3cm}
}

\usepackage{etoc}
\let\realaddcontentsline\addcontentsline 
\begin{document}

\maketitle

\begin{abstract}
  Transduced language models (TLMs) compose a pretrained \emph{source} language model with a functional finite-state transducer to induce a language model over \emph{target} strings.
  Computing the probability of a target prefix under a TLM amounts to summing the source-model probabilities of all source strings that the transducer maps to target strings beginning with that prefix.
  This set can be exponentially large or infinite.
  Prior work uses a computational shortcut based on source prefix probabilities, then approximates the resulting sum with threshold-pruned beam summing.
  This produces a lower bound with unknown error.
  Instead, we resample source prefixes without replacement and reweight each selected prefix by the inverse of its inclusion probability.
  We show that applying this correction recursively gives an unbiased estimator of the target prefix probability and lets us estimate the mass lost by threshold pruning.
  Our beam-summing algorithm extends the retained source prefixes and samples which prefixes to keep, reducing their number as more probability mass is added to the running estimate.
  This can save computation and guarantees that the run halts with probability one.
  We evaluate the method on encyclopedic text and DNA against sequential Monte Carlo baselines that resample with replacement.
  It achieves a better compute--variance tradeoff on text and lower error at the same maximum number of particles on DNA.
  On a DNA-to-amino-acid transduction, it reduces runtime by several orders of magnitude relative to threshold-pruned beam summing and makes estimating prefix probabilities for long target strings feasible.
  Replacing threshold pruning with unbiased sampling in a published reading-time analysis substantially lowers the estimated corpus surprisal but leaves the published conclusions unchanged.
\end{abstract}

\etocdepthtag.toc{mainmatter}

\section{Introduction}
\label{sec:intro}
Language models define distributions over strings, while applications may require distributions induced by deterministic transformations of those strings.
\citet{2025transducing} call this the \emph{string mismatch problem}.
They address it by composing a pretrained source model with a functional finite-state transducer, inducing a language model over target strings without retraining, a \emph{transduced language model} (TLM).
This makes it possible, for example, to compute word-level surprisal for reading-time prediction \citep{kiegeland2025proper} or to obtain a model over amino-acid strings from one defined over DNA strings.
The TLM's next-symbol probabilities are obtained from its target prefix probabilities. Each target prefix probability amounts to summing the source-model probabilities of all source strings that the transducer maps to target strings with that prefix.
This set of source strings is the target prefix's \emph{precover}.\looseness=-1

\citet{2025transducing} derive a computational shortcut for enumerating and marginalizing over the precover.
They implement it as a beam-summing algorithm that enumerates source prefixes breadth-first.
When every extension of a prefix belongs to the precover, the algorithm adds the prefix's entire probability mass to the running estimate.
Otherwise, if the prefix itself belongs to the precover as a complete source string, it adds that string's probability.
To keep the calculation manageable, \citet{2025transducing} prune low-weight prefixes along the way.
With a sufficiently low threshold, beam summing can closely approximate the target prefix probability when most of the relevant mass is carried by a relatively small set of prefixes \citep{2025transducing,vieira2024languagemodelstokenslanguage}.\looseness=-1

To evaluate the impact of threshold pruning, the prior work compares normalized next-symbol distributions across thresholds, revealing changes in relative probabilities as the threshold varies.
This does not, however, quantify the total mass discarded by pruning.
The computation also becomes difficult when mass is spread across many similarly weighted prefixes.
Retaining most of the mass then requires tracking many prefixes, and a practical threshold can discard a non-negligible amount.
In such settings, the pruning error is unknown, and threshold-pruned beam summing can be too expensive for useful target lengths.\looseness=-1

This work replaces threshold pruning with resampling without replacement.
We sample distinct prefixes using weight-based inclusion probabilities and reweight each selected prefix by the inverse of its inclusion probability \citep{horvitz1952generalization,fearnhead2003online,Shah2018}.
This preserves each prefix's weight in expectation without changing the rest of the beam summing algorithm.
We also reduce the number of retained prefixes as more probability mass is added to the running estimate, saving computation.\looseness=-1

We show that this reweighting gives an unbiased estimator of the target prefix probability as long as the run halts with probability one (\cref{thm:swor-unbiased}).
We then show that adaptively reducing the number of retained prefixes preserves unbiasedness and that the resulting run halts with probability one (\cref{prop:sworb}).
Repeated runs estimate the sampling variation of the unbiased estimator, whereas deterministic threshold-pruned beam summing returns the same value every time.
At each target prefix, the difference between the mean of the unbiased estimates and the threshold-pruned value is an estimate of the mass lost through pruning.

The experiments show that the method makes estimating prefix probabilities along longer target strings feasible under a high-entropy source model and quantifies how pruning bias affects a downstream analysis.
On the DNA-to-amino-acid transduction, threshold-pruned beam summing exceeds a five-minute limit at target position $13$, whereas the sampling-based version reaches position $200$ in about five seconds (\cref{fig:dna-scaling}).
In psycholinguistics, \citet{kiegeland2025proper} use a TLM to derive word-level surprisal from a token-level language model for reading-time prediction.
Replacing threshold pruning with unbiased sampling changes the estimated corpus surprisal by $106$ nats but leaves the published conclusions unchanged (\cref{sec:psycholing}).
We compare against two unbiased sequential Monte Carlo (SMC) baselines that extend each retained prefix by one sampled live source symbol and use resampling with replacement.
On encyclopedic text, resampling without replacement achieves a better compute--variance tradeoff than the SMC baselines.
On DNA, it achieves lower error at the same maximum number of particles.
Comparisons with exact target prefix probabilities validate the estimator where those values are available (\cref{sec:experiments}).

\section{Background}
We review the background and notation for language models and transduced language models, largely following the definitions of \citet{2025transducing}.\footnote{\Cref{sec:glossary} collects all notation in one place.}\label{sec:background}

\subsection{Language Models}
\label{sec:lms}
Let $\srcAlphabet$ be a set of symbols called an \defn{alphabet}, and $\srcStrings$ be the set of all finite strings of those symbols. We write $\srcStrp\preceq\srcStr$, equivalently $\srcStr\succeq\srcStrp$, when $\srcStrp$ is a prefix of $\srcStr$. A \defn{cylinder} with basis $\srcStr$ is $\basisSet{\srcStr}\defeq\{ \srcStr\srcStrp \mid \srcStrp\in \srcStrings\}$. We define the \defn{prefix-free} operation $\pr(\srcStringsSub)$ as $\pr(\srcStringsSub) \defeq \{ \srcStr \in \srcStringsSub \colon \nexists\, \srcStr' \in \srcStringsSub \text{ such that } \srcStr' \prec \srcStr \}$.

A \defn{language model} $\pSource$ is a distribution over $\srcStrings$. With $\eos\notin\srcAlphabet$ a special \defn{end-of-sequence symbol}, we define the \defn{prefix probability} and the \defn{conditional prefix probabilities} as
\begin{equation}
  \label{eq:prefix-prob}
  \prefixSource(\srcStr)\defeq \sum_{\srcStrp\in\srcStrings}\pSource(\srcStr\srcStrp),
  \qquad
  \prefixSource(\srcStrp\mid \srcStr) \defeq \frac{\prefixSource(\srcStr\srcStrp)}{\prefixSource(\srcStr)},
  \qquad
  \prefixSource(\eos\mid \srcStr)\defeq \frac{\pSource(\srcStr)}{\prefixSource(\srcStr)}.
\end{equation}
When $\prefixSource(\srcStr)=0$, we take $\prefixSource(\srcStrp\mid \srcStr)=0$ and $\prefixSource(\eos\mid \srcStr)\defeq 1$.
An interface to the language model is then given by $\pSource(\srcStr) = \prefixSource(\eos\mid \srcStr) \prod_{t=1}^{|\srcStr|}\prefixSource(\srcStr_{t}\mid \srcStrLt)$.
\footnote{Because a language model places all its mass on finite strings, ancestral sampling from these conditionals reaches $\eos$ with probability one, and the mass beyond depth $n$, $\sum_{|\srcStr|=n}\prefixSource(\srcStr)$, vanishes as $n$ grows.
  In practice an autoregressive model supplies the conditionals directly, and they define a language model exactly when their ancestral sampler terminates with probability one. Such conditionals are called \defn{tight} \citep{cotterell2024formalaspectslanguagemodeling}.\looseness=-1}
When $\prefixSource(\srcStr)>0$, the conditionals also give the distribution over the \defn{completions} of a prefix, $\pSource(\srcStrpp\mid\srcStr)\defeq\pSource(\srcStr\srcStrpp)/\prefixSource(\srcStr)$ for $\srcStrpp\in\srcStrings$, written $\srcStrpp\sim\pSource(\cdot\mid\srcStr)$.

\subsection{Transduced Language Models}
\label{sec:tlms}

A \defn{transduced language model} is a language model $\pTarget$ characterized by the tuple $(\pSource, \transST)$ where $\transST$ is a transducer encoding a function $\transFn\colon \srcStrings\to \tgtStrings$. Let $\rvX$ be a random variable drawn from $\pSource$. For $\tgtStr\in\tgtStrings$, we then define
\begin{equation}
  \pTarget(\tgtStr) \defeq \Pr[\rvX\sim\pSource]{\tgtStr=\transFn(\rvX)} = \sum_{\srcStr\in\srcStrings}\idfunc\{\transFn(\srcStr)=\tgtStr\}\pSource(\srcStr),
\end{equation}
and its prefix probability
\begin{equation}
  \label{eq:tlm-prefix-prob}
  \prefixTarget(\tgtStr) \defeq \Pr[\rvX\sim\pSource]{\tgtStr\preceq\transFn(\rvX)} =  \sum_{\srcStr\in\srcStrings}\idfunc\{\tgtStr\preceq\transFn(\srcStr)\}\pSource(\srcStr).
\end{equation}
\citet{2025transducing} give a computational shortcut by defining the \defn{quotient} and \defn{remainder}. For $\tgtStr\in \tgtStrings$, we define the \defn{precover} of $\tgtStr$ with respect to $\transFn$ as $\preCover(\tgtStr) \defeq \{ \srcStr \in \srcStrings\colon \tgtStr \preceq \transFn(\srcStr) \}$. Let $
  \maxQuotient(\tgtStr) \defeq \left\{ \srcStr \in \srcStrings \colon \basisSet{\srcStr} \subseteq \preCover(\tgtStr) \right\}$, then $\maxQuotient(\tgtStr)$ is the largest union of cylinders $\basisSet{\srcStr}$ (\cref{sec:lms}) contained in $\preCover(\tgtStr)$. The quotient and remainder are  $\Quotient(\tgtStr) \defeq \pr(\maxQuotient(\tgtStr))$ and $\Remainder(\tgtStr) \defeq \preCover(\tgtStr) \setminus \maxQuotient(\tgtStr)$.
The computational shortcut is the \defn{quotient--remainder decomposition}:
\begin{equation}
  \label{eq:simple-qr}
  \prefixTarget(\tgtStr) = \sum_{\srcStr \in \Remainder(\tgtStr)} \pSource(\srcStr) \;+\!\! \sum_{\srcStr \in \Quotient(\tgtStr)} \prefixSource(\srcStr).
\end{equation}
\citet{2025transducing} show that the decomposition counts every source string exactly once, since the precover splits as the disjoint union $\preCover(\tgtStr)=\bigl(\bigcup_{\srcStr\in\Quotient(\tgtStr)}\basisSet{\srcStr}\bigr)\sqcup\Remainder(\tgtStr)$.
Their algorithms use the predicates below to determine whether a given prefix belongs to the quotient or the remainder.\looseness=-1

\paragraph{Precover predicates.}
The three predicates classify a source prefix $\srcStr$ relative to $\preCover(\tgtStr)$:
\begin{description}
  \item \defn{Live}, $\isLive(\srcStr,\tgtStr)$: some extension covers $\tgtStr$, that is $\exists\,\srcStrp\in\srcStrings\colon\srcStr\srcStrp\in\preCover(\tgtStr)$.
  \item \defn{Member}, $\isMember(\srcStr,\tgtStr)$: $\srcStr\in\preCover(\tgtStr)$, the output emitted so far covers $\tgtStr$.
  \item \defn{Cylinder}, $\isCylinder(\srcStr,\tgtStr)$: $\srcStr\srcStrp\in\preCover(\tgtStr)$ for every $\srcStrp\in\srcStrings$, every extension covers $\tgtStr$.
\end{description}
These predicates locate the decomposition members \cref{eq:simple-qr}: the first cylinder prefix on a path generates a cylinder of $\maxQuotient(\tgtStr)$ and so belongs to the quotient $\Quotient(\tgtStr)$, a member that is not a cylinder is a remainder element of $\Remainder(\tgtStr)$, and a non-live prefix is dropped.
Finite-state transducers make these semantic tests computable.
When $\transFn$ is encoded by a finite-state transducer, the Live, Member, and Cylinder predicates can be evaluated in finite time.
Following \citet{2025transducing}, the implementation evaluates them through lazy determinization, expanding only the states reached as it enumerates source prefixes.
The algorithms otherwise require only these three predicates and apply to any representation that can compute them.

\section{Unbiased Estimation of Transduced Language Models}
\label{sec:swor}

\begin{figure}[t]
  \centering
  \smaller
  \begin{minipage}[t]{0.48\linewidth}
    \begin{minted}[autogobble,xleftmargin=10pt]{python}
def @$\algBeam(\tgtStr, \twist, \pruneFunc)$@:
  @$N \gets |\tgtStr|$@
  if @$N = 0$@:
    @$(\algQuotientp, \algRemainderp), \estZ \gets (\{(\varepsilon, 1)\},\ \emptyset),\ [\,]$@
  else:
    @$(\algQuotientp, \algRemainderp), \estZ  \gets \algBeam(\tgtStr_{<N}, \twist, \pruneFunc)$@
  @$\queue \gets \textsc{Queue}(\algQuotientp)$@
  @$(\algQuotient, \algRemainder), \accum \gets (\emptyset, \emptyset),\ 0$@
  for @$(\srcStr, \weight) \in \algRemainderp$@:
    if @$\isMember(\srcStr, \tgtStr)$@: @\label{line:targetstep}@
      @$\algRemainder$@.add(@$(\srcStr, \weight)$@)
      @$\accum \texttt{ += } \weight$@
  while @$|\queue| > 0$@:
    @$\queuep \gets \emptyset$@
    for @$\srcStr,\weight \in \queue$@: @\label{line:process}@
      if @$\isCylinder(\srcStr, \tgtStr)$@: @\label{line:cylindercheck}@
        @$\algQuotient$@.add(@$(\srcStr, \weight)$@)
        @$\accum \texttt{ += } \weight$@
        continue @\label{line:continueabs}@
      if @$\isMember(\srcStr, \tgtStr)$@: @\label{line:membercheck}@
        @$\algRemainder$@.add(@$(\srcStr, \weight\cdot \prefixSource(\eos \mid \srcStr))$@)
        @$\accum \texttt{ += } \weight\cdot \prefixSource(\eos \mid \srcStr)$@
      for @$\srcSymp \in \srcAlphabet$@ with @$\twist(\srcStr\srcSymp) > 0$@: @\label{line:enumAlphabet}@
        @$\queuep$@.add(@$(\srcStr \srcSymp,\ \weight \cdot \prefixSource(\srcSymp \mid \srcStr))$@) @\label{line:livecheck}@
    @$\queue \gets \pruneFunc(\queuep, \accum)$@
  @$\estZ$@.append(@$\accum$@)
  return @$(\algQuotient, \algRemainder), \estZ$@
\end{minted}
  \end{minipage}
  \hfill
  \begin{minipage}[t]{0.48\linewidth}
    \begin{minted}[autogobble,xleftmargin=10pt]{python}
def @$\algTopM(\queuep)$@:
  # Standard beam search
  @$((\srcStr^{(i)},\weight^{(i)}))_{i=1}^{N} \gets$@ sort_descending(@$\queuep$@, key=@$\weight$@)
  return @$\{(\srcStr^{(i)},\weight^{(i)})$@ for @$i = 1, \ldots, \min(M, N)\}$@
\end{minted}

    \begin{minted}[autogobble,xleftmargin=10pt]{python}
def @$\pruneTau(\queuep)$@:
  # Keep heaviest particles holding
  # @$\ge 1-\pruneThreshold$@ of the mass
  if @$|\queuep| = 0$@: return @$\emptyset$@
  @$((\srcStr^{(i)},\weight^{(i)}))_{i=1}^{N} \gets$@ sort_descending(@$\queuep$@, key=@$\weight$@)
  @$W_i \gets \textstyle\sum_{j \le i} \weight^{(j)}$@ for @$i = 1, \ldots, N$@
  @$k \gets$@ smallest @$i$@ with @$W_i \ge (1 - \pruneThreshold)\, W_N$@
  return @$\{(\srcStr^{(i)},\weight^{(i)})$@ for @$i = 1, \ldots, k\}$@
\end{minted}
  \end{minipage}
  \caption{%
    \textbf{Beam summing} for transducing language models (\cref{sec:two-algorithms}), the prior work's enumeration \citep{2025transducing} made to return the estimate, with the deterministic prunes it runs. \algBeam{} hands each \pruneFunc{} call the pool and the current-position running total $\accum$. A prune takes only the arguments it needs.
    The twist gates the extension step (\cref{line:enumAlphabet}) and can weight a prune's draw. $\twist=\isLive$ keeps exactly the particles such that $\isLive(\srcStr,\tgtStr)$ holds. \algBeam{} with $\algTopM$, keeping the $M$ heaviest particles, is standard beam search, and $\pruneTau$ keeps the heaviest particles holding at least $1-\tau$ of the pool's mass, for some fixed $M$ and $\tau$.
  }
  \label{fig:algos}
\end{figure}

The quotient--remainder decomposition (\cref{eq:simple-qr}) writes the prefix probability $\prefixTarget(\tgtStr)$ of a transduced language model (\cref{sec:background}) as a sum of source-model mass over the precover $\preCover(\tgtStr)$.
That sum is often far too large to enumerate, so the beam summing of \citet{2025transducing} applies $\pruneTau$, discarding mass to keep the calculation manageable.
We therefore replace deterministic pruning with resampling without replacement among the source prefixes produced at each extension step, using \emph{inclusion-probability} reweighting to preserve their weights in expectation (\cref{sec:swor-estimator}).
We write $\evZ\defeq\prefixTarget(\tgtStr)$ for this estimand and seek an estimate $\estZ$ that meets three criteria:
\begin{enumerate}[leftmargin=1.0cm]
  \item[(i)] it is unbiased, i.e., $\expected{}{\estZ}=\evZ$
  \item[(ii)] it has low variance
  \item [(iii)] it comes at a reasonable and configurable computational cost
\end{enumerate}
As a consequence of (iii), we require that the algorithms halt \emph{almost surely} (with probability 1).
This lets us estimate prefix probabilities whose quotient--remainder decomposition (\cref{sec:tlms}) is infinite, something the prior work cannot guarantee.\looseness=-1

We build this estimator starting with the prior work's algorithm, which computes the decomposition in \cref{sec:tlms}, where $\pruneTau{}$ is its only lossy step (\cref{sec:two-algorithms}).
We then replace $\pruneTau{}$ with \defn{sampling without replacement} (\defn{SWOR}) (\cref{sec:swor-estimator}), which samples which distinct source prefixes survive rather than discarding them deterministically.
The algorithm adds a prefix's quotient contribution when the \isCylinder{} predicate holds and, otherwise, its remainder contribution when the \isMember{} predicate holds.
Adding the remainder contribution directly rather than sampling the stop decision is a variance-reducing Rao--Blackwellization.
We extend the SWOR-based pruning function to use the running total to reduce the number of retained source prefixes as mass is accumulated, saving compute and ensuring almost-sure halting (\cref{sec:sworb}).\looseness=-1

\subsection{Beam Summing over the Precover Decomposition}
\label{sec:two-algorithms}
The algorithm in \cref{fig:algos} is the beam summing approach of the prior work made to return the estimand directly, here named \algBeam{} \citep{2025transducing}.
The recursion computes the quotient--remainder decomposition (\cref{eq:simple-qr}), one call per target prefix.
It adds cylinder elements to $\algQuotient$ (\defn{quotient contributions}) and member stop branches to $\algRemainder$ (\defn{remainder contributions}).
We call the queue's entries \defn{particles}, each a source prefix $\srcStr$ carrying a nonnegative \defn{weight} $\weight$, and the queue itself the \defn{pool}.
The queues and the collections $\algQuotient$ and $\algRemainder$ are multisets.
With no pruning, a particle's weight is exactly its source prefix mass $\prefixSource(\srcStr)$ (\cref{line:livecheck}).
Within each call, \algBeam{} enumerates source prefixes breadth-first.
The practical implementation of \citet{2025transducing} uses $\pruneTau$ to keep the enumeration manageable by deterministically keeping at least $1-\pruneThreshold$ of the mass, for some threshold $0<\tau<1$, potentially biasing the estimate by discarding positive covering mass (\cref{tab:tlm-zoo}, \cref{fig:algos}).\looseness=-1

Each nonempty call to \algBeam{} first recurses on the target prefix one position shorter, then initializes its pool from the returned quotient entries and passes the returned remainder entries through a membership re-check (\cref{line:targetstep}).
Within a call, each iteration processes every particle in the pool, forms child particles by extending non-cylinder prefixes by one source symbol, and updates the running total $\accum$ using the predicate checks (\cref{line:cylindercheck,line:membercheck}).
It is passed to \pruneFunc{} after each iteration, and when the pool empties, the call appends $\accum$ to the returned list $\estZ$ (\cref{eq:pos-estimate}).
\citet{2025transducing} enqueue only source prefixes that pass the Live predicate.
\algBeam{} generalizes this restriction through a \defn{twist} $\twist$, a nonnegative weighting on source prefixes.\footnote{The name follows the twisted particle filters of \citet{whiteley2014twisted}, a look-ahead factor multiplying, \emph{twisting}, the weights a filter tracks (\citealp{heng2020controlled,zhao2024smc}).}
At every position $t$, a twist must be positive wherever the optimal twist $\twistOptimal_t$ defined below is positive, so it zeroes only children with no covering mass ahead.
The extension step only creates child particles with $\twist(\srcStr\srcSymp)>0$ (\cref{line:enumAlphabet}).
The stochastic prunes introduced below use the twisted weights $\weight\,\twist(\srcStr)$ to choose survivors (\cref{sec:swor-estimator}).
Setting $\twist=\isLive$ and using $\pruneTau$ as the pruning function recovers the algorithm of \citet{2025transducing}.

At position $t$, let $\algQuotient_t$ and $\algRemainder_t$ be the quotient and remainder entries collected by the call for $\tgtStr_{\le t}$.
Grouping these entries by source prefix and dividing the remainder entries by their stop factor defines
\begin{equation}
  \label{eq:V-def}
  V_t(\srcStr)\defeq\smashoperator{\sum_{(\srcStrp,\,\weight)\,\in\,\algQuotient_t}}\weight\,\idfunc\{\srcStrp=\srcStr\}\;\;+\sum_{(\srcStrp,\,\weight)\,\in\,\algRemainder_t}\frac{\weight}{\prefixSource(\eos\mid\srcStr)}\,\idfunc\{\srcStrp=\srcStr\},
\end{equation}
the per-prefix estimate of $\prefixSource(\srcStr)$ produced by the call for $\tgtStr_{\le t}$.
A remainder summand is defined to be zero when $\prefixSource(\eos\mid\srcStr)=0$, in which case its contribution to $\accum$ is also zero.
Since the quotient and remainder entry weights are the only terms added to $\accum$, grouping them by source prefix gives
\begin{equation}
  \label{eq:pos-estimate}
  \estZ_t \defeq \smashoperator{\sum_{\srcStr\in\Remainder(\tgtStr_{\le t})}}V_t(\srcStr)\,\prefixSource(\eos\mid\srcStr)\;\;+\smashoperator{\sum_{\srcStr\in\Quotient(\tgtStr_{\le t})}}V_t(\srcStr),
\end{equation}
the quotient--remainder decomposition \cref{eq:simple-qr} with each exact prefix mass replaced by $V_t$ (\cref{eq:prefix-prob}).

\begin{table}[t]
  \centering
  \caption{
  \textbf{The TLM algorithm family}. \algBeam{} (\cref{fig:algos}) extends live particles to every live child before applying pruning.
  \algSWOR{} and \sworB{} (\cref{fig:swor}) extend \algBeam{} with pruning functions that sample distinct survivors without replacement.
  In contrast, \algSMC{} and \algSMCrb{} sample one child for each particle.
  \algSMC{} also samples the stop decision.
  \algSMCrb{} instead adds quotient and remainder contributions directly.
  The prune-free row is exact whenever the run halts.
  With $\algTopM$, keeping at most the $M$ heaviest particles, \algBeam{} is standard beam search (\cref{fig:algos}), and with $\pruneTau$ it is the beam summing of \citet{2025transducing}.
  The qualitative variance labels summarize the full-scale measurements of \cref{tab:variance-numbers}.\looseness=-1
    \looseness=-1
  }
  \label{tab:tlm-zoo}
  \small\setlength{\tabcolsep}{4pt}
  \begin{tabular}{@{}llcll@{}}
    \toprule
    Algorithm                              & Prune                             & (i) Unbiased & (ii) Variance (qual.) & (iii) Particle count  \\
    \midrule
    \algBeam{} (\cref{fig:algos})          & none                              & \cmark       & $0$                   & unbounded     \\
    \quad$\hookrightarrow$ $+$ $\algTopM$  & keep $M$ heaviest                 & \xmark       & $0$                   & fixed $M$     \\
    \quad$\hookrightarrow$ $+$ $\pruneTau$ & keep $\ge 1-\pruneThreshold$ mass & \xmark       & $0$                   & mass-set      \\
    \algSMC{} (\cref{app:smc})             & multinomial                       & \cmark       & high                  & fixed $M$     \\
    \algSMCrb{} (\cref{app:smc})           & multinomial                       & \cmark       & lower                 & fixed $M$     \\
    \algSWOR{}                              & SWOR                              & \cmark       & lowest                & fixed $M$     \\
    \sworB{}                                & SWOR, adaptive           & \cmark       & lowest                & adaptive, $m\le M$ \\
    \bottomrule
  \end{tabular}
\end{table}

\paragraph{What beam summing computes.}
Fix the target $\tgtStr$ and define the \defn{optimal twist} of a source prefix as the probability that a completion (\cref{sec:lms}) covers the target,
\begin{equation}
  \label{eq:optimal-twist}
  \twistOptimal(\srcStr)\defeq \Pr[\rvX\sim\pSource]{\tgtStr\preceq\transFn(\rvX) \mid \srcStr\preceq\rvX}
  = \expected{\srcStrpp\sim\pSource(\cdot\mid\srcStr)}{\indicator{\isMember(\srcStr\,\srcStrpp,\tgtStr)}}.
\end{equation}
It is optimal because weighting the source conditionals by it yields the source model conditioned on covering $\tgtStr$.
Under the corresponding importance weighting, every sampled path receives the same weight, so the target-prefix probability estimate has zero variance.
At $\prefixSource(\srcStr)=0$ we set $\twistOptimal(\srcStr)\defeq\indicator{\isMember(\srcStr,\tgtStr)}$, matching \cref{eq:tail-acceptance} under the conventions of \cref{sec:lms}.
The optimal twist returns the share of a prefix's mass that can still cover the target.\looseness=-1

The next lemma gives the backward recursion for the optimal twist and its subtree-mass identity.
The recursion mirrors prune-free \algBeam{}.
Its cylinder and stop terms are exactly the contributions accumulated by the algorithm, and unfolding the recursion from the empty prefix gives the quotient--remainder decomposition.
Write $\liveset{\srcStr}\defeq \{\srcSymp\in\srcAlphabet:\isLive(\srcStr\srcSymp,\tgtStr)\}$ for the \defn{live set} of next source symbols.
\begin{restatable}[A backward recursion for the optimal twist]{lemma}{lemTailEvidence}
  \label{lem:tail-evidence}
  The optimal twist $\twistOptimal$ of \cref{eq:optimal-twist} (relative to $\tgtStr$) satisfies the backward recursion
  \begin{equation}
    \label{eq:tail-acceptance}
    \twistOptimal(\srcStr)=
    \begin{cases}
      1                                                                      & \textbf{if } \isCylinder(\srcStr,\tgtStr) \\
      \indicator{\isMember(\srcStr,\tgtStr)}\,\prefixSource(\eos\mid\srcStr) & \textbf{otherwise}                        \\ \qquad +\displaystyle\smashoperator{\sum_{\srcSymp\in\liveset{\srcStr}}}\prefixSource(\srcSymp\mid\srcStr)\,\twistOptimal(\srcStr\srcSymp) &
    \end{cases}
  \end{equation}
  and every source prefix $\srcStr$ obeys
  \begin{equation}
    \label{eq:tail-subtree}
    \prefixSource(\srcStr)\,\twistOptimal(\srcStr)
    =\!\!\sum_{\substack{\srcStrp\in\preCover(\tgtStr)\\ \srcStr\preceq\srcStrp}}\!\!
    \pSource(\srcStrp),
  \end{equation}
  the source mass of the completions of $\srcStr$ that cover $\tgtStr$.
  In particular,
  $\twistOptimal(\varepsilon)=\prefixTarget(\tgtStr)$.
\end{restatable}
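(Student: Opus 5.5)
I would prove the subtree-mass identity \cref{eq:tail-subtree} first and derive the recursion \cref{eq:tail-acceptance} from it, since the identity is essentially the definition unwound. For $\prefixSource(\srcStr)>0$, the definition \cref{eq:optimal-twist} is a conditional probability, so $\prefixSource(\srcStr)\twistOptimal(\srcStr)=\Pr[\rvX\sim\pSource]{\tgtStr\preceq\transFn(\rvX),\ \srcStr\preceq\rvX}=\sum_{\srcStrp\succeq\srcStr}\idfunc\{\srcStrp\in\preCover(\tgtStr)\}\pSource(\srcStrp)$. This uses $\prefixSource(\srcStr)=\Pr{\srcStr\preceq\rvX}$ from \cref{eq:prefix-prob}. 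For $\prefixSource(\srcStr)=0$ both sides vanish, because every extension of $\srcStr$ has zero mass, whatever value the convention assigns to $\twistOptimal(\srcStr)$. Taking $\srcStr=\varepsilon$ gives $\prefixSource(\varepsilon)=1$, and the right-hand side becomes \cref{eq:tlm-prefix-prob}, so $\twistOptimal(\varepsilon)=\prefixTarget(\tgtStr)$.

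For the recursion, I would split on the cylinder predicate.

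\textbf{Cylinder case.} If $\isCylinder(\srcStr,\tgtStr)$ holds, every extension lies in $\preCover(\tgtStr)$. The right-hand side of \cref{eq:tail-subtree} is then the full sum $\sum_{\srcStrpp}\pSource(\srcStr\srcStrpp)=\prefixSource(\srcStr)$, so $\twistOptimal(\srcStr)=1$ when $\prefixSource(\srcStr)>0$. When $\prefixSource(\srcStr)=0$, the convention gives $\indicator{\isMember(\srcStr,\tgtStr)}=1$ because $\srcStr$ is its own extension.

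\textbf{Otherwise.} I would partition the extensions $\srcStrp\succeq\srcStr$ into $\srcStrp=\srcStr$ and the disjoint cylinders $\basisSet{\srcStr\srcSymp}$ for $\srcSymp\in\srcAlphabet$. This gives
\[
\prefixSource(\srcStr)\twistOptimal(\srcStr)=\indicator{\isMember(\srcStr,\tgtStr)}\pSource(\srcStr)+\sum_{\srcSymp\in\srcAlphabet}\prefixSource(\srcStr\srcSymp)\twistOptimal(\srcStr\srcSymp),
\]
where the identity \cref{eq:tail-subtree} has been applied to each child. Terms with $\srcSymp\notin\liveset{\srcStr}$ drop out, because a non-live child has no extension in the precover and so its subtree mass is zero. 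Dividing by $\prefixSource(\srcStr)>0$ and using $\pSource(\srcStr)=\prefixSource(\eos\mid\srcStr)\prefixSource(\srcStr)$ and $\prefixSource(\srcStr\srcSymp)=\prefixSource(\srcSymp\mid\srcStr)\prefixSource(\srcStr)$ gives the second case.

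The main obstacle is the degenerate case $\prefixSource(\srcStr)=0$ outside the cylinder case. There the conventions give $\prefixSource(\eos\mid\srcStr)=1$ and $\prefixSource(\srcSymp\mid\srcStr)=0$, so the right-hand side of the recursion equals $\indicator{\isMember(\srcStr,\tgtStr)}$. This matches the convention $\twistOptimal(\srcStr)=\indicator{\isMember(\srcStr,\tgtStr)}$ exactly, and I would check it explicitly. The other point needing care is exchanging the countable sums, which is justified because all terms are nonnegative (Tonelli). Since the recursion is derived directly from the identity rather than by induction on depth, no well-foundedness argument is needed even though source strings are unbounded.
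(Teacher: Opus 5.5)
Your proof is correct and takes essentially the same approach as the paper. Both split the completions of $\srcStr$ into the immediate stop and the subtrees of the children $\srcStr\srcSymp$, drop the non-live children, and settle $\prefixSource(\srcStr)=0$ through the stated conventions. The only difference is the order: the paper obtains the recursion by conditioning on the first completion symbol and proves \cref{eq:tail-subtree} separately from the definition, whereas you prove the subtree identity first and divide its partitioned form by $\prefixSource(\srcStr)$, which also avoids conditioning on zero-probability children.
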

\begin{proof}
    The proof is deferred to \cref{sec:deferred-swor-proofs}.
\end{proof}

\begin{figure}[t]
  \centering
  \begin{minipage}[t]{0.49\linewidth}
    \begin{minted}[autogobble,xleftmargin=10pt]{python}
def @$\swor_{M,\twist}(\queuep)$@:
  @$\mathcal{W}\gets \{\weight^{(i)} \cdot \twist(\srcStr^{(i)})\colon (\srcStr^{(i)},\weight^{(i)}) \in \queuep \}$@
  @$\mathcal I_+ \gets \{i : \mathcal W^{(i)} > 0\}$@
  if @$|\mathcal I_+| \le M$@:
    return @$\{(\srcStr^{(i)},\weight^{(i)}) : i \in \mathcal I_+\}$@
  @$\pi \gets \algPPS(\mathcal{W}, M)$@
  @$\mathcal{S} \sim \algFixedWOR(\pi,\ M)$@
  return @$\{(\srcStr^{(i)}, \frac{\weight^{(i)}}{\pi_i}) : i \in \mathcal{S}\}$@
\end{minted}
    \begin{minted}[autogobble,xleftmargin=10pt]{python}
def @$\algPPS((\weight^{(i)})_{i=1}^{N},\ M)$@:
  @$\cert \gets \emptyset$@
  @$\pi \gets \mathbf{0}$@
  while True:
    @$\pi_i \gets \frac{(M - |\cert|)\,\weight^{(i)}}{\textstyle\sum_{j \notin \cert} \weight^{(j)}}$@  @$(i \notin \cert)$@ @\label{line:ppsassign}@
    @$\cert^{\prime} \gets \{i \notin \cert : \pi_i \ge 1\}$@ @\label{line:ppssat}@
    if @$\cert^{\prime} = \emptyset$@: break @\label{line:ppshalt}@
    @$\pi_{\cert^{\prime}} \gets 1$@
    @$\cert \gets \cert \cup \cert^{\prime}$@ @\label{line:ppspin}@
  return @$\pi$@
\end{minted}
    \begin{minted}[autogobble,xleftmargin=10pt]{python}
def @$\algFixedWOR(\pi,\ M)$@:
  @$\varphi \sim$@ uniform perm. of @$\{1, \ldots, N\}$@
  @$C_i \gets \textstyle\sum_{j \le i} \pi_{\varphi(j)}$@  @$(i = 0 \ldots N)$@
  @$u \sim \mathrm{Uniform}(0, 1)$@
  return @$\{\varphi(i) : \exists\, k < M,\ C_{i-1} \le u + k < C_i\}$@
\end{minted}

  \end{minipage}\hfill
  \begin{minipage}[t]{0.49\linewidth}

    \begin{minted}{python}
def @$\sworHalt_{M,\twist,\rho}(\queuep, \accum)$@:
  @$\mathcal{W}^{(i)} \gets \weight^{(i)} \cdot \twist(\srcStr^{(i)})$@
  @$W \gets \textstyle\sum_i \mathcal{W}^{(i)}$@
  @$\mathcal I_+ \gets \{i : \mathcal W^{(i)} > 0\}$@
  if @$W = 0$@:
    return @$\emptyset$@
  @$\mu \gets \textstyle\sum_i \min\!\big(1,\frac{\mathcal{W}^{(i)} M}{\rho\,(\accum + W)}\big)$@ @\label{line:sworbcounts}@
  @$m \gets \min(M,\ \lfloor\mu+\tfrac{1}{2}\rfloor)$@ @\label{line:sworbbudget}@
  if @$m = 0$@:
    @$B \sim \mathrm{Bernoulli}(\mu)$@ @\label{line:roulette}@
    if @$B = 0$@: return @$\emptyset$@
    @$m \gets 1$@,  @$\weight^{(i)} \gets \weight^{(i)}/\mu$@  @$(\forall i)$@
  if @$|\mathcal I_+| \le m$@:
    return @$\{(\srcStr^{(i)},\weight^{(i)}) : i \in \mathcal I_+\}$@
  @$\pi \gets \algPPS(\mathcal{W}, m)$@
  @$\mathcal{S} \sim \algFixedWOR(\pi,\ m)$@
  return @$\{(\srcStr^{(i)}, \frac{\weight^{(i)}}{\pi_i}) : i \in \mathcal{S}\}$@
\end{minted}
    \begin{minted}{python}
def @$\algSWOR(\tgtStr, M, \twist)$@:
  return @$\algBeam(\tgtStr, \twist, \swor_{M,\twist})$@
  
def @$\sworB(\tgtStr, M, \twist)$@:
  return
    @$\algBeam(\tgtStr, \twist, \sworHalt_{M,\twist,\rho})$@
\end{minted}
  \end{minipage}
  \caption{
    \textbf{SWOR.} The without-replacement resample \swor replaces the \prune in \cref{fig:algos}, and \sworHalt{} uses the mass accumulated so far to set the number of particles retained at each prune.
    To ensure halting even when $\accum=0$, this calculation uses the effective value $\accum_{\epsilon,t}$ of \cref{eq:effective-accum}.
    \Cref{sec:sworb} gives the full construction and its almost-sure halting guarantee.
    \algSWOR{} and \sworB{} are \algBeam{} run with these prunes.
  }
  \label{fig:swor}
\end{figure}

\subsection{Stochastic Pruning}
\label{sec:swor-estimator}
Running \algBeam{} with either of the prune functions in \cref{fig:algos} can give a biased estimate.
To make it unbiased, we introduce pruning functions that draw the survivors without replacement (SWOR) and correct for unequal selection using \defn{Horvitz--Thompson reweighting} \citep{horvitz1952generalization}.
This preserves the pool's sums in expectation.

SWOR retains every positive-weight particle when there are at most $M$ of them and otherwise replaces the pool by a size-$M$ subset $\mathcal S$, reweighting survivors by their inclusion probabilities \citep{fearnhead2003online}.
In \algBeam{}, particle $i$ in the pool passed to the prune is a child particle $\srcStr^{(m)}\srcSymp$ of its parent $\srcStr^{(m)}$, with weight $\weight^{(i)}=\weight^{(m)}\,\prefixSource(\srcSymp\mid\srcStr^{(m)})$ (\cref{fig:algos}).
For a function $h^{(i)}\defeq h(\srcStr^{(i)})$, SWOR estimates $\sum_{i=1}^{N}\weight^{(i)} h^{(i)}$ by
\begin{equation}
  \label{eq:ht}
  \swsum_h\;\defeq\;\sum_{i\in\mathcal S}\frac{\weight^{(i)}}{\pi_i}\,h^{(i)} .
\end{equation}
The estimate is unbiased for any valid inclusion probabilities, those with $\pi_i=0$ only where $\weight^{(i)}h^{(i)}=0$ \citep{horvitz1952generalization}.
With $I_i\defeq\indicator{i\in\mathcal S}$ and $\expected{\mathcal S}{I_i}=\pi_i$,
\begin{subequations}
  \begin{align}
    \expected{\mathcal S}{\swsum_h}  =\sum_{i=1}^{N}\expected{\mathcal S}{I_i}\,\frac{\weight^{(i)}}{\pi_i}\,h^{(i)}  =\sum_{i=1}^{N}\weight^{(i)}\,h^{(i)}.
  \end{align}
\end{subequations}

\paragraph{Choosing the inclusion probabilities.}
First consider the untwisted case, $\twist\equiv1$.
Write $\mathcal{I}_+$ for the set of positive-weight child particles.
When $|\mathcal{I}_+|\le M$ the prune keeps all of them unchanged.
Otherwise, any valid probabilities give an unbiased estimate, and we set each child particle's inclusion probability proportional to its weight, capped at one because a probability cannot exceed one,
\begin{equation}
  \label{eq:pps-marginals}
  \pi_i=\min\bigl(1,\,c\,\weight^{(i)}\bigr),\quad i=1,\ldots,N,
  \qquad
  \sum_{i=1}^{N}\min\bigl(1,\,c\,\weight^{(i)}\bigr)=M,
\end{equation}
with the constant $c>0$ fixed by the second equation.
The capped child particles form the \defn{certainty set} and survive with their weights unchanged \citep[\S3.2.5]{Shah2018}, and every non-certain survivor carries the same reweighted value $\weight^{(i)}/\pi_i$.
This means that the reweighted total matches the pool total on every draw, the randomness deciding which prefixes carry the mass forward.

The pruning function \swor{} (\cref{fig:swor}) combines these pieces.
If the pool has more than $M$ positive twisted weights, it hands them to \algPPS{} (probabilities proportional to size) for the inclusion probabilities $\pi_i\propto\weight^{(i)}\,\twist(\srcStr^{(i)})$ and their certainty set, and draws the size-$M$ subset with \algFixedWOR{} (\cref{fig:swor}).
Any fixed-size draw with those inclusion probabilities qualifies, and \algFixedWOR{}  realizes them exactly in $\bigO{N}$ \citep{madow1949theory}.
Each survivor gets the Horvitz--Thompson weight $\weight^{(i)}/\pi_i$ of \cref{eq:ht}.\looseness=-1

The recursion in \algPPS{} (\cref{fig:swor}) finds the certainty sets, pins child particles whose inclusion probabilities reach one, and recurses on the others with the remaining sample size.
Zero-weight child particles keep $\pi_i=0$ and are never drawn.
\algBeam{} with \swor{} as its pruning function is the particle filter of \citet{fearnhead2003online}, instantiated for the source-prefix enumeration over the precover.
We write \algSWOR{} for \algBeam{} run with the \swor{} prune (\cref{fig:swor}).

The second pruning function of \cref{fig:swor}, \sworHalt{}, uses the current-position running total to set the next live-particle count and save compute.
We introduce the resulting \sworB{} algorithm and its almost-sure halting guarantee in \cref{sec:sworb}, once the SWOR guarantees are in place (\cref{sec:swor-correctness}).\looseness=-1

\subsection{Unbiasedness Guarantees}
\label{sec:swor-correctness}
We now give sufficient conditions for when \algBeam{} (\cref{fig:algos}), combined with a \prune{} function, gives an unbiased estimate.
Without pruning, each particle carries its exact prefix mass (\cref{sec:two-algorithms}, \citealp{2025transducing}).
The lemma below shows that pruning preserves these pool weights in expectation.
The theorem that follows uses this result to show that, if the run halts, its accumulated quotient and remainder contributions sum to an unbiased estimate.

For a pool $\queue$, write $\weight_{\queue}(\srcStr)\defeq\sum_{(\srcStrp,\weight)\in\queue}\weight\,\idfunc\{\srcStrp=\srcStr\}$ for its weight at a source prefix $\srcStr\in\srcStrings$.
For every fixed choice of the inputs to the prune, let $\widetilde{\queue}\sim\pruneFunc(\queuep,\cdots)$ denote its random output.
We require
\begin{equation}
  \label{eq:prune-preserve}
  \expected{\widetilde{\queue}\sim\pruneFunc(\queuep,\cdots)}
  {\weight_{\widetilde{\queue}}(\srcStr)}
  =\weight_{\queuep}(\srcStr),
  \qquad\text{for every }\srcStr\in\srcStrings.
\end{equation}
The expectation in \cref{eq:prune-preserve} is only over the prune's internal randomness, with all its inputs, including $\queuep$ and $\accum$, held fixed.

For a run of \algBeam{} on a target string $\tgtStr$, consider the call for $\tgtStr$, at the top of the recursion.
Its starting pool $\queue_1$ holds the re-queued quotient entries of the target one symbol shorter, with total weight one at $\varepsilon$ when $\tgtStr=\varepsilon$.
For $i\ge1$, the extension step builds $\queue'_i$ from $\queue_i$, and the prune returns $\queue_{i+1}=\pruneFunc(\queue'_i,\cdots)$.
Write $\queueBar_i$ for the formal pools generated by the corresponding prune-free recursion, defined through their weight at each source prefix even if the complete reference enumeration does not halt.

\begin{restatable}[Pruning preserves the unpruned pool weights in expectation]{lemma}{lemPrunedProper}
  \label{lem:pruned-proper}
  Fix a target string $\tgtStr$ and run \algBeam{} (\cref{fig:algos}), with some prune function satisfying \cref{eq:prune-preserve}.
  Then, at every iteration $i$ and every source prefix $\srcStr$,
  \begin{equation}
    \expected{}{\weight_{\queue_i}(\srcStr)}
    =\weight_{\queueBar_i}(\srcStr).
  \end{equation}
\end{restatable}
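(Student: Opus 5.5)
We argue by induction on the iteration index $i$, using the fact that every step of \algBeam{} other than the prune acts \emph{linearly} on pool weights, with coefficients that depend only on the source prefix and not on the weights. The key object is the extension map $E$ that sends a pool $\queue_i$ to $\queue'_i$ (\cref{line:process,line:cylindercheck,line:membercheck,line:enumAlphabet,line:livecheck}). A particle at $\srcStr$ contributes to $\queue'_i$ only if $\srcStr$ is not a cylinder. It then contributes weight $\weight\,\prefixSource(\srcSymp\mid\srcStr)$ at $\srcStr\srcSymp$ for every $\srcSymp$ with $\twist(\srcStr\srcSymp)>0$. Hence
\begin{equation*}
  \weight_{\queue'_i}(\srcStr\srcSymp)=\indicator{\neg\isCylinder(\srcStr,\tgtStr)}\,\idfunc\{\twist(\srcStr\srcSymp)>0\}\,\prefixSource(\srcSymp\mid\srcStr)\,\weight_{\queue_i}(\srcStr),
\end{equation*}
and $\weight_{\queue'_i}(\varepsilon)=0$. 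The right-hand side is a fixed linear function of the single number $\weight_{\queue_i}(\srcStr)$, so $\weight_{\queue'_i}=E(\weight_{\queue_i})$ with $E$ deterministic and linear. The prune-free recursion satisfies $\weight_{\queueBar_{i+1}}=E(\weight_{\queueBar_i})$ by definition, and this is well defined prefix by prefix even if the reference enumeration never halts.

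\emph{Base case.} I would first treat the starting pool $\queue_1$, comparing it with $\queueBar_1$. If $\tgtStr=\varepsilon$, both equal weight one at $\varepsilon$ deterministically. Otherwise $\queue_1$ is built from the quotient entries returned by the recursive call on $\tgtStr_{<N}$, and $\queueBar_1$ from the prune-free quotient entries. The quotient entries of that call are exactly the cylinder-passing particles of its pools $\queue_j$, $j\ge1$. So $\expected{}{\weight_{\queue_1}(\srcStr)}=\weight_{\queueBar_1}(\srcStr)$ follows from the lemma applied to the shorter target, summed over $j$ and restricted to cylinder prefixes. Each fixed $\srcStr$ has length $|\srcStr|$ and can only appear at one iteration depth, so the sum over $j$ has a single nonzero term and no interchange issue arises. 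The proof therefore becomes an outer induction on $|\tgtStr|$ with an inner induction on $i$. I would also note that the recursive call's randomness is fixed before the top call's prunes.

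\emph{Inductive step.} Assume $\expected{}{\weight_{\queue_i}(\srcStr)}=\weight_{\queueBar_i}(\srcStr)$ for all $\srcStr$. Condition on the history $\mathcal F_i$ up to and including the construction of $\queue'_i$ and the current $\accum$. The prune's inputs are then fixed, so \cref{eq:prune-preserve} gives $\expected{}{\weight_{\queue_{i+1}}(\srcStr)\mid\mathcal F_i}=\weight_{\queue'_i}(\srcStr)$. Taking expectations and using the tower property together with linearity of $E$ gives
\[
\expected{}{\weight_{\queue_{i+1}}(\srcStr)}=\expected{}{E(\weight_{\queue_i})(\srcStr)}=E\bigl(\expected{}{\weight_{\queue_i}}\bigr)(\srcStr)=E(\weight_{\queueBar_i})(\srcStr)=\weight_{\queueBar_{i+1}}(\srcStr).
\]
Since $E$ at a given prefix involves only one parent value, the second equality needs no infinite sums. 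All weights are nonnegative, so the expectations are well defined, possibly infinite. I would remark that they are in fact finite, because the right-hand side is at most $\prefixSource(\srcStr)$.

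\emph{Main obstacle.} The delicate point is that $\accum$, which the prune receives (notably \sworHalt{}), depends on the random history. The prune's output distribution therefore varies with the path, and the linearity argument fails unless we condition properly. The remedy is exactly the conditional form above: \cref{eq:prune-preserve} holds for every fixed input, hence conditionally on $\mathcal F_i$. A second subtlety is a run whose pool empties early. I would handle it by defining $\queue_i=\emptyset$ thereafter, which is consistent since $E(0)=0$ and the prunes map the empty pool to the empty pool. The lemma is then a statement for all $i$ regardless of halting.
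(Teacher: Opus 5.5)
Your proof is correct and follows the paper's argument. The paper also uses an outer induction on $|\tgtStr|$, whose base case identifies the starting pool with the cylinder-passing entries of the shorter call's pools, and an inner induction on $i$ that conditions on the prune's inputs, applies \cref{eq:prune-preserve}, and pushes the expectation through the linear extension identity \cref{eq:extension}. Your stated reason that a fixed $\srcStr$ occurs at only one iteration depth is incomplete, since it actually relies on the starting pool being prefix-free, as the paper notes; this is harmless, because at most $|\srcStr|+1$ depths are possible and linearity over that finite sum suffices.
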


\begin{proof}
    The proof is deferred to \cref{sec:deferred-swor-proofs}.
\end{proof}

\begin{restatable}[Unbiasedness]{theorem}{thmSworUnbiased}
  \label{thm:swor-unbiased}
  Fix a nonempty target string $\tgtStr$ and run \algBeam{} (\cref{fig:algos}) with a twist (\cref{sec:two-algorithms}).
  Suppose that
  \begin{enumerate}[label=(\arabic*),nosep]
    \item every prune preserves the pool's expected weight at every prefix (\cref{eq:prune-preserve}), and
    \item the run halts almost surely.
  \end{enumerate}
  Then $\mathbb{E}[\estZ_{|\tgtStr|}]=\prefixTarget(\tgtStr)$.
\end{restatable}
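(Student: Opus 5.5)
The plan is to write $\estZ_{|\tgtStr|}$ as a sum, over the iterations of the top-level call, of contributions that are linear in the pool weights. Then \cref{lem:pruned-proper} lets us exchange expectation with that sum. The result is the prune-free accumulated total, which equals $\prefixTarget(\tgtStr)$ by the quotient--remainder decomposition (\cref{eq:simple-qr}) and \cref{lem:tail-evidence}.

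\textbf{Step 1: per-iteration contributions.} Let $N=|\tgtStr|$. In the top-level call, iteration $i$ adds
\[
g_i\;\defeq\;\sum_{\srcStr}\weight_{\queue_i}(\srcStr)\,c(\srcStr),
\qquad
c(\srcStr)\defeq\indicator{\isCylinder(\srcStr,\tgtStr)}+\indicator{\neg\isCylinder(\srcStr,\tgtStr)\wedge\isMember(\srcStr,\tgtStr)}\,\prefixSource(\eos\mid\srcStr).
\]
The initial remainder re-check (\cref{line:targetstep}) adds a further term $r$. The function $c$ is deterministic and nonnegative, so $\estZ_N=r+\sum_{i\ge1}g_i$. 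Once the pool is empty all later $g_i$ vanish, so this sum is well defined on every run, halting or not.

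\textbf{Step 2: exchanging expectation and sum.} All terms are nonnegative, so Tonelli gives
\[
\mathbb E[\estZ_N]=\mathbb E[r]+\sum_i\sum_{\srcStr}c(\srcStr)\,\mathbb E[\weight_{\queue_i}(\srcStr)].
\]
\Cref{lem:pruned-proper} then replaces each expected weight by $\weight_{\queueBar_i}(\srcStr)$. Two complications arise:
\begin{itemize}
\item $\queue_1$ and $r$ are themselves random outputs of the recursive call for $\tgtStr_{<N}$. I would handle this by induction on $N$, conditioning on the shorter call. Alternatively I would check that \cref{lem:pruned-proper} is stated for the whole run, with $\queueBar_1$ built from the prune-free recursion of the shorter call.
\item Remainder entries carried across positions have weights $\weight\cdot\prefixSource(\eos\mid\srcStr)$ that are linear in earlier pool weights. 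So $\mathbb E[r]$ is the prune-free value of those entries, by the same exchange applied to the earlier calls.
\end{itemize}

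\textbf{Step 3: identify the prune-free total.} Show that $\sum_i\sum_{\srcStr}c(\srcStr)\weight_{\queueBar_i}(\srcStr)$, plus the prune-free $r$, equals $\prefixTarget(\tgtStr)$. Without pruning, $\weight_{\queueBar_i}(\srcStr)=\prefixSource(\srcStr)$ for each twisted-live prefix reached. The prefixes reached at position $N$ are either re-queued quotient entries of shorter targets or their non-cylinder extensions. Hence the cylinder terms enumerate exactly $\Quotient(\tgtStr)$, and the member terms together with the re-checked remainders enumerate $\Remainder(\tgtStr)$, each prefix once.

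The twist restriction needs care. It only discards children with $\twistOptimal=0$, and by \cref{eq:tail-subtree} such children carry no covering mass, so dropping them loses nothing. Expanding \cref{eq:tail-acceptance} from $\varepsilon$ along this tree and using $\twistOptimal(\varepsilon)=\prefixTarget(\tgtStr)$ gives the identity. The expansion is infinite, and I would justify it by the monotone limit of \cref{eq:tail-subtree}: partial sums are truncations of the precover mass, and by tightness the mass beyond depth $n$ vanishes.

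\textbf{Where the difficulty lies.} I expect the main obstacle to be the role of hypothesis (2). Steps 1–3 apparently never use almost-sure halting, because nonnegativity makes Tonelli unconditional. So I need to find where halting enters. On a non-halting run the call never returns $\estZ_N$, and the $\estZ_N$ of Step 1 is only a formal limit. Hypothesis (2) ensures that the returned value and this limit agree almost surely, and that the recursive calls for shorter prefixes return too, so that Step 2's conditioning on them is legitimate.
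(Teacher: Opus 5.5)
Your proposal is correct and follows essentially the paper's argument. Both proofs apply Tonelli plus \cref{lem:pruned-proper} to replace each pruned pool weight by its prune-free counterpart, handle carried remainder entries through the same identity at earlier positions, match the prune-free contributions against \cref{eq:simple-qr}, and use hypothesis (2) only so that the recursive calls return and $\estZ_{|\tgtStr|}$ is well defined. The paper differs only in bookkeeping: it indexes contributions by decomposition element via $V_t$ with an induction on $t$, rather than by iteration, and it cites the exactness of prune-free \algBeam{} instead of re-deriving it from \cref{lem:tail-evidence}.
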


\begin{proof}
    The proof is deferred to \cref{sec:deferred-swor-proofs}.
\end{proof}

\begin{figure}[t]
  \centering
  \definecolor{accumgray}{RGB}{120,120,120}%
  \definecolor{pinteal}{RGB}{0,126,122}%
  \definecolor{dustorange}{RGB}{226,139,42}%
  \resizebox{\linewidth}{!}{%
\begin{tikzpicture}[font=\normalsize,
  bar/.style={draw=black!65, line width=0.4pt},
  pinned/.style={bar, fill=pinteal!60},
  dust/.style={bar, fill=dustorange!55},
  topup/.style={bar, fill=dustorange!20},
  accum/.style={bar, fill=accumgray!45},
  gone/.style={draw=black!40, dashed, line width=0.5pt},
  price/.style={black!75, dashed, line width=0.9pt},
  lbl/.style={font=\large}, leglbl/.style={font=\normalsize},
  tiny/.style={font=\normalsize, text=black!70}]

\def\bw{0.45}   
\def\gap{0.16}  
\def\c{1.2}     

\begin{scope}[shift={(0,0)}]
  \node[lbl, anchor=base] at (3.4, 3.35) {(a) Before the draw};
  \fill[accum] (0,0) rectangle (\bw,0.7);
  \node[tiny, anchor=south] at (0.5*\bw, 0.74) {$\accum$};
  \foreach [count=\i] \h in {2.6,1.9,1.5} {
    \fill[pinned] ({0.45+\i*(\bw+\gap)},0) rectangle ({0.45+\i*(\bw+\gap)+\bw},\h); }
  \foreach [count=\i] \h in {1.0,0.9,0.8,0.6,0.5,0.3} {
    \fill[dust] ({0.45+(\i+3)*(\bw+\gap)},0) rectangle ({0.45+(\i+3)*(\bw+\gap)+\bw},\h); }
  \draw[-{Stealth[length=2mm]}, black!60, line width=0.6pt] (0.82,0) -- (0.82,2.95) node[anchor=south, font=\normalsize, text=black!70] {$\mathcal{W}^{(i)}$};
  \draw[price] (0.95,\c) -- (6.55,\c) node[anchor=south east, font=\large, text=black!75, xshift=-2mm] {$1/c = \tfrac{\accum+W}{M}$};
\end{scope}

\draw[-{Stealth[length=2.5mm]}, black!70, line width=0.9pt] (6.85,0.9) -- (7.55,0.9);

\def\mx{7.95}   
\def\mb{0.42}   
\def\my{0.55}   
\node[lbl, anchor=base] at ({\mx+5*\mb}, 3.35) {(b) Count filled particles};
\fill[pinned] (\mx, \my) rectangle ({\mx+1*\mb}, \my+0.7);
\fill[pinned] ({\mx+1*\mb}, \my) rectangle ({\mx+2*\mb}, \my+0.7);
\fill[pinned] ({\mx+2*\mb}, \my) rectangle ({\mx+3*\mb}, \my+0.7);
\draw[gone, fill=pinteal!15] ({\mx+0*\mb}, \my+0.7) rectangle ({\mx+1*\mb}, \my+0.7+0.82);
\draw[gone, fill=pinteal!15] ({\mx+1*\mb}, \my+0.7) rectangle ({\mx+2*\mb}, \my+0.7+0.41);
\draw[gone, fill=pinteal!15] ({\mx+2*\mb}, \my+0.7) rectangle ({\mx+3*\mb}, \my+0.7+0.18);
\fill[dust] ({\mx+3*\mb}, \my) rectangle ({\mx+6.42*\mb}, \my+0.7);
\foreach \xseg in {3.83, 4.58, 5.25, 5.75, 6.17} {
  \draw[black!75, line width=0.5pt] ({\mx+\xseg*\mb}, \my) -- ({\mx+\xseg*\mb}, \my+0.7); }
\fill[bar, fill=dustorange!25] ({\mx+6*\mb}, \my) rectangle ({\mx+6.42*\mb}, \my+0.7);
\foreach \i in {0,...,8} { \draw[bar, fill=none] ({\mx+\i*\mb}, \my) rectangle ({\mx+(\i+1)*\mb}, \my+0.7); }
\draw[black!80, line width=1.2pt] ({\mx+6*\mb}, \my-0.14) -- ({\mx+6*\mb}, \my+0.84);
\node[tiny, anchor=south] at ({\mx+6*\mb}, \my+0.86) {$m$};
\node[tiny, anchor=south] at ({\mx+8.3*\mb}, \my+0.86) {$M{=}9$};
\node[tiny] at ({\mx+5*\mb}, \my-0.42) {$\mu = 6.42$,\quad $m=\lfloor\mu+\tfrac{1}{2}\rfloor=6$};

\draw[-{Stealth[length=2.5mm]}, black!70, line width=0.9pt] (12.55,0.9) -- (13.25,0.9);
\node[tiny, anchor=south] at (12.9, 1.05) {draw};

\begin{scope}[shift={(13.65,0)}]
  \node[lbl, anchor=base] at (3.4, 3.35) {(c) After the draw};
  \fill[accum] (0,0) rectangle (\bw,0.7);
  \node[tiny, anchor=south] at (0.5*\bw, 0.74) {$\accum$};
  \foreach [count=\i] \h in {2.6,1.9,1.5} {
    \fill[pinned] ({0.45+\i*(\bw+\gap)},0) rectangle ({0.45+\i*(\bw+\gap)+\bw},\h); }
  \fill[dust]  ({0.45+4*(\bw+\gap)},0) rectangle ({0.45+4*(\bw+\gap)+\bw},1.0);
  \fill[topup] ({0.45+4*(\bw+\gap)},1.0) rectangle ({0.45+4*(\bw+\gap)+\bw},1.367);
  \draw[gone]  ({0.45+5*(\bw+\gap)},0) rectangle ({0.45+5*(\bw+\gap)+\bw},0.9);
  \fill[dust]  ({0.45+6*(\bw+\gap)},0) rectangle ({0.45+6*(\bw+\gap)+\bw},0.8);
  \fill[topup] ({0.45+6*(\bw+\gap)},0.8) rectangle ({0.45+6*(\bw+\gap)+\bw},1.367);
  \draw[gone]  ({0.45+7*(\bw+\gap)},0) rectangle ({0.45+7*(\bw+\gap)+\bw},0.6);
  \fill[dust]  ({0.45+8*(\bw+\gap)},0) rectangle ({0.45+8*(\bw+\gap)+\bw},0.5);
  \fill[topup] ({0.45+8*(\bw+\gap)},0.5) rectangle ({0.45+8*(\bw+\gap)+\bw},1.367);
  \draw[gone]  ({0.45+9*(\bw+\gap)},0) rectangle ({0.45+9*(\bw+\gap)+\bw},0.3);
  \draw[black!45, dotted, line width=0.7pt] (0.95,\c) -- (6.55,\c) node[anchor=west, font=\normalsize, text=black!45] {$1/c$};
  \draw[price] (0.95,1.367) -- (6.55,1.367) node[anchor=south west, font=\large, text=black!75, xshift=-1mm] {$1/c'$};
\end{scope}

\def\ys{-3.8}
\def\ms{0.55}
\begin{scope}[shift={(2.15,\ys)}, xscale=\ms, yscale=\ms]
  \fill[accum] (0,0) rectangle (0.62,0.7);
  \foreach [count=\i] \h in {2.6,1.9,1.5} {
    \fill[pinned] ({\i*0.86},0) rectangle ({\i*0.86+0.62},\h); }
  \foreach [count=\i] \h in {1.0,0.9,0.8,0.6,0.5,0.3} {
    \fill[dust] ({(\i+3)*0.86},0) rectangle ({(\i+3)*0.86+0.62},\h); }
  \draw[price] (0.75,1.2) -- (8.4,1.2);
  \node[tiny] at (4.2,-0.85) {$m{=}6$};
\end{scope}
\draw[-{Stealth}, black!60] (7.25,\ys+1.0) -- (8.25,\ys+1.0);
\begin{scope}[shift={(8.65,\ys)}, xscale=\ms, yscale=\ms]
  \fill[accum] (0,0) rectangle (0.62,2.2);
  \foreach [count=\i] \h in {1.8,1.3} {
    \fill[pinned] ({\i*0.86},0) rectangle ({\i*0.86+0.62},\h); }
  \foreach [count=\i] \h in {0.8,0.55,0.4} {
    \fill[dust] ({(\i+2)*0.86},0) rectangle ({(\i+2)*0.86+0.62},\h); }
  \draw[price] (0.75,1.2) -- (5.2,1.2);
  \node[tiny] at (2.8,-0.85) {$m{=}3$};
\end{scope}
\draw[-{Stealth}, black!60] (11.85,\ys+1.0) -- (12.85,\ys+1.0);
\begin{scope}[shift={(13.25,\ys)}, xscale=\ms, yscale=\ms]
  \fill[accum] (0,0) rectangle (0.62,3.2);
  \foreach [count=\i] \h in {0.9,0.45,0.25} {
    \fill[dust] ({\i*0.86},0) rectangle ({\i*0.86+0.62},\h); }
  \draw[price] (0.75,1.2) -- (3.5,1.2);
  \node[tiny] at (1.9,-0.85) {$m{=}1$};
\end{scope}
\draw[-{Stealth}, black!60] (15.55,\ys+1.0) -- (16.55,\ys+1.0);
\begin{scope}[shift={(16.95,\ys)}, xscale=\ms, yscale=\ms]
  \fill[accum] (0,0) rectangle (0.62,3.8);
  \fill[dust] (0.86,0) rectangle (1.48,0.35);
  \draw[price] (0.75,1.2) -- (2.3,1.2);
  \node[font=\normalsize] at (2.0,0.75) {\Large\bf ?};
  \node[tiny] at (1.3,-0.85) {$m{=}0$};
\end{scope}

\def\yl{-5.4}
\fill[accum]  (1.5,\yl) rectangle +(0.42,0.28);  \node[leglbl, anchor=west] at (2.0,\yl+0.14) {accumulated mass $\accum$};
\fill[pinned] (6.1,\yl) rectangle +(0.42,0.28); \node[leglbl, anchor=west] at (6.6,\yl+0.14) {$\pi_i=1$, kept};
\fill[dust]   (11.8,\yl) rectangle +(0.42,0.19); \fill[topup] (11.8,\yl+0.19) rectangle +(0.42,0.14);
\node[leglbl, anchor=west] at (12.3,\yl+0.14) {drawn pps, $\pi_i=c'\,w^{(i)}$, raised to $1/c'$};
\draw[gone] (1.5,\yl-0.6) rectangle +(0.42,0.28); \node[leglbl, anchor=west] at (2.0,\yl-0.46) {dropped};
\draw[gone, fill=pinteal!15] (6.1,\yl-0.6) rectangle +(0.42,0.28); \node[leglbl, anchor=west] at (6.6,\yl-0.46) {excess above $1/c$};
\end{tikzpicture}}%
  \caption{One \sworHalt{} prune, left to right (top), and successive prunes at one target position (bottom).
  Each bar is one child particle of the pool with height its weight.
  The dashed line is the per-particle threshold $1/c=\rho\,(\accum+W)/M$ (drawn at $\rho{=}1$).
  Panel (b) shows how each child particle fills $\min(c\,w^{(i)},\,1)$ of one of the $M$ slots, so a child particle above the threshold counts for at most $1/c$ of the mass however heavy, the counts sum to $\mu$, and the particle count $m=\lfloor\mu+\tfrac{1}{2}\rfloor$, the nearest integer, is fixed before any randomness.
  In panel (c), \algPPS{} then keeps exactly $m$ particles, re-solving the constant to $c'$, pinning the child particles above the new threshold, and drawing each remaining survivor with probability $c'\,w^{(i)}$ at the shared weight $1/c'$, so the draw passes the twisted total through unchanged (\cref{prop:sworb}).
  Bottom row: as $\accum$ grows, the pool's bars shrink against the threshold and the particle count falls. When the pool counts for less than half a particle, a coin flip determines whether the call continues with a one-particle draw or halts.\looseness=-1
  }
  \label{fig:sworb-prune}
\end{figure}

The stochastic pruning functions we consider (\cref{fig:swor}) meet the expected-weight hypothesis of \cref{thm:swor-unbiased}, while halting is established separately for each algorithm.
The draw \swor{} meets hypothesis (1) by the Horvitz--Thompson identity (\cref{eq:ht}), its inclusion probabilities are positive wherever the twisted weight is, and the extension step only hands it particles with positive twist (\cref{line:enumAlphabet}).
For hypothesis (2), the full \algSWOR{} run halts whenever the decomposition is finite at every recursive prefix position.
For target-prefix calls whose source-prefix enumeration would otherwise continue indefinitely, the tail-roulette prune of \cref{sec:halting} preserves hypothesis (1) and makes each call halt almost surely.
For \sworB{}, both hypotheses follow from \cref{prop:sworb}.

\begin{figure}[t!]
  \centering
  \begin{tikzpicture}[CenterFix, font=\scriptsize]
    \node[state, initial, initial text={}, accepting] (s) {$s$};
    \path (s) edge[loop above] node{\charfont{a}\,:\,$\varepsilon$} ();
    \path (s) edge[loop right] node{\charfont{b}\,:\,\charfont{c}} ();
  \end{tikzpicture}
  \caption{The one-state transducer of the halting example. Both arcs loop on the single initial, final state, $\charfont{a}$ emitting nothing and $\charfont{b}$ emitting $\charfont{c}$, so a source string maps to one $\charfont{c}$ for each $\charfont{b}$ it contains.}
  \label{fig:fst-loop}
\end{figure}

\subsection{Adaptive-Budget SWOR and Almost-Sure Halting}
\label{sec:sworb}
For a finite quotient--remainder decomposition, every branch of the \algBeam{} enumeration reaches a cylinder or a dead end (a prefix whose live set $\liveset{\srcStr}$ is empty) after finitely many iterations, so the call halts \citep[Thm.~4.1]{2025transducing}.
When the decomposition is infinite, this may not be the case.\looseness=-1

As an example, take the one-state transducer of \cref{fig:fst-loop} and the target $\tgtStr=\charfont{c}$.
Choose a source model that assigns positive probability to both $\charfont{a}$ and $\charfont{b}$ after every prefix $\charfont{a}^{k}$.
A prefix covers $\charfont{c}$ once it holds a $\charfont{b}$, and every extension keeps covering, so the decomposition is all quotient, $\Quotient(\charfont{c})=\{\charfont{a}^{k}\charfont{b} : k\ge0\}$, one cylinder at every depth.
At depth $k$, the prefix $\charfont{a}^{k}$ has two children, the cylinder child $\charfont{a}^{k}\charfont{b}$ and the prefix $\charfont{a}^{k+1}$. At the next iteration, $\charfont{a}^{k}\charfont{b}$ is recognized as a cylinder and its weight is added to $\accum$.
The prefix $\charfont{a}^{k+1}$ remains live, is never a member, a cylinder, or dead, and is extended in turn.
For any $M\ge2$, the prune keeps both child particles and no randomness enters, so the call never halts.
Although the weight along this path decays to zero, the source-prefix enumeration may continue forever.
To ensure halting, the fixed-$M$ methods of \cref{tab:tlm-zoo} compose their usual prune with the tail-roulette prune of \cref{sec:halting}.
The prune presented below instead halts this example on its own.\looseness=-1

The \swor{} function uses a fixed cap of $M$ particles at every iteration.
We instead decrease the live-particle count as the current-position running total grows, saving compute.
The result is the \sworHalt{} prune of \cref{fig:swor}, and \sworB{} is \algBeam{} run with it.
We call this method \defn{adaptive-budget SWOR}.
At every \pruneFunc{} call (\cref{line:sworbcounts} of \cref{fig:swor}), we set
\begin{equation}
  \label{eq:sworb-threshold}
  c = \frac{M}{\rho\cdot(\accum+W)}.
\end{equation}
Here $\accum$ is the current-position running total (\cref{fig:algos}), $W$ is the pool's total twisted weight, and $M$ is the maximum particle count.
The corresponding per-particle threshold $1/c$ sets the twisted-weight scale for the count $\mu$.
Thus $\rho\in(0,1]$ controls this count and, through $m$, the number of particles retained.
Children below the threshold contribute in proportion to their twisted weight, while those at or above it contribute one (the dashed line of \cref{fig:sworb-prune}(a,b)).
The resulting $\mu$ fixes the particle count $m$ before sampling according to
\begin{equation}
  \label{eq:sworb-budget}
  \mu=\sum_{i=1}^{N}\min\Bigl(1,\ c\weight^{(i)}\twist(\srcStr^{(i)})\Bigr),
  \qquad
  m=\min\bigl(M,\ \lfloor\mu+\tfrac{1}{2}\rfloor\bigr).
\end{equation}
This replaces the calculation of \cref{eq:pps-marginals}, where the constant is solved so that the capped counts sum to the fixed particle count $M$.
\algPPS{} then keeps exactly $m$ particles, re-solving the constant to $c'$, so every non-pinned survivor carries the shared weight $1/c'$ (\cref{fig:sworb-prune}(c)).
In the limit $\rho\to0$, \sworHalt{} recovers the fixed-$M$ \swor{} prune.
Under the live twist, the capped count satisfies $\mu\le (M/\rho)\,W/(\accum+W)$, where $W/(\accum+W)$ is the pool's remaining live share.
A source-extension iteration never increases the pool's mass.
When the pool counts for less than half a particle, $\mu<\tfrac{1}{2}$, a Russian roulette draw \citep{kahn1955} decides whether to empty the pool (\cref{line:roulette}).
To also cover a position that can continue forever without adding anything to $\accum$, we replace it at position $t$ by
\begin{equation}
  \label{eq:effective-accum}
  \accum_{\epsilon,t}\defeq\max\{\accum,\epsilon\estZ_{t-1}\},
  \qquad \estZ_0\defeq1,\quad \epsilon>0.
\end{equation}
This substitution applies only to the count and threshold in
\cref{line:sworbcounts,eq:sworb-threshold}.
The capped count $\mu$ therefore vanishes as the pool's total live weight $W$ vanishes relative to $\accum_{\epsilon,t}$.
The preceding estimate $\estZ_{t-1}$ is fixed before position $t$ begins.
The resulting particle count is therefore fixed before sampling, so the SWOR draw still preserves each weight in expectation.
\Cref{fig:sworb-prune} traces one draw through panels (a) to (c), as $\accum$ rises and the particle count $m$ falls.

\begin{figure}[t]
  \centering
  \tikzset{
    >=Stealth,
    gnode/.style={circle, draw=ETHGray!55, fill=ETHGray!7, line width=0.5pt, inner sep=0pt, minimum size=4.6mm},
    gedge/.style={ETHGray!45, line width=0.5pt},
    keptn/.style={circle, draw=jademist, fill=jademist, inner sep=0pt, minimum size=5mm},
    considn/.style={circle, draw=jademist!75, dashed, fill=ETHGray!7, line width=0.6pt, inner sep=0pt, minimum size=4.6mm},
    kedge/.style={jademist, line width=1.2pt},
    cedge/.style={jademist!70, line width=0.6pt, dashed},
    bdot/.style={circle, draw=slatelake, fill=slatelake!85, inner sep=0pt, minimum size=#1},
    bpath/.style={slatelake, line width=1.0pt},
    dup/.style={saffron, line width=1.3pt},
    dead/.style={ETHGray, line width=0.9pt, dashed, opacity=0.55},
    slbl/.style={font=\scriptsize\color{ETHGray}},
  }
  \begin{subfigure}[t]{0.49\linewidth}
    \centering
    \resizebox{\linewidth}{!}{\begin{tikzpicture}
        \useasboundingbox (-0.5,2.7) rectangle (6.8,7.3);
        \node[gnode](Br)  at (0,5)    {};
        \node[gnode](Bn1) at (2.5,6.4){}; \node[gnode](Bn2) at (2.5,5.0){}; \node[gnode](Bn3) at (2.5,3.6){};
        \node[gnode](Bl1) at (5,6.75) {}; \node[gnode](Bl2) at (5,6.05){};
        \node[gnode](Bl3) at (5,5.35) {}; \node[gnode](Bl4) at (5,4.65){};
        \node[gnode](Bl5) at (5,3.95) {}; \node[gnode](Bl6) at (5,3.25){};
        \draw[gedge] (Br)--(Bn1) (Br)--(Bn2) (Br)--(Bn3)
        (Bn1)--(Bl1) (Bn1)--(Bl2) (Bn2)--(Bl3) (Bn2)--(Bl4) (Bn3)--(Bl5) (Bn3)--(Bl6);
        \draw[cedge] (Br)--(Bn3) (Bn1)--(Bl2) (Bn2)--(Bl4);
        \node[considn] at (2.5,3.6){}; \node[considn] at (5,6.05){}; \node[considn] at (5,4.65){};
        \draw[kedge] (Br)--(Bn1) (Br)--(Bn2) (Bn1)--(Bl1) (Bn2)--(Bl3);
        \node[keptn] at (0,5){}; \node[keptn] at (2.5,6.4){}; \node[keptn] at (2.5,5.0){};
        \node[keptn] at (5,6.75){}; \node[keptn] at (5,5.35){};
        \draw[kedge,->] (5,6.75)--++(1.0,0);  \draw[kedge,->] (5,5.35)--++(1.0,0);
        \node[slbl, anchor=west] at (5.25,6.05){pruned};
        \node[slbl, anchor=south] at (2.5,6.95){extend};
      \end{tikzpicture}}
    \caption{All-child extension followed by SWOR pruning (\algSWOR{}, \sworB{}).}
    \label{fig:bvs-beam}
  \end{subfigure}
  \hfill
  \begin{subfigure}[t]{0.49\linewidth}
    \centering
    \resizebox{\linewidth}{!}{\begin{tikzpicture}
        \useasboundingbox (-0.5,-2.3) rectangle (6.8,2.3);
        \node[gnode](Sr)  at (0,0)    {};
        \node[gnode](Sn1) at (2.5,1.4){}; \node[gnode](Sn2) at (2.5,0.0){}; \node[gnode](Sn3) at (2.5,-1.4){};
        \node[gnode](Sl1) at (5,1.75) {}; \node[gnode](Sl2) at (5,1.05){};
        \node[gnode](Sl3) at (5,0.35) {}; \node[gnode](Sl4) at (5,-0.35){};
        \node[gnode](Sl5) at (5,-1.05){}; \node[gnode](Sl6) at (5,-1.75){};
        \draw[gedge] (Sr)--(Sn1) (Sr)--(Sn2) (Sr)--(Sn3)
        (Sn1)--(Sl1) (Sn1)--(Sl2) (Sn2)--(Sl3) (Sn2)--(Sl4) (Sn3)--(Sl5) (Sn3)--(Sl6);
        \draw[saffron!85, dashed, line width=0.7pt] (3.75,-2.0) -- (3.75,1.5);
        \node[slbl, anchor=south, text=saffron] at (3.75,1.85){$ESS<\threshold M$};
        \draw[bpath] (Sr)--(Sn1) (Sr)--(Sn2) (Sr)--(Sn3);
        \draw[bpath] (Sn1)--(Sl1);  \draw[dup] (Sn1)--(Sl2);
        \draw[bpath] (Sn2)--(Sl4);
        \draw[dead] (Sn3)--(3.62,-1.25);
        \node[bdot=4mm] at (0,0){};
        \node[bdot=5.6mm] at (2.5,1.4){};
        \node[bdot=4mm]   at (2.5,0.0){};
        \node[bdot=3mm]   at (2.5,-1.4){};
        \node[bdot=4mm] at (5,1.75){};
        \node[bdot=4mm, draw=saffron, fill=saffron] at (5,1.05){};
        \node[bdot=4mm] at (5,-0.35){};
        \node[ETHGray, font=\footnotesize] at (3.62,-1.25){$\times$};
        \node[slbl, anchor=north] at (3.55,-1.4){drop};
        \node[slbl, text=saffron, anchor=south] at (4.55,1.16){copy};
        \draw[bpath,->] (5,1.75)--++(1.0,0); \draw[bpath,->] (5,1.05)--++(1.0,0); \draw[bpath,->] (5,-0.35)--++(1.0,0);
        \foreach \x/\t in {0/{$t$}, 2.5/{$t{+}1$}, 5/{$t{+}2$}} { \node[slbl] at (\x,-2.25){\t}; }
      \end{tikzpicture}}
    \caption{Sampled extension followed by ESS-triggered multinomial resampling (\algSMC{}, \algSMCrb{}).}
    \label{fig:bvs-smc}
  \end{subfigure}
  \caption{
    Two ways to maintain a bounded particle pool.
    Dashed rings in (\subref{fig:bvs-beam}) mark children removed by pruning.
    In (\subref{fig:bvs-smc}), the cross marks a dropped particle and orange marks a duplicate.
    Grey edges show unexplored branches.
    Dot size shows weight.}
  \label{fig:beam-vs-smc}
\end{figure}

\begin{restatable}[Adaptive-budget SWOR]{proposition}{propSworb}
  \label{prop:sworb}
  Fix a positive integer $M$, a twist $\twist$, $\rho\in(0,1]$, and $\epsilon>0$, and run \sworB{} using $\accum_{\epsilon,t}$ from \cref{eq:effective-accum}.
  Then (I) each prune call satisfies \cref{eq:prune-preserve}.
  Under the live twist $\twist=\isLive$, (II) the full run halts almost surely.
  Consequently, under the live twist, $\expected{}{\estZ_t}=\prefixTarget(\tgtStr_{\le t})$ at every position.
\end{restatable}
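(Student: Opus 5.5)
Part (I) is a case analysis over the branches of \sworHalt{} with $\queuep$ and $\accum_{\epsilon,t}$ held fixed, checking \cref{eq:prune-preserve} at each source prefix. Since $\accum_{\epsilon,t}$ and $W$ are deterministic functions of the fixed inputs, and $\estZ_{t-1}$ is fixed before position $t$ begins, the quantities $c$, $\mu$ and $m$ are deterministic. All randomness therefore sits in the roulette coin $B$ and the fixed-size draw.
\begin{itemize}
\item If $W=0$, every particle has zero twisted weight. Since the extension step only creates children with $\twist>0$, such particles have weight zero, so returning $\emptyset$ preserves every $\weight_{\queuep}(\srcStr)$.
\item If $m\ge1$ and $|\mathcal I_+|\le m$, particles are returned unchanged; zero-twist particles again carry zero weight.
\item Otherwise \algPPS{} returns valid inclusion probabilities that are positive wherever $\mathcal W^{(i)}>0$ and sum to $m$, and \algFixedWOR{} realises them exactly. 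The Horvitz--Thompson identity \cref{eq:ht}, applied with $h=\indicator{\srcStr^{(i)}=\srcStr}$, then gives preservation at each prefix. Duplicate prefixes in the multiset are handled by linearity.
\item If $m=0$ and $\mu>0$, conditioning on $B=1$ (probability $\mu$) multiplies all weights by $1/\mu$ before the one-particle draw. The draw preserves the rescaled weights, so the total expectation is $\mu\cdot\weight/\mu=\weight$. Here $\mu\le1$ is needed for $B$ to be well-defined, which holds because $m=0$ forces $\mu<\tfrac12$.
\end{itemize}

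For (II), I would fix a position $t$ and show the call for $\tgtStr_{\le t}$ halts almost surely. There are finitely many positions, and the remainder re-check is finite, so it suffices to argue call by call. Take $\twist=\isLive$. Let $W_i$ be the pool's live weight after the extension step at iteration $i$, and let $\accum^{(i)}$ be the running total. Note $\accum_{\epsilon,t}\ge \epsilon\estZ_{t-1}>0$ whenever $\estZ_{t-1}>0$; if $\estZ_{t-1}=0$, the previous call produced no quotient entries, the starting pool is empty, and the call halts immediately. Every pruned pool holds at most $M$ particles, and $\mu\le (M/\rho)\,W_i/(\accum_{\epsilon,t}+W_i)\le (M/\rho)\,W_i/(\epsilon\estZ_{t-1})$. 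The call therefore halts once $\mu<\tfrac12$ and the roulette coin fails; the coin fails with probability $1-\mu\ge\tfrac12$.

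The key step is to show that the live weight cannot stay bounded away from zero forever with positive probability. I would use the supermartingale structure: by (I) and the fact that extension never increases mass, the live mass $W_i$ plus the accumulated $\accum^{(i)}$ is a nonnegative supermartingale. More precisely, $\accum^{(i)}+\sum \weight\,\twistOptimal$ over the pool is a martingale by \cref{lem:tail-evidence} and (I).

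The main obstacle is turning this into halting. The pool's weight can stay large with particles drifting deeper in a live but never-covering subtree, as in the \cref{fig:fst-loop} example where $\twistOptimal$ stays positive. My plan is to use tightness of $\pSource$ (the footnote in \cref{sec:lms}): the total live mass at depth $n$ of the \emph{unpruned} pools, $\sum_{\srcStr}\weight_{\queueBar_n}(\srcStr)$, tends to zero. By \cref{lem:pruned-proper}, $\mathbb E[W_n]\le \sum_{|\srcStr|=n}\prefixSource(\srcStr)\to0$, so by Markov's inequality $\Pr[\mu_n\ge\tfrac12]\le (2M/\rho)\,\mathbb E[W_n]/(\epsilon\estZ_{t-1})\to0$. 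The lower bound $\epsilon\estZ_{t-1}$ is random but fixed before position $t$, so I would condition on it throughout.

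Hence, with probability tending to one, at depth $n$ either the pool is already empty or the roulette regime is active. In the roulette regime, each such iteration ends the call with conditional probability at least $\tfrac12$. A Borel--Cantelli / Lévy extension argument then shows that nonhalting would require infinitely many failed coins or infinitely many depths with $\mu_n\ge\tfrac12$, and both events have probability zero. The second needs care, since Markov at each $n$ only gives probabilities tending to zero, not a summable sequence. I would handle it by noting that the event of running forever implies that $\mu_n\ge\tfrac12$ for all large $n$ except on roulette survivals, each of which carries factor $\le\tfrac12$, so that $\Pr[\text{no halt}]\le\liminf_n\Pr[\mu_n\ge\tfrac12]+\lim_k 2^{-k}=0$.

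Finally, (I) and (II) are exactly the hypotheses of \cref{thm:swor-unbiased}. Applying it to each prefix $\tgtStr_{\le t}$ gives $\mathbb E[\estZ_t]=\prefixTarget(\tgtStr_{\le t})$.
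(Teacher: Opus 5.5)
Your proposal is correct and follows the paper's proof essentially step for step: the same three-case check for (I), and for (II) the same bound $\mu_n\le MW_n/(\rho\,\epsilon\estZ_{t-1})$ with $\mathbb{E}[W_n]\to0$ by tightness, then Markov's inequality and the roulette-halving argument (the paper's recursion $p_n\le\mathrm{Pr}[m_n>0]+\tfrac12p_{n-1}$ is a cleaner form of your window bound, which should read $\sum_{j=n-k+1}^{n}\mathrm{Pr}[\mu_j\ge\tfrac12]+2^{-k}$ rather than a single $\liminf$). Because you condition on $\estZ_{t-1}$, you also need to bound $W_n$ conditionally on the realized starting pool, $\mathbb{E}[W_n]\le\sum_j\weight^{(j)}\sum_{\srcStrp\in\srcAlphabet^n}\prefixSource(\srcStrp\mid\srcStr^{(j)})\to0$ as the paper does, rather than use the unconditional bound $\sum_{|\srcStr|=n}\prefixSource(\srcStr)$ from the pool-weight lemma.
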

\begin{proof}
    The proof is deferred to \cref{sec:sworb-proof}.
\end{proof}
\sworB{} uses $\accum_{\epsilon,t}$ and the current twisted weights to set its particle count, which can give a compute advantage at matched accuracy, as we experimentally show in \cref{fig:cost-pareto}.
An ablation of the $\rho$ parameter is given in \cref{app:knobs}.\looseness=-1

\begin{figure}[t!]
  \centering
  \includegraphics[width=\linewidth]{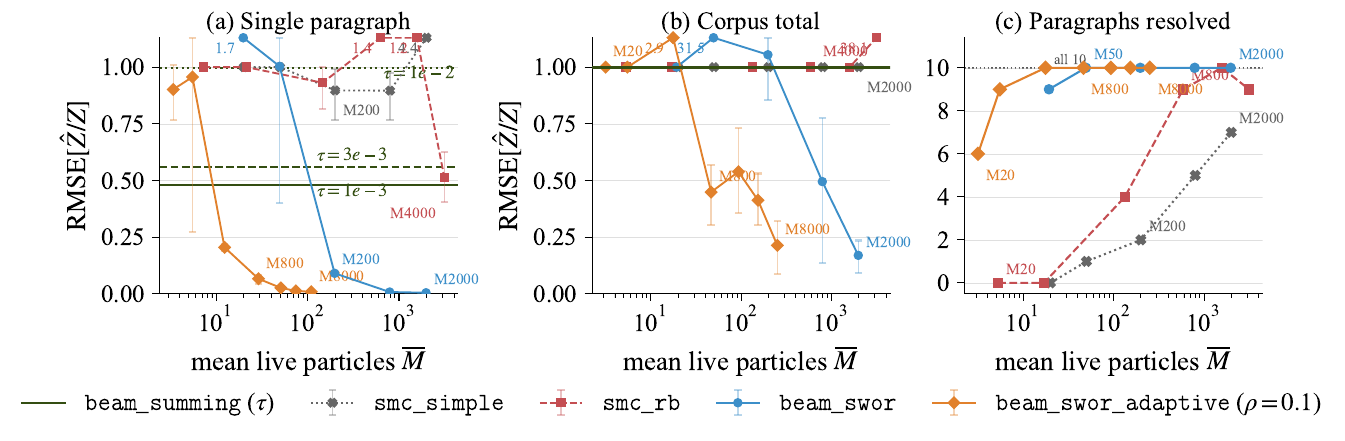}
  \caption{Unbiased methods versus beam summing with $\pruneTau$ on the $\pbigram{\circ}\STPTB$ TLM, compared with exact probabilities. Panels (a) and (b) plot relative root-mean-square error $\mathrm{RMSE}[\estZ/\evZ]$: (a)~a single example paragraph and (b)~the ten-paragraph corpus total, both against the mean live particle count. Labels give the maximum particle count $M$. Bars give $95\%$ bootstrap intervals. Panel (c)~counts the paragraphs each method resolves, capturing at least $5\%$ of a paragraph's mass.}
  \label{fig:validation}
\end{figure}

\subsection{SMC with Multinomial Resampling}
\label{sec:sampled-variants}

The \algSWOR{} estimator (\cref{fig:swor}) keeps the full enumeration of \algBeam{} and randomizes only the pruning.
A standard sequential Monte Carlo (SMC) construction instead uses multinomial resampling with replacement, gated by an \emph{effective sample size} test (ESS, \cref{app:smc}), as its prune (\cref{fig:beam-vs-smc}).
Two such algorithms complete the family of \cref{tab:tlm-zoo} and serve as the baselines of \cref{sec:experiments}.
First, \algSMC{} (\cref{fig:smc}) is a textbook particle filter \citep{smcbook,naesseth2019elements} run with the precover-restricted proposal.
It collects mass only when a particle happens to draw $\eos$ at a member prefix, and its resample duplicates heavy particles, so the pool's $M$ slots hold dependent repeats exactly where the weights have collapsed.
Second, \algSMCrb{} (\cref{fig:algos-smc}) marginalizes the stop branch rather than drawing it, a Rao--Blackwellization of the stop decision.
Both \algSMCrb{} and \algSWOR{} use importance weighting to account for the probability mass of children left out of the pool.
\algSMCrb{} samples one live child for each particle and may resample particles with replacement. 
\algSWOR{} instead forms every live child before pruning and samples distinct survivors without replacement.
\Cref{app:smc} develops both SMC baselines in full.
It proves \algSMCrb{} unbiased with the same nested induction as \cref{sec:swor-correctness}, after showing that its sampled extension gives each child the same expected weight as the all-child extension.\looseness=-1

\subsection{Relation to Prior Work}
\label{sec:swor-prior}
Our SWOR prune functions combine unequal-probability sampling without replacement with Horvitz--Thompson reweighting \citep{horvitz1952generalization}.
We implement the draw using systematic pps \citep{madow1949theory} and the certainty-set construction from without-replacement particle-filter resampling \citep{fearnhead2003online}.
Conditional Poisson stochastic beam search \citep{meister2021conditional} likewise samples prefixes without replacement, but its Horvitz--Thompson estimate of expectations over complete source strings requires their inclusion probabilities under the full search.
We instead correct each pruning draw locally, and these expected-weight identities compose into an unbiased estimate of the precover's total mass \citep{Shah2018}.
\citet{Shah2018} consider a fixed number of sampling stages, whereas a target-prefix call may require an unbounded number of source-extension iterations.
We extend the argument to this setting while adding quotient and remainder contributions, recursing over target positions, and adapting the particle count (\cref{thm:swor-unbiased,prop:sworb}).\looseness=-1

\section{Experiments}
\label{sec:experiments}
Unbiasedness, criterion (i) of \cref{sec:swor}, is established in \cref{sec:swor}.
We measure the bias of beam-summing estimates computed with $\pruneTau$ and the variance of the unbiased estimates on paragraphs from WikiText-2 \citep{merity2017pointer} in \cref{sec:ptb-lane}, confirming that variance decreases as the number of particles grows, criterion (ii).
\Cref{sec:cost} shows that running time can be controlled at the cost of variance, criterion (iii).\looseness=-1

On a DNA-to-amino-acid transduction whose precover grows exponentially, \cref{sec:dna} shows that \algBeam{} run with $\pruneTau$ gets prohibitively slow after a dozen amino acids, while \sworB{} reaches position $200$ in about five seconds.
Finally, \cref{sec:psycholing} revalidates a published psycholinguistic analysis \citep{kiegeland2025proper} by running beam summing with \sworHalt{} rather than the biased pruning function $\pruneTau$.\looseness=-1

\paragraph{Estimators.}
We evaluate the four unbiased methods \algSMC{}, \algSMCrb{}, \algSWOR{}, and \sworB{} (\cref{fig:swor} and \cref{sec:sampled-variants}), against the prior work's beam summing, \algBeam{} with the mass prune $\prune_{\pruneThreshold}$ (\cref{fig:algos}).
Every setting uses the live twist $\isLive$.
The comparison in \cref{sec:estimator-variance} separates the effect of marginalizing the $\eos$ decision rather than sampling it (\algSMC{} versus \algSMCrb{}) from the combined change to extending every live child and pruning without replacement (\algSMCrb{} versus \algSWOR{}).

\paragraph{Reporting.}
Over $S$ seeds we report the log of the averaged estimate, $\log\bigl(\tfrac{1}{S}\sum_{s=1}^{S}\estZ_s\bigr)$.
Each seed's $\estZ_s$ is unbiased for $\evZ$, so the average is unbiased as well, and its log estimates $\log\evZ$.
Averaging in the other order, $\tfrac{1}{S}\sum_{s}\log\estZ_s$, would estimate $\expected{}{\log\estZ}$ instead, and $\log$ is concave, so by Jensen's inequality $\expected{}{\log\estZ}\le\log\expected{}{\estZ}=\log\evZ$, a downward bias.
We report between-seed standard deviations alongside method comparisons.
We report particle counts as the mean number of live particles.\looseness=-1

\subsection{Byte-to-Word Transduction on Real Text}
\label{sec:ptb-lane}
The Penn Treebank (PTB) tokenizer segments raw text into PTB units, for example splitting \texttt{!!!} into three successive \texttt{!} units \citep{marcus-etal-1993-building}.
Following \citet{2025transducing,kiegeland2025proper}, we write $\STPTB$ for a transducer encoding these rules and evaluate it on ten WikiText-2 paragraphs.
We now measure what the bias correction changes in this setting using a bigram model.
The source model $\pbigram{}$ is a smoothed byte-level bigram model trained on the WikiText-2-raw train split and evaluated on the test paragraphs.\footnote{We use add-$\alpha$ smoothing with $\alpha=0.5$ \citep[Ch.~3]{jm3}.}

We calculate the exact target prefix probabilities by composing $\STPTB$ with the weighted finite-state representation of $\pbigram{}$. Standard forward and backward path sums in the resulting automaton give the per-position values, as described in \cref{app:tailacc-solve}.

Results are shown in \Cref{fig:validation}, using the relative root-mean-square error across seeds, $\mathrm{RMSE}[\estZ/\evZ]=\sqrt{S^{-1}\sum_{s=1}^{S}(\estZ_s/\evZ-1)^2}$, as a function of the mean live particle count.
On the single paragraph, threshold-pruned beam summing has error floors from $0.48$ at the lowest threshold to $1.0$ at the harshest.
On the corpus, its error remains close to $1$.
The with-replacement samplers are unbiased, but their corpus error remains near $1$ even with thousands of particles.
\algSWOR{} and \sworB{} drive the corpus error down to $0.17$ and $0.21$ at their largest tested $M$ values, between-seed standard deviations $57\times$ and $49\times$ below \algSMC{}'s (\cref{tab:variance-numbers}).
Panel~(c) shows they resolve most paragraphs where \algSMCrb{} needs $M$ in the thousands to do so and \algSMC{} never resolves the full corpus.
\Cref{app:perpos-grid} shows the comparison at every position of all ten paragraphs.
The unbiased methods track the exact ratio closely, while \algBeam{} with $\pruneTau$ falls below it when pruning discards covering mass.\looseness=-1

\begin{figure}[tpb]
  \centering
  \includegraphics[width=\linewidth]{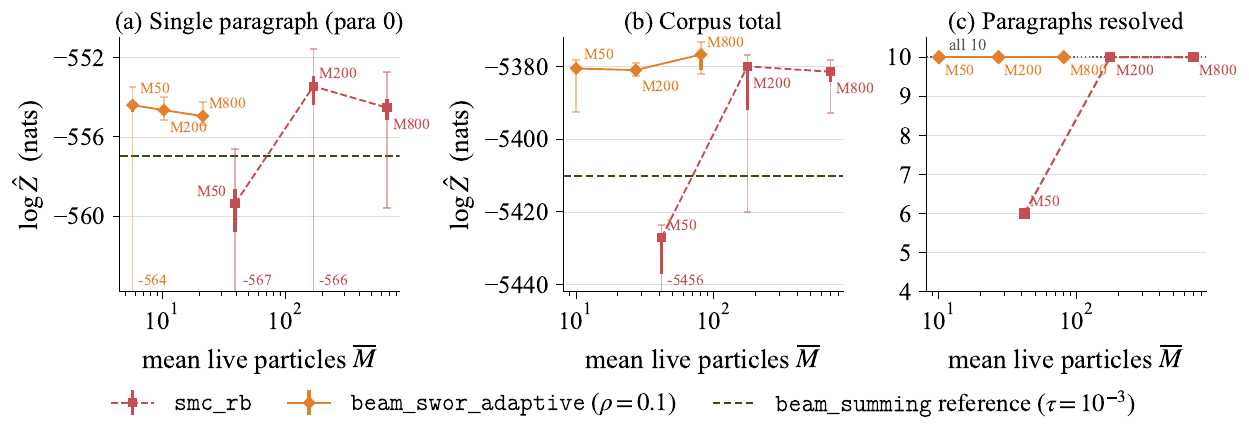}
  \caption{GPT-2-large transduced with $\STPTB$.
  Panels (a) and (b) plot the estimates $\log \estZ$ in nats against the mean live particle count.
  Labels give the maximum particle count $M$.
  Each point is $\log\bigl(8^{-1}\sum_{s=1}^{8}\estZ_s\bigr)$, with a failed run contributing $\estZ_s=0$.
  Thick bars give $95\%$ bootstrap intervals, thin whiskers span the finite per-run log estimates, and clipped minima are printed at the lower edge.
  The dashed line is the threshold-pruned lower bound at $\pruneThreshold{=}10^{-3}$.
  The two algorithms appear to converge to a common value. The lower bound is $2.0$ nats below this value on the paragraph and approximately $33$ nats below it on the corpus.
  Panel (c) counts the paragraphs on which each method recovers at least the lower bound's mass.\looseness=-1}
  \label{fig:gpt2-ptb}
\end{figure}
\paragraph{GPT-2 on PTB.}
We also run $\STPTB$ with GPT-2 large, $\gpt$ \citep{radford2019language}.
Exact target prefix probabilities are unavailable for GPT-2 on the WikiText paragraphs, so we report between-seed variance and convergence as $M$ grows and compare with the threshold-pruned lower bound.
The estimates in \cref{fig:gpt2-ptb} are shown for \algSMCrb{}, the lowest-variance method with replacement, and \sworB{}, the lowest-variance method without replacement (\cref{tab:variance-numbers}), along with the lower bound at $\pruneThreshold=10^{-3}$.
At the largest tested value of $M$, the two agree within $0.5$ nats on the paragraph. On the corpus, \algSMCrb{} remains $4.7$ nats below the \sworB{} plateau, which is reached at a much smaller mean live particle count.
\Cref{tab:gpt2-head-to-head} reports estimates for each of the ten paragraphs. The threshold-pruned estimate lies below the \sworB{} estimate on every paragraph, by $0.24$ to $7.7$ nats and $33.4$ nats in total. The between-seed standard deviation of \sworB{} is smaller than this gap on every paragraph and at least an order of magnitude smaller on five of ten.\looseness=-1

\begin{figure}[tpb]
  \centering
  \begin{subfigure}[t]{0.49\linewidth}
    \centering
    \includegraphics[width=\linewidth]{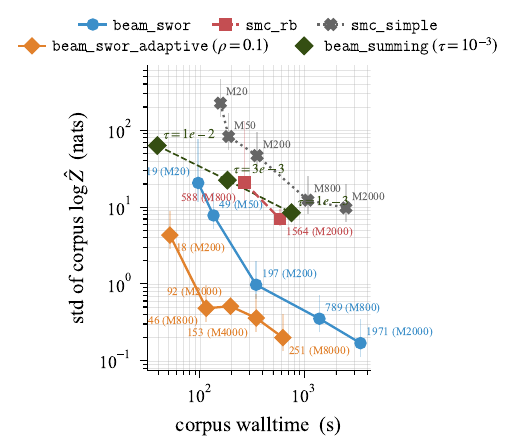}
    \caption{$\pbigram{}$}
    \label{fig:cost-pareto-bigram}
  \end{subfigure}
  \hfill
  \begin{subfigure}[t]{0.49\linewidth}
    \centering
    \includegraphics[width=\linewidth]{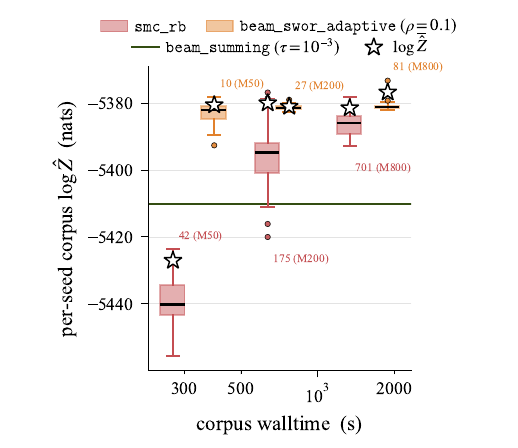}
    \caption{$\gpt$}
    \label{fig:cost-pareto-gpt2}
  \end{subfigure}
  \caption{
    Accuracy--walltime comparisons.
    (a)~For $\pbigram{}$, where exact target prefix probabilities are available, the between-seed standard deviation of the corpus log-estimate against per-seed corpus walltime. Annotations in both panels give the mean live particle count, with the maximum particle count $M$ in parentheses. The \algBeam{} lines using $\pruneTau$ report bias relative to the exact values, and bars give the 95\% confidence interval of each standard deviation.
    (b)~For $\gpt$, where exact target prefix probabilities are unavailable, the finite per-seed corpus log-estimates over $32$ seeds. Boxes span the interquartile range with whiskers at $1.5$ IQR and seeds beyond them drawn as dots, and the black bar marks the median seed. The star marks the pooled estimate $\log \overline{\hat Z}$, the log of the seed-averaged $\hat Z$, computed from all $32$ seeds with a failed run contributing $\hat Z = 0$. Because the pooled estimate averages $\hat Z$ before taking the logarithm, occasional high estimates can make it larger than the median per-seed estimate. The \algBeam{} line shows its corpus total at $\pruneTau$.
  }
  \label{fig:cost-pareto}
\end{figure}

\subsection{Comparing Compute--Variance Frontiers}
\label{sec:cost}
We compare the estimators against walltime in \cref{fig:cost-pareto}.\footnote{All timings come from one workstation with an Intel Core i9-12900KF, 128\,GB of RAM, and a single NVIDIA RTX 3090 with 24\,GB of memory. For the CPU experiments, we report the total CPU time of each run, pinned to a P-core on an otherwise idle machine. For GPT-2 we report wall-clock time with one job at a time on the GPU.}
With the $\pbigram$ source, there is no GPU overhead, and with $\gpt$ the walltime includes the source model's forward passes.
On $\gpt\circ\STPTB$, \sworB{} at nominal $M{=}200$ takes $772$ seconds, compared with $1224$ seconds for \algBeam{} with $\pruneTau$. The $M{=}800$ point of \cref{tab:gpt2-head-to-head} takes $1.5\times$ as long as that baseline.
Under both source models, the \sworB{} frontier dominates both SMC baselines.
On the bigram, at effectively matched between-seed standard deviation ($0.36$ against $0.35$ nats), \sworB{} is four times faster than \algSWOR{} ($346$ against $1390$ seconds) and uses fewer live particles ($153$ against $789$). At matched walltime, its standard deviation is nearly three times lower ($0.36$ against $0.98$ nats).
With $\rho{=}0.1$ at nominal $M{=}2000$, $149$ seeds give a standard deviation of $0.163$ nats ($95\%$ CI $[0.146,0.184]$), consistent with the $0.170$ nats of fixed-$M$ \algSWOR{} at about one fifth of its CPU time (\cref{app:knobs}).
The parameter $\rho$ of \sworB{} is swept in \cref{app:knobs}.\looseness=-1

\begin{figure}[t]
  \centering
  \begin{subfigure}[b]{0.58\linewidth}
    \centering
    \includegraphics[width=\linewidth]{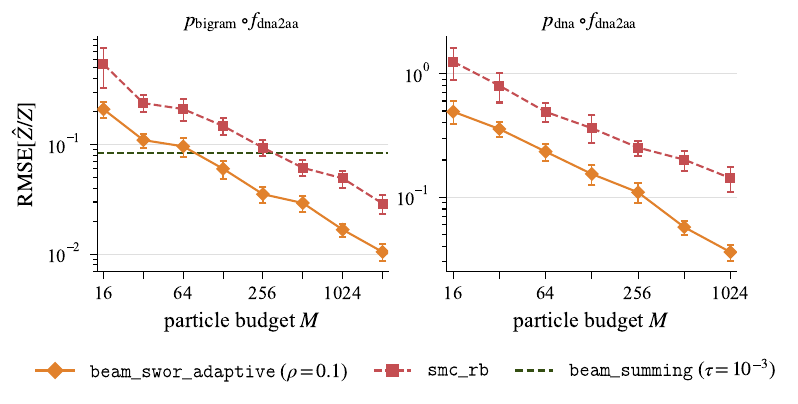}
    \caption{Validation against exact values.}
    \label{fig:dna-validation}
  \end{subfigure}
  \hfill
  \begin{subfigure}[b]{0.39\linewidth}
    \centering
    \includegraphics[width=\linewidth]{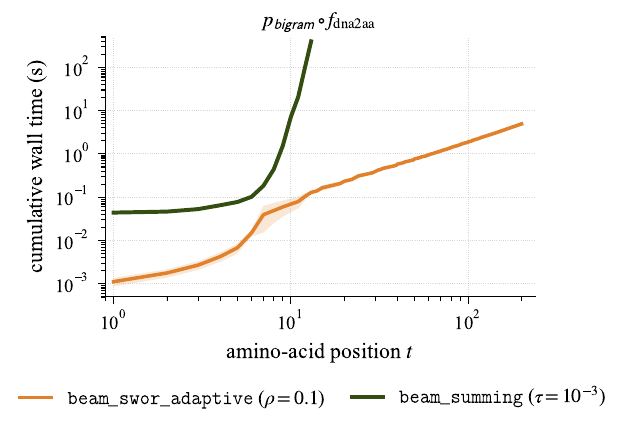}
    \caption{Cumulative walltime per amino acid.}
    \label{fig:dna-scaling}
  \end{subfigure}
  \caption{
    DNA-to-amino-acid results.
    (\subref{fig:dna-validation})~Relative RMSE against exact values over $64$ seeds and three targets, with $95\%$ bootstrap intervals.
    The error of \sworB{} decreases faster with $M$ than that of \algSMCrb{}, while \algBeam{} with $\pruneTau$ has fixed error.
    (\subref{fig:dna-scaling})~Cumulative walltime for the bigram model.
    \algBeam{} with $\pruneTau$ exceeds five minutes at position $13$, while \sworB{} ($\rho{=}0.1$, $M{=}256$) reaches position $200$ in about five seconds.
  }
\end{figure}

\subsection{Transducing DNA Language Models}
\label{sec:dna}
The DNA$\to$amino-acid map \transFnDNA{} maps DNA strings over a four-letter alphabet to amino-acid strings. Combined with a high-entropy source model, it gives the setting where materializing the precover is most costly in our experiments.
Each amino acid is the image of several source codons, and the high-entropy model places non-negligible mass on all of them, so the set of source strings an exhaustive method must track multiplies for each added symbol.
We train a bigram model on the DNA training corpus, and use $\pdna$, a GPT-2 model trained on human DNA, the same model used by \citet{2025transducing}.
\Cref{fig:dna-validation} compares both DNA models with exact target prefix probabilities obtained by brute-force enumeration of the precover.
In both the validation and scaling comparisons, \algBeam{} uses $\pruneTau$ with $\pruneThreshold=10^{-3}$.

\Cref{fig:dna-scaling} shows the running time of \algBeam{} with $\pruneTau$ growing rapidly within a dozen amino acids, while \sworB{} scales approximately linearly.
\Cref{app:worstcase} gives a uniform-source worst case for \algBeam{} with $\algTopM$. Its captured mass ratio vanishes exponentially in the target length, while \algSMC{}, \algSMCrb{}, and fixed-$M$ \algSWOR{} are exact for any $M$.

\subsection{Variance Across Estimators}
\label{sec:estimator-variance}
\Cref{tab:variance-numbers} compares the four unbiased estimators across the three transductions.
Marginalizing the \eos{} decision rather than sampling it (\algSMC{} versus \algSMCrb{}) reduces variance only modestly.
The large reduction in every setting occurs between \algSMCrb{} and \algSWOR{}, which changes both the extension to every live child and the pruning to sampling without replacement.
Compared to \algSWOR{}, \sworB{} achieves comparable variance with less computation on the bigram corpus, lower variance on GPT-2, and identical results on DNA, where \sworHalt{} retains the same number of particles as \swor{} (\cref{tab:variance-numbers,fig:cost-pareto}).

\begin{table}[ptb]
  \centering
  \caption{
  Between-seed standard deviation, in nats, of each estimator's log probability estimate $\log\estZ$.
  The columns report the ten-paragraph WikiText-2 total for the bigram model, the first WikiText-2 paragraph for GPT-2, and the amino-acid target for DNA.
  Reported values use eight seeds and maximum particle counts of $2000$, $200$, and $256$, respectively, except \sworB{} on the bigram ($M{=}8000$, mean live count $251$).
  The \algSMC{} GPT-2 run covered only $120/850$ bytes.}
  \label{tab:variance-numbers}
  \small\setlength{\tabcolsep}{4pt}
  \begin{tabular}{@{}lccc@{}}
    \toprule
    Algorithm               & $\pbigram{\circ}\STPTB$ & $\gpt{\circ}\STPTB$ & $\pbigram{\circ}\STDNA$ \\
    \midrule
    \algSMC  & 9.78                    & ---    & 0.111 \\
    \algSMCrb & 7.86                    & 3.23                & 0.107 \\
    \algSWOR                & 0.170                   & 0.395               & 0.062 \\
    \sworB{} ($\rho{=}0.1$) & 0.200                   & 0.199               & 0.062 \\
    \bottomrule
  \end{tabular}
\end{table}

\subsection{Psycholinguistics: Surprisal for Reading-Time Prediction}
\label{sec:psycholing}
Surprisal theory hypothesizes that processing time increases with contextual surprisal, the negative log probability of a linguistic unit in context.
\citet{kiegeland2025proper} test this hypothesis on the MECO eye-tracking corpus \citep{siegelman2022expanding}, computing contextual surprisal from GPT-2 Small through transducers for PTB words, whitespace-delimited words, and characters.
The surprisals come from the per-position beam-summing estimate of \citet{2025transducing}, computed with $\pruneTau$.
Each next-unit probability is estimated by the ratio $\estZ_t/\estZ_{t-1}$.
Pruning lowers both estimates, but it can remove different fractions of their mass, so the ratio and resulting surprisal can err in either direction.\looseness=-1

Beam summing with $\pruneTau$ gives no way to determine how the resulting bias affects these conclusions.
We therefore replace $\pruneTau$ with unbiased SWOR pruning, recompute the PTB-word contextual surprisals, and rerun the corresponding fits.
The resulting estimate of corpus surprisal is $106$ nats lower than the original.
We evaluate first-fixation duration, gaze duration, and total reading time.
These are, respectively, the duration of the first fixation on a word, the sum of its first-pass fixations, and the sum of all its fixations including refixations.
For each measure, $\dll$ is the per-observation improvement in held-out log likelihood from adding contextual surprisal to the shared baseline.\looseness=-1

\begin{table}[tpb]
  \centering
  \small
  \caption{
  Predictive contribution of PTB-word contextual surprisal on MECO.
  Values are $\dll$ in $10^{-3}$ nats. For \sworHalt{}, we use $\rho{=}0.1$, nominal $M{=}8000$, and $10$ seeds.
    $^{*}$\,$\significance < 0.05$; $^{**}$\,$\significance < 0.01$; $^{***}$\,$\significance < 0.001$.
}
  \label{tab:gamm_smc_within}
  \begin{tabular}{@{}lrrr@{}}
    \toprule
    Pruning function  & First fixation  & Gaze duration                    & Total reading time \\
    \midrule
    \sworHalt{} & $0.76^{**}$     & $2.05^{***}$                     & $3.26^{***}$  \\
    $\pruneTau$   & $0.81^{**}$     & $2.13^{***}$                     & $3.24^{***}$  \\
    \bottomrule
  \end{tabular}
\end{table}

For each reading-time measure, we compare fits using contextual surprisal computed by beam summing with either $\pruneTau$ or \sworHalt{}. The model specification, evaluation protocol, and beam-based unigram-surprisal control of \citet{kiegeland2025proper} are unchanged (\cref{tab:gamm_smc_within}; details in \cref{app:meco-anatomy}).
We find no evidence of a predictive difference between the two fits on any reading-time measure (one-sided $\significance\ge 0.068$ in both directions).
Under either estimator, adding contextual surprisal computed over PTB words improves held-out prediction for all three reading-time measures (\cref{tab:gamm_smc_within}).
Thus, replacing $\pruneTau$ with \sworHalt{} in beam summing leaves the published conclusions unchanged.\looseness=-1

\section{Conclusion}

In the beam-summing method of \cite{2025transducing}, the threshold prune is the only approximation, discarding mass to control the computation.
This work replaces that prune with a weight-proportional draw without replacement and Horvitz--Thompson reweighting while retaining the enumeration and the per-position estimate.
\sworB{} uses the current-position running total to set its particle count.
Its roulette rule ensures that the run halts almost surely.
The resulting estimator is unbiased and replaces unknown pruning error with sampling variance.
The method applies to target prefix probability estimation for a pretrained source model composed with a deterministic transformation, providing an unbiased reference for estimating threshold-pruning error.
In the DNA$\to$amino-acid setting, it also makes target prefix probability estimation feasible for longer targets over high-entropy sources.

\section*{Acknowledgements}
VS is supported by DNRF grant P1 and NNF grant NNF26OC0114089.

\bibliographystyle{iclr2027_conference}
\bibliography{example_paper}

\makeatletter
\let\@currlist\@empty  
\makeatother
\clearpage
\appendix
\onecolumn

\let\addcontentsline\realaddcontentsline
\etocdepthtag.toc{appendix}
\etocsettagdepth{mainmatter}{none}
\etocsettagdepth{appendix}{subsection}
\etocsetnexttocdepth{subsection}
\etocsettocstyle
{\section*{Contents of the Appendix}}
{}
\tableofcontents

\newpage

\section{Notation Glossary}
\label{sec:glossary}

\begin{center}
  \small
  \begin{tabular}{l|p{0.62\textwidth}}
    \toprule
    Notation                                            & Gloss                                                                                                                                                                                                                              \\
    \midrule
    $\varepsilon$                                       & Empty string                                                                                                                                                                                                                       \\
    $\eos$                                              & End-of-string symbol                                                                                                                                                                                                               \\
    $\srcSym,\srcSymp \in \srcAlphabet$                 & Symbols in the source alphabet $\srcAlphabet$                                                                                                                                                                                      \\
    $\srcStr,\srcStrp \in \srcStrings$                  & Source strings                                                                                                                                                                                                                      \\
    $\srcStrings$                                       & Set of all source strings                                                                                                                                                                                                          \\
    $\srcStr\srcStrp$                                   & Concatenation                                                                                                                                                                                                                      \\
    $\tgtSym \in \tgtAlphabet$                          & Symbols in the target alphabet $\tgtAlphabet$                                                                                                                                                                                      \\
    $\tgtStr \in \tgtStrings$                           & Target strings, with $\tgtStr_{\le t}$ the length-$t$ target prefix                                                                                                                                                                \\
    $\tgtStrings$                                       & Set of all target strings                                                                                                                                                                                                          \\
    $\transFn\colon\srcStrings\to\tgtStrings$           & Functional string-to-string transformation                                                                                                                                                                                         \\
    $\transST$                                          & Transducer implementing $\transFn$                                                                                                                                                                                                 \\
    \midrule
    $\srcStr\preceq\srcStrp$                            & $\srcStr$ is a prefix of $\srcStrp$                                                                                                                                                                                                \\
    $\basisSet{\srcStr}$                                & Cylinder spanned by $\srcStr$, i.e.\ $\srcStr\srcStrings$                                                                                                                                                                          \\
    $\pr(\srcStringsSub)$                               & Prefix-free operation on $\srcStringsSub$ (\cref{sec:lms})                                                                                                                                                                         \\
    $\preCover(\tgtStr)$                                & Precover of $\tgtStr$: $\{\srcStr:\tgtStr\preceq\transFn(\srcStr)\}$ (\cref{eq:simple-qr})                                                                                                                                         \\
    $\maxQuotient(\tgtStr)$                             & Largest union of cylinders contained in $\preCover(\tgtStr)$                                                                                                                                                                       \\
    $\Quotient(\tgtStr)$                                & Quotient: prefix-free generators of $\maxQuotient(\tgtStr)$                                                                                                                                                                        \\
    $\Remainder(\tgtStr)$                               & Remainder: $\preCover(\tgtStr)\setminus\maxQuotient(\tgtStr)$                                                                                                                                                                      \\
    \midrule
    $\pSource$                                          & Source language model over $\srcStrings$                                                                                                                                                                                           \\
    $\prefixSource(\srcStr)$                            & Source prefix probability                                                                                                                                                                                                          \\
    $\prefixSource(\srcSymp\mid\srcStr)$                & Conditional next-symbol prefix probability                                                                                                                                                                                         \\
    $\prefixSource(\eos\mid\srcStr)$                    & Source stop probability                                                                                                                                                                                                            \\
    $\pTarget$                                          & Transduced (target) language model                                                                                                                                                                                                 \\
    $\prefixTarget(\tgtStr)$                            & Target prefix probability, the normalizing constant $\evZ$ we estimate (\cref{eq:evidence})                                                                                                                                        \\
    \midrule
    $\isMember(\srcStr,\tgtStr)$                        & $\srcStr\in\preCover(\tgtStr)$: the output emitted covers $\tgtStr$                                                                                                                                                                \\
    $\isCylinder(\srcStr,\tgtStr)$                      & $\basisSet{\srcStr}\subseteq\preCover(\tgtStr)$: every extension covers $\tgtStr$                                                                                                                                                  \\
    $\isLive(\srcStr,\tgtStr)$                          & Some extension of $\srcStr$ covers $\tgtStr$                                                                                                                                                                                       \\
    $\liveset{\srcStr}$                                 & Live set: live next source symbols $\{\srcSymp\in\srcAlphabet:\isLive(\srcStr\srcSymp,\tgtStr)\}$                                                                                                                                 \\
    \midrule
    $M$                                                 & Fixed or maximum particle count                                                                                                                                                                                                    \\
    $\{(\srcStr^{(m)},\weight^{(m)})\}_{m=1}^{M}$       & Particle pool: source prefixes and their weights                                                                                                                                                                                   \\
    $\weight^{(m)}$                                     & Particle weight                                                                                                                                                                                                                     \\
    $\weight_0^{(m)}$                                   & Reference weight inherited by particle $m$ under tail roulette (\cref{sec:halting})                                                                                                                                              \\
    $\estZ$                                             & Prefix-probability estimate                                                                                                                                                                                                        \\
    $\twistOptimal(\srcStr),\ \twistOptimal_t(\srcStr)$ & Optimal twist of $\srcStr$, relative to $\tgtStr$ or $\tgtStr_{\le t}$                                                                                                                         \\
    $\liveZ_t(\srcStr)$                                 & Twisted live mass, $\sum_{\srcSymp\in\liveset{\srcStr}}\prefixSource(\srcSymp\mid\srcStr)\twist_t(\srcStr\srcSymp)$                                                                              \\
    $\propZ_t(\srcStr)$                                 & Proposal normalizer, $\indicator{\isMember(\srcStr,\tgtStr_{\le t})}\,\prefixSource(\eos\mid\srcStr)+\sum_{\srcSymp'\in\liveset{\srcStr}}\prefixSource(\srcSymp'\mid\srcStr)\,\twist_t(\srcStr\srcSymp')$   \\
    $\mathrm{ESS},\ \threshold$                         & Effective sample size (\cref{eq:ess}), resample when $\mathrm{ESS}<\threshold M$                                                                                                                                                   \\
    $\pruneThreshold$                                   & Prior work's prune threshold, keep $\ge1-\pruneThreshold$ of the pool's mass                                                                                                                                   \\
    $\accum$                                           & Current-position running total, returned as $\estZ_t$                                                                                                                                   \\
    $\epsilon$                                         & Halting constant  \\
    $\rho$                                              & Scales \sworHalt's threshold $1/c$                                                                                                                                                          \\
    $\kappa$                                            & Tail-roulette threshold relative to $\weight_0$ (\cref{sec:halting})                                                                                                                       \\
    \midrule
    $\cert$                                             & Certainty set: child particles pinned to $\pi_i=1$                                                                                                                                                                                 \\
    $\mathcal{S}$                                       & Sampled child set, of size $M$ or $m$                                                                                                                                                                                              \\
    $\weight^{(i)}/\pi_i$                               & Horvitz--Thompson reweighting                                                                                                                                                                                                      \\
    \bottomrule
  \end{tabular}
\end{center}

\paragraph{Notation conventions.}
\textbf{Color} encodes domain: {\color{TokenColor}magenta} for source-domain objects ($\srcStr$, $\srcAlphabet$, $\preCover$, $\Quotient$, $\Remainder$) and {\color{CharacterColor}olive} for target-domain objects ($\tgtStr$, $\tgtAlphabet$, $\transFn$).
\textbf{Font} encodes type: lowercase italic for symbols ($\srcSym$, $\tgtSym$), bold italic for strings ($\srcStr$, $\tgtStr$), calligraphic for sets and alphabets ($\srcAlphabet$, $\tgtAlphabet$, $\preCover$, $\Quotient$, $\Remainder$), and typewriter for the transducer ($\transST$) and for algorithm and predicate names ($\isMember$, $\isCylinder$, $\isLive$, $\algSMC$, $\algResample$).
The overrightarrow $\prefixLanguageOp{\cdot}$ marks prefix probabilities ($\prefixSource$ is the prefix probability of $\pSource$).

\begin{figure}[ptb]
  \centering
  \small
  \begin{minipage}[t]{0.48\linewidth}
    \begin{minted}[autogobble,xleftmargin=10pt]{python}
def @$\algSMC(\prefixProposal, \potential, M)$@:
  @$\mathcal{R} \gets \emptyset$@
  @$\queue \gets ([\varepsilon, \tfrac{1}{M}])_{m=1}^{M}$@ @\label{line:smcinit}@
  while @$|\queue| > 0$@:
    @$\queuep \gets \emptyset$@
    for @$(\srcStr, \weight) \in \queue$@:
      @$\textbf{\texttt{sample }}\srcSymp \sim \prefixProposal(\cdot \mid \srcStr)$@ @\label{line:smcdraw}@
      @$\weight \gets \weight \cdot \prefixSource(\srcSymp \mid \srcStr)\,/\,\prefixProposal(\srcSymp \mid \srcStr)$@ @\label{line:smcweight}@
      if @$\srcSymp = \eos$@:
        @$\mathcal{R}$@.add(@$(\srcStr,\ \weight \cdot \potential(\srcStr))$@) @\label{line:smceos}@
      else:
        @$\queuep$@.add(@$(\srcStr\srcSymp,\ \weight)$@)
    @$\queue \gets \algResample(\queuep)$@
  return @$\sum_{(\srcStr, \weight) \in \mathcal{R}} \weight$@ @\label{line:smcreturn}@
\end{minted}
  \end{minipage}
  \begin{minipage}[t]{0.45\linewidth}
    \begin{minted}[autogobble,xleftmargin=10pt]{python}
def @$\algResample(\queue)$@:
  if @$|\queue| = 0$@: return @$\emptyset$@
  if @$\mathrm{ESS}(\queue) \ge \threshold\,|\queue|$@: @\label{line:smcesstest}@
    return @$\queue$@
  @$j_1, \ldots, j_{|\queue|} \simIID \mathrm{Categorical}\bigl(\weight^{(i)}/\textstyle\sum_i \weight^{(i)}\bigr)$@ @\label{line:smcredraw}@
  return @$\{(\srcStr^{(j_k)},\ \textstyle\sum_i \weight^{(i)}/|\queue|)\}_{k=1}^{|\queue|}$@ @\label{line:smcreset}@
\end{minted}
  \end{minipage}
  \caption{A minimal \textbf{sequential Monte Carlo} algorithm for estimating $\evZ$ (\cref{eq:evidence}).
    $M$ particles extend one symbol at a time under the proposal, with the incremental weights of \cref{eq:sis-weight-update}, finish with the potential factor at $\eos$, and hand uneven weights to \algResample{}. \algResample{} is an ESS-gated multinomial resample (\cref{eq:ess}) that samples with replacement and resets weights to their mean.
  }
  \label{fig:smc}
\end{figure}

\section{Sequential Monte Carlo over the Precover}
\label{app:smc}

This appendix first gives the proofs deferred from the main text (\cref{sec:deferred-swor-proofs,sec:sworb-proof}).
It then develops the with-replacement SMC algorithms for estimating the total mass of the precover, including the generic estimator and its resampling (\cref{sec:resampling}), the intermediate targets and the \algSMCrb{} algorithm (\cref{sec:intermediate-targets}), and the unbiasedness of \algSMCrb{} (\cref{sec:smc-single-rb}).
Finally, it gives the common tail-roulette prune for the fixed-$M$ methods (\cref{sec:halting}).

\subsection{Proofs for the SWOR Estimator}
\label{sec:deferred-swor-proofs}

\lemTailEvidence*
\begin{proof}
  Suppose first that $\prefixSource(\srcStr)>0$.
  If $\isCylinder(\srcStr,\tgtStr)$ holds, every completion of
  $\srcStr$ covers $\tgtStr$, so $\twistOptimal(\srcStr)=1$.
  Otherwise, condition on whether the completion stops immediately with
  $\eos$ or begins with a source symbol $\srcSymp$.
  The stop event covers $\tgtStr$ exactly when
  $\isMember(\srcStr,\tgtStr)$ holds, and after drawing $\srcSymp$ the
  conditional probability of covering $\tgtStr$ is
  $\twistOptimal(\srcStr\srcSymp)$.
  A non-live child has optimal twist zero, so the sum may be restricted
  to $\liveset{\srcStr}$, giving \cref{eq:tail-acceptance}.

  When $\prefixSource(\srcStr)=0$, the recursion follows from the
  conventions of \cref{sec:lms} and the definition of $\twistOptimal$ when $\prefixSource(\srcStr)=0$.
  If $\srcStr$ is a cylinder, then it is a member and both sides of
  \cref{eq:tail-acceptance} equal one.
  Otherwise, both sides equal
  $\indicator{\isMember(\srcStr,\tgtStr)}$.

  It remains to prove \cref{eq:tail-subtree}.
  For $\prefixSource(\srcStr)>0$, the definition of the optimal twist gives
  \begin{align}
    \prefixSource(\srcStr)\,\twistOptimal(\srcStr)
     & =
    \prefixSource(\srcStr)
    \sum_{\srcStrpp\in\srcStrings}
    \frac{\pSource(\srcStr\srcStrpp)}
    {\prefixSource(\srcStr)}
    \indicator{\isMember(\srcStr\srcStrpp,\tgtStr)}
    =
    \sum_{\substack{\srcStrp\in\preCover(\tgtStr) \\
        \srcStr\preceq\srcStrp}}
    \pSource(\srcStrp).
  \end{align}
  If $\prefixSource(\srcStr)=0$, every extension
  $\srcStrp\succeq\srcStr$ has $\pSource(\srcStrp)=0$, so both sides of
  \cref{eq:tail-subtree} are zero.

  Finally, setting $\srcStr=\varepsilon$ in
  \cref{eq:tail-subtree} gives
  \begin{equation}
    \twistOptimal(\varepsilon)
    =
    \sum_{\srcStrp\in\preCover(\tgtStr)}
    \pSource(\srcStrp)
    =
    \prefixTarget(\tgtStr),
  \end{equation}
  by \cref{eq:tlm-prefix-prob}.
\end{proof}

\lemPrunedProper*
\begin{proof}
  We induct on $N=|\tgtStr|$.
  For each $N$, we first establish the claim for the initial pools at $i=1$, then induct on the iterations within that call.

  \emph{Initial pools.} For $N=0$, $\tgtStr=\varepsilon$, both runs start with aggregate weight one at the empty prefix and zero at every other prefix.
  Now let $N>0$ and assume the result for the recursive call on $\tgtStr_{<N}$.
  The pool $\queue_1$ is exactly the quotient returned by the recursive call on $\tgtStr_{<N}$, while $\queueBar_1$ contains the corresponding prune-free quotient weights.
  Fix a source prefix $\srcStr$. Because the starting pool is prefix-free, $\srcStr$ can extend at most one of its entries.
  Since each iteration appends one source symbol, $\srcStr$ can be added to the call's quotient at most once.  
  The outer induction hypothesis therefore implies that that quotient has the same expected weight at every source prefix as its prune-free counterpart, so
  \begin{equation}
    \expected{}{\weight_{\queue_1}(\srcStr)} = \weight_{\queueBar_1}(\srcStr).
  \end{equation}
  Together with the case $N=0$, this establishes the within-call induction base $i=1$ for every $N$.

  \emph{Within-call induction.} Fix $N$ and assume the identity holds at iteration $i$ for every source prefix.
  The full extension step obeys
  \begin{equation}
    \label{eq:extension}
    \weight_{\queue'_i}(\srcStr\srcSymp)
    =
    \weight_{\queue_i}(\srcStr)\,
    \idfunc\{\neg\isCylinder(\srcStr,\tgtStr)\}\,
    \prefixSource(\srcSymp\mid\srcStr)\,
    \idfunc\{\twist(\srcStr\srcSymp)>0\}.
  \end{equation}
  A cylinder prefix produces no children, while every other particle produces one child for each symbol admitted by the twist.
  For every prefix $\srcStr$ and symbol $\srcSymp$, we therefore have
  \begin{subequations}
    \begin{align}
      \expected{}{\weight_{\queue_{i+1}}(\srcStr\srcSymp)}
       & =
      \expected{}{\weight_{\queue'_i}(\srcStr\srcSymp)}
      \justification{condition on the prune inputs and apply \cref{eq:prune-preserve}} \\
       & =
      \expected{}{\weight_{\queue_i}(\srcStr)}\,
      \idfunc\{\neg\isCylinder(\srcStr,\tgtStr)\}\,
      \prefixSource(\srcSymp\mid\srcStr)\,
      \idfunc\{\twist(\srcStr\srcSymp)>0\}
      \justification{\cref{eq:extension}}                                              \\
       & =
      \weight_{\queueBar_i}(\srcStr)\,
      \idfunc\{\neg\isCylinder(\srcStr,\tgtStr)\}\,
      \prefixSource(\srcSymp\mid\srcStr)\,
      \idfunc\{\twist(\srcStr\srcSymp)>0\}
      \justification{induction hypothesis}                                             \\
       & =
      \weight_{\queueBar'_i}(\srcStr\srcSymp)
      \justification{\cref{eq:extension} in the reference run}                         \\
       & =
      \weight_{\queueBar_{i+1}}(\srcStr\srcSymp).
      \justification{$\queueBar_{i+1}=\queueBar'_i$ under the identity prune}
    \end{align}
  \end{subequations}
  Since each iteration appends one source symbol, every prefix in either next pool is nonempty and has the form $\srcStr\srcSymp$. This proves the claim at iteration $i+1$ for every source prefix.
  Thus the within-call induction proves the claim at every iteration, completing the outer induction on $N$.
\end{proof}

\thmSworUnbiased*
\begin{proof}
  By hypothesis (2), every recursive call returns almost surely, so the contributions $V_t$ below and the final estimator $\estZ_{|\tgtStr|}$ are well defined.
  For each $1\le t\le |\tgtStr|$ and each source prefix $\srcStr$, let $\VBar_t(\srcStr)$ be the contribution assigned to $\srcStr$ by the deterministic prune-free reference enumeration at position $t$, as in \cref{eq:V-def}.
  The twist only gates the extension step in \algBeam{}.
  By definition it gates off no child source prefix with positive covering mass and does not alter particle weights.
  The exactness of prune-free \algBeam{} (\cref{sec:two-algorithms}) therefore gives $\VBar_t(\srcStr)=\prefixSource(\srcStr)$ for $\srcStr\in\Quotient(\tgtStr_{\le t})$, while $\VBar_t(\srcStr)\prefixSource(\eos\mid\srcStr)=\pSource(\srcStr)$ for $\srcStr\in\Remainder(\tgtStr_{\le t})$.

  We first show that pruning preserves these contributions in expectation, $\expected{}{V_t(\srcStr)}=\VBar_t(\srcStr)$ for every $1\le t\le |\tgtStr|$ and every $\srcStr\in\Quotient(\tgtStr_{\le t})\sqcup\Remainder(\tgtStr_{\le t})$, by induction on $t$.
  Fix such a position $t$ and an element $\srcStr$ of its decomposition.
  Since $\preCover(\tgtStr_{\le t})\subseteq\preCover(\tgtStr_{\le t-1})$, the element also belongs to the preceding precover.
  The preceding quotient--remainder split gives two alternatives.
  Either $\srcStr$ extends a unique $\srcStrp\in\Quotient(\tgtStr_{\le t-1})$, or $\srcStr\in\Remainder(\tgtStr_{\le t-1})$ (\cref{sec:tlms}).
  When $t=1$, $\Quotient(\varepsilon)=\{\varepsilon\}$ and $\Remainder(\varepsilon)=\emptyset$, so only the quotient alternative occurs.
  When $t>1$, assume the identity holds at position $t-1$.
  If $\srcStr$ is a remainder element at position $t$ and $\prefixSource(\eos\mid\srcStr)=0$, the convention in \cref{eq:V-def} gives $V_t(\srcStr)=\VBar_t(\srcStr)=0$.
  Thus, if $\srcStr$ is a remainder element, only the case $\prefixSource(\eos\mid\srcStr)>0$ remains.

  \emph{Extended from the quotient.} Let $\srcStrp$ denote the quotient prefix in the first alternative.
  The call at $t$ places the preceding quotient entries at $\srcStrp$ in $\queue_1$ and appends one source symbol per iteration. Thus any entries at $\srcStr$ can occur only at iteration $i=|\srcStr|-|\srcStrp|+1$.
  At that iteration, \cref{eq:V-def} identifies the contribution assigned to $\srcStr$ with its aggregate pool weight, so $V_t(\srcStr)=\weight_{\queue_i}(\srcStr)$ and $\VBar_t(\srcStr)=\weight_{\queueBar_i}(\srcStr)$.
  Applying \cref{lem:pruned-proper} gives
  \begin{equation*}
    \expected{}{V_t(\srcStr)}
    =\expected{}{\weight_{\queue_i}(\srcStr)}
    =\weight_{\queueBar_i}(\srcStr)
    =\VBar_t(\srcStr).
  \end{equation*}

  \emph{Retained from the remainder.} In the second alternative, the target re-check passes every entry at $\srcStr$ unchanged because $\srcStr\in\preCover(\tgtStr_{\le t})$ (\cref{line:targetstep}).
  Hence $V_t(\srcStr)=V_{t-1}(\srcStr)$ and $\VBar_t(\srcStr)=\VBar_{t-1}(\srcStr)$.
  The induction hypothesis therefore gives
  \begin{equation*}
    \expected{}{V_t(\srcStr)}
    =\expected{}{V_{t-1}(\srcStr)}
    =\VBar_{t-1}(\srcStr)
    =\VBar_t(\srcStr).
  \end{equation*}

  The zero-EOS-probability observation and the two alternatives complete the induction. Set $T=|\tgtStr|$. Then
  \begin{subequations}
    \begin{align}
      \mathbb{E}[\estZ_T]
       & =\!\!\sum_{\srcStr\in\Remainder(\tgtStr)}\!\!\mathbb{E}[V_T(\srcStr)]\,\prefixSource(\eos\mid\srcStr)\;+\!\!\sum_{\srcStr\in\Quotient(\tgtStr)}\!\!\mathbb{E}[V_T(\srcStr)]
      \justification{\cref{eq:pos-estimate} and Tonelli}
      \label{eq:assembly-swap}                                                                                                                                          \\
       & =\!\!\sum_{\srcStr\in\Remainder(\tgtStr)}\!\!\VBar_T(\srcStr)\,\prefixSource(\eos\mid\srcStr)\;+\!\!\sum_{\srcStr\in\Quotient(\tgtStr)}\!\!\VBar_T(\srcStr)
      \justification{the preservation above}                                                                                                                                         \\
       & =\!\!\sum_{\srcStr\in\Remainder(\tgtStr)}\!\!\pSource(\srcStr)\;+\!\!\sum_{\srcStr\in\Quotient(\tgtStr)}\!\!\prefixSource(\srcStr)
      \justification{the prune-free identities above}                                                                                                                                \\
       & =\prefixTarget(\tgtStr).
      \justification{\cref{eq:simple-qr}}
      \label{eq:assembly-means}
    \end{align}
  \end{subequations}
\end{proof}

\subsection{Proof for Adaptive-Budget SWOR}
\label{sec:sworb-proof}
By \cref{thm:swor-unbiased}, it is enough to show that every call to the adaptive-budget prune preserves the expected weight at each source prefix (\cref{eq:prune-preserve}) and that the full run halts almost surely.

\propSworb*
\begin{proof}
  (I) To establish \cref{eq:prune-preserve}, consider a single prune call with $\queuep$ and $\accum_{\epsilon,t}$ fixed.
  For particle $i$, write $\mathcal W^{(i)}=\weight^{(i)}\twist(\srcStr^{(i)})$ and $\mathcal I_+=\{i:\mathcal W^{(i)}>0\}$, as in \cref{fig:swor}.
  These inputs determine $W$, $\mu$, and the rounded value $m$ before the prune draws any randomness.
  So they determine which of the three cases below applies, even though the roulette case may subsequently change $m$ from zero to one.
  We consider the following three cases.
  \begin{enumerate}[label=(\arabic*),leftmargin=*]
    \item Suppose $W=0$.
    The extension step ensures $\twist(\srcStr^{(i)})>0$ for every particle in $\queuep$ (\cref{line:enumAlphabet}).
    Since $W=\sum_i\mathcal W^{(i)}=0$ and $\mathcal W^{(i)}\ge0$, every $\mathcal W^{(i)}=0$.
    Because $\twist(\srcStr^{(i)})>0$, this implies $\weight^{(i)}=0$ for every particle.
    Hence $\weight_{\queuep}(\srcStr)=\weight_{\emptyset}(\srcStr)=0$ for every $\srcStr$, which establishes \cref{eq:prune-preserve}.

    \item Suppose $W>0$ and $m\ge1$.
    We split according to whether $|\mathcal I_+|\le m$ or $|\mathcal I_+|>m$.
    If $|\mathcal I_+|\le m$, the prune keeps all particles in $\mathcal I_+$ unchanged.
    For every $i\notin\mathcal I_+$, $\mathcal W^{(i)}=0$.
    Since $\twist(\srcStr^{(i)})>0$, this implies $\weight^{(i)}=0$.
    Removing those particles therefore leaves the weight at every source prefix unchanged, so \cref{eq:prune-preserve} holds.
    If $|\mathcal I_+|>m$, the prune must select exactly $m$ particles.
    It computes their inclusion probabilities from the twisted weights with \algPPS{}, draws the particles, and applies Horvitz--Thompson reweighting.
    Here $\pi_i>0$ for every $i\in\mathcal I_+$.
    Taking $h^{(i)}=\indicator{\srcStr^{(i)}=\srcStr}$ in \cref{eq:ht} gives
    \begin{equation*}
      \mathbb{E}\left[\weight_{\widetilde{\queue}}(\srcStr)\right]
      =\weight_{\queuep}(\srcStr),
    \end{equation*}
    which is \cref{eq:prune-preserve}.

    \item Suppose $W>0$ and $m=0$.
    Because $W>0$, the definition of $\mu$ in \cref{eq:sworb-budget} gives $\mu>0$.
    Because $m=0$, the rounding rule in the same equation gives $\mu<\tfrac12$.
    Thus the division by $\mu$ below is well defined.
    Returning the empty pool deterministically would discard positive weight and violate \cref{eq:prune-preserve}, so the prune instead uses this same $\mu$ as its roulette survival probability.
    It draws $B\sim\mathrm{Bernoulli}(\mu)$ and returns the empty pool if $B=0$.
    If $B=1$, it sets $m=1$ and replaces every $\weight^{(i)}$ by $\weight^{(i)}/\mu$ before performing a one-particle SWOR prune.
    The resulting twisted weights are $\mathcal W^{(i)}/\mu$.
    This common rescaling leaves $\mathcal I_+$ unchanged and, if \algPPS{} is called, leaves its inclusion probabilities $\pi_i$ unchanged.
    The $m\ge1$ case above therefore gives
    \begin{equation*}
      \mathbb{E}\left[\weight_{\widetilde{\queue}}(\srcStr)\mid B=1\right]
      =\frac{\weight_{\queuep}(\srcStr)}{\mu}.
    \end{equation*}
    Averaging over $B$ gives
    \begin{align*}
      \mathbb{E}\left[\weight_{\widetilde{\queue}}(\srcStr)\right]
       &=\mu\,\frac{\weight_{\queuep}(\srcStr)}{\mu}+(1-\mu)0  =\weight_{\queuep}(\srcStr).
    \end{align*}
    Thus \cref{eq:prune-preserve} also holds in the third case.
  \end{enumerate}
  Together, the three cases establish (I).
  Multiplying \cref{eq:prune-preserve} by $\twist(\srcStr)$ and summing over source prefixes gives
  \begin{equation*}
    \mathbb{E}\left[\sum_{\srcStr}\weight_{\widetilde{\queue}}(\srcStr)\twist(\srcStr)\right]
    =\sum_{\srcStr}\weight_{\queuep}(\srcStr)\twist(\srcStr)
    =W.
  \end{equation*}
  This is the form needed in (II).

  (II) We prove by induction on target-prefix length that every call made by \sworB{} halts almost surely.
  The empty-prefix call halts after one iteration.
  Fix a nonempty target prefix $\tgtStr$, let $t=|\tgtStr|$, and suppose the recursive call for $\tgtStr_{<t}$ has returned.
  Its quotient is a finite starting pool, which we write as
  \begin{equation*}
    \queue_1
    =
    \{(\srcStr^{(j)},\weight^{(j)})\}_{j=1}^{J}.
  \end{equation*}
  It is enough to prove halting conditional on this pool and on the preceding estimate $\estZ_{t-1}$, so we hold both fixed below.

  If $\queue_1$ has no positive weight, the call halts immediately when the pool is empty or no later than its first prune by case (1) of (I).
  We may therefore assume that $\queue_1$ contains a positive weight.
  When $t=1$, $\estZ_0=1$.
  When $t>1$, every positive entry in $\queue_1$ was added to the quotient of the preceding call, and its weight was included in $\estZ_{t-1}$.
  Hence $\estZ_{t-1}>0$ and, throughout the current call,
  \begin{equation}
    \label{eq:sworb-positive-accum}
    \accum_{\epsilon,t}
    \ge
    \epsilon\estZ_{t-1}
    >0.
  \end{equation}

  For $n\ge1$, when the $n^{\text{th}}$ prune is reached, let $\queue'_n$ be the child pool passed to it and define
  $W_n\defeq  \sum_{\srcStr}  \weight_{\queue'_n}(\srcStr)\twist(\srcStr)$.
  If the current target-prefix call halts before this prune, define $W_n=0$.
  Let $\mu_n$ and $m_n$ denote the capped count and its rounded value before roulette at the $n^{\text{th}}$ prune, defining both to be zero if that prune is not reached or if $W_n=0$.

  Part (I) preserves the expected weight at every source prefix through each adaptive prune.
  By (I) and iterated expectation, a length-$n$ descendant $\srcStr^{(j)}\srcStrp$ has expected weight at most $\weight^{(j)}\prefixSource(\srcStrp\mid\srcStr^{(j)})$.
  Summing over the live descendants gives
  \begin{align*}
    \mathbb{E}\left[W_n\right]
    &\le
    \sum_{j=1}^{J}
    \weight^{(j)}
    \sum_{\srcStrp\in\srcAlphabet^n}
    \prefixSource(\srcStrp\mid\srcStr^{(j)})
    \xrightarrow{n\to\infty}0.
  \end{align*}
  The inner sum is the conditional probability of generating at least $n$ additional source symbols from $\srcStr^{(j)}$.
  It tends to zero because $\pSource$ is a distribution over finite strings (\cref{sec:lms}), and the outer sum has finitely many terms.

  At the $n^{\text{th}}$ prune, write $\mathcal W_n^{(i)}$ for its twisted particle weights, so $W_n=\sum_i\mathcal W_n^{(i)}$.
  Then \cref{eq:sworb-budget,eq:sworb-positive-accum} and $\min(1,x)\le x$ give
  \begin{align*}
    \mu_n
    &=
    \sum_i
    \min\left(
      1,
      \frac{M\mathcal W_n^{(i)}}{\rho\,(\accum_{\epsilon,t}+W_n)}
    \right) \le
    \frac{M}{\rho\,(\accum_{\epsilon,t}+W_n)}
    \sum_i\mathcal W_n^{(i)}  =
    \frac{M W_n}{\rho\,(\accum_{\epsilon,t}+W_n)}  \le
    \frac{M W_n}{\rho\,\epsilon\estZ_{t-1}}.
  \end{align*}
  If this prune is not reached, the last inequality still holds because $\mu_n=W_n=0$ by definition.
  Taking expectations therefore gives
  \begin{equation*}
    \mathbb{E}\left[\mu_n\right]
    \le
    \frac{M}{\rho\,\epsilon\estZ_{t-1}}
    \mathbb{E}\left[W_n\right]
    \xrightarrow{n\to\infty}0.
  \end{equation*}
  The rounding rule in \cref{eq:sworb-budget} gives $m_n>0$ exactly when $\mu_n\ge\tfrac12$.
  Markov's inequality now gives
  \begin{equation}
    \label{eq:sworb-positive-count}
    \Pr{m_n>0}
    =
    \Pr{\mu_n\ge\tfrac12}
    \le
    2\mathbb{E}\left[\mu_n\right]
    \xrightarrow{n\to\infty}0.
  \end{equation}

  Let $E_n$ be the event that the call survives the $n^{\text{th}}$ prune.
  Write $p_n\defeq\Pr{E_n}$ and $p_0=1$.
  The events $E_n$ are decreasing, so $p_n$ converges to some $p\ge0$.
  If $m_n=0$ and $W_n>0$, roulette lets the pool survive with probability $\mu_n<\tfrac12$.
  If $W_n=0$, the prune returns the empty pool.
  Conditioning on the state before the prune therefore gives
  \begin{equation*}
    p_n
    \le
    \Pr{m_n>0}
    +\tfrac12 p_{n-1}.
  \end{equation*}
  Taking $n\to\infty$ and using \cref{eq:sworb-positive-count} gives $p\le\tfrac12p$, so $p=0$.
  The events $E_n$ are decreasing, and their intersection is the event that the call never halts.
  Therefore
  \begin{equation*}
    \Pr{\bigcap_{n\ge1}E_n}
    =
    \lim_{n\to\infty}p_n
    =0.
  \end{equation*}
  Thus the current call halts almost surely for every realized finite starting pool.
  The induction proves that the full run halts almost surely.
  Applying \cref{thm:swor-unbiased} to each target prefix using (I) and (II) gives the final claim.\looseness=-1
\end{proof}

\subsection{Sequential Monte Carlo}
\label{sec:resampling}
The sum we wish to estimate (\cref{eq:tlm-prefix-prob}) is over source strings filtered by a membership test in the precover $\preCover(\tgtStr)$.
A \defn{potential} is a function $\potential\colon\srcStrings\to\mathbb{R}_{\ge0}$ that weights each source string \citep{smcbook}, and $\evZ$ is its expectation under the source model,
\begin{equation}\label{eq:evidence}
  \evZ \defeq \sum_{\srcStr \in \srcStrings} \pSource(\srcStr) \potential(\srcStr)\,.
\end{equation}
For a transduced model, we take the potential to be the predicate \isMember{} (\cref{sec:tlms}), $\potential(\srcStr)=\isMember(\srcStr,\tgtStr)$, for a fixed $\tgtStr$.
The estimand $\evZ$ is then the probability that a string drawn from the source model maps to a target string having $\tgtStr$ as a prefix, i.e., $\evZ=\prefixTarget(\tgtStr)$.

The algorithm estimating $\evZ$ with SMC is shown in \cref{fig:smc}, and we now describe it.
The pool $\queue$ holds $M$ \defn{particles}, each consisting of a source prefix and a particle weight, initialized at the empty string with weight $1/M$ (\cref{line:smcinit}).
At every iteration, a particle draws its next symbol by \defn{sequential importance sampling} \citep{smcbook}, drawing from a \defn{conditional proposal} distribution $\prefixProposal(\cdot\mid\srcStr)$ (\cref{line:smcdraw}) and correcting for it by multiplying its weight by the ratio of source to proposal conditionals (\cref{line:smcweight}),
\begin{equation}\label{eq:sis-weight-update}
  \weight \gets \weight \cdot \frac{\prefixSource(\srcSymp \mid \srcStr)}{\prefixProposal(\srcSymp \mid \srcStr)}.
\end{equation}
A particle that draws $\eos$ is \defn{completed}: it leaves the pool, and its weight, multiplied by the potential, is added to the estimate $\estZ$ (\cref{line:smceos,line:smcreturn}).
Assume for a moment that we do not call \algResample.
Then, for a single particle, the factors of \cref{eq:sis-weight-update}, the potential, and the initial $1/M$ multiply to $\tfrac{1}{M}\weight(\srcStr)$, with the \defn{importance weight}
\begin{equation}
  \label{eq:is-weight}
  \weight(\srcStr)\defeq\potential(\srcStr)
  \frac{\prefixSource(\eos\mid\srcStr)}{\prefixProposal(\eos\mid\srcStr)}
  \prod_{s=1}^{|\srcStr|}
  \frac{\prefixSource(\srcStr_s\mid\srcStr_{<s})}
  {\prefixProposal(\srcStr_s\mid\srcStr_{<s})}
  =\frac{\potential(\srcStr)\pSource(\srcStr)}{\proposal(\srcStr)}.
\end{equation}
Here $\proposal(\srcStr)\defeq\prefixProposal(\eos\mid\srcStr)\prod_{s=1}^{|\srcStr|}\prefixProposal(\srcStr_s\mid\srcStr_{<s})$.
We further require that
\begin{equation}\label{eq:support}
  \potential(\srcStr)\,\pSource(\srcStr)>0 \implies \proposal(\srcStr)>0
  \qquad\text{for every }\srcStr\in\srcStrings,
\end{equation}
to ensure that every string with positive potential--weighted mass can be sampled.

Without the \algResample{} call the particles are independent and the estimate is the average $\estZ=\frac1M\sum_{m=1}^{M}\weight(\srcStr^{(m)})$ of $M$ draws of \cref{eq:is-weight}.

Without resampling, the particle weights can become increasingly unequal as source prefixes grow, until a single particle dominates \citep{smcbook}.
The \defn{effective sample size}
\begin{equation}
  \label{eq:ess}
  \text{ESS} = \frac{\left(\sum_{m=1}^{|\queue|} \weight^{(m)}\right)^2}{\sum_{m=1}^{|\queue|}\left(\weight^{(m)}\right)^2}
\end{equation}
quantifies that imbalance.
When it falls below $\threshold |\queue|$ (\cref{line:smcesstest}), for a fixed $0<\threshold<1$, \algResample{} redraws the pool with replacement proportionally to weight and resets every weight to the pool mean (\cref{line:smcredraw,line:smcreset}), so the higher-weighted particles are duplicated and the lower ones discarded.
When the support condition \cref{eq:support} holds and the run halts almost surely, the estimator is unbiased without resampling. Since multinomial resampling with replacement preserves weighted sums in conditional expectation, \algSMC{} remains unbiased.

\subsection{Intermediate Targets and the \algSMCrb{} Algorithm}
\label{sec:intermediate-targets}
In \cref{sec:resampling}, particles are extended until $\eos$ is sampled and only then is the potential evaluated.
\algBeam{} instead tests prefixes as they are extended and drops a prefix once it can no longer cover the target.
\algSMCrb{} applies SMC within \algBeam{}'s target recursion.
Like \algBeam{}, \algSMCrb{} applies the \isCylinder{} and \isMember{} checks to add quotient and remainder contributions.
It samples one live next source symbol per particle instead of enumerating all of them.
The recursion iterates over target prefixes while each call extends source prefixes.
To express these updates as importance sampling, associate each target position $t$ with the \defn{intermediate target}
\begin{equation}
  \interTarget_{t}(\srcStr)\defeq \prefixSource(\srcStr)\cdot \twist_{t}(\srcStr).
\end{equation}

As in \cref{sec:two-algorithms}, we rely on the live twist $\twist_t(\srcStr,\tgtStr)\defeq\isLive(\srcStr,\tgtStr_{\le t})$, written $\twist_t(\srcStr)$ when $\tgtStr$ is clear.
Sequential importance sampling requires each intermediate target to place mass only where the preceding target does \citep{delmoral2006smc,naesseth2019elements}.
This holds because a prefix that can still cover $\tgtStr_{\le t}$ can also cover $\tgtStr_{<t}$.
\begin{figure}[t]
  \centering
  \begin{tikzpicture}[>=Stealth,
      srcbox/.style={draw=ETHGray!70, fill=ETHGray!8, minimum width=7.5mm, minimum height=5.5mm, inner sep=1pt},
      tgtbox/.style={draw=slatelake!80, fill=slatelake!12, minimum width=7.5mm, minimum height=5.5mm, inner sep=1pt},
      emit/.style={->, slatelake, line width=0.8pt},
      slbl/.style={font=\footnotesize\color{ETHGray}},
      live/.style={circle, draw=jademist!90, inner sep=0pt, minimum size=3.4mm},
      dead/.style={ETHGray!70, dashed, line width=0.5pt}]
    \node[slbl, anchor=west] at (-0.7,2.9) {(a)};
    \node[srcbox] (s1) at (0,2.2)   {$\srcSym_1$};
    \node[srcbox] (s2) at (1.4,2.2) {$\srcSym_2$};
    \node[srcbox] (s3) at (2.8,2.2) {$\srcSym_3$};
    \node[srcbox] (s4) at (4.2,2.2) {$\srcSym_4$};
    \node[srcbox, dashed] (se) at (5.8,2.2) {$\eos$};
    \draw[->, ETHGray!80, dashed] (s4.east) -- (se.west);
    \node[slbl] (e1) at (0,0.6)   {$\varepsilon$};
    \node[tgtbox] (t1) at (1.4,0.6) {$\tgtSym_1$};
    \node[slbl] (e2) at (2.8,0.6) {$\varepsilon$};
    \node[tgtbox] (t2) at (4.2,0.6) {$\tgtSym_2$};
    \node[tgtbox] (t3) at (5.6,0.6) {$\tgtSym_3$};
    \draw[->, slatelake!45, line width=0.8pt] (s1.south) -- (e1.north);
    \draw[emit] (s2.south) -- (t1.north);
    \draw[->, slatelake!45, line width=0.8pt] (s3.south) -- (e2.north);
    \draw[emit] (s4.south) -- (t2.north);
    \draw[emit] (s4.south) -- (t3.north);
    \draw[->, ETHGray!80] (-0.55,1.75) -- (-0.55,1.05);
    \node[slbl, anchor=east] at (-0.62,1.4) {$\transFn$};
    \foreach \x in {2.1, 4.9, 6.3} {
        \draw[ETHGray!60, dashed] (\x,0.15) -- (\x,1.15);
      }
    \begin{scope}[xshift=8.8cm, yshift=1.6cm]
      \node[slbl, anchor=west] at (-0.7,1.5) {(b)};
      \node[live, fill=jademist!60] (r)  at (0,0)      {};
      \node[live, fill=jademist!45] (a1) at (1.3,0.75)  {};
      \node[live, fill=jademist!18] (a2) at (2.6,1.05)  {};
      \node[live, fill=jademist!6]  (a3) at (3.9,1.2)   {};
      \draw[jademist, line width=2.4pt] (r) -- (a1);
      \draw[jademist, line width=2.2pt] (a1) -- (a2);
      \draw[jademist, line width=2.0pt] (a2) -- (a3);
      \node[live, fill=jademist!75] (b1) at (1.3,-0.75) {};
      \node[live, fill=jademist!85] (b2) at (2.6,-1.0)  {};
      \node[live, fill=jademist!95] (b3) at (3.9,-1.15) {};
      \draw[jademist, line width=0.8pt] (r) -- (b1);
      \draw[jademist, line width=0.8pt] (b1) -- (b2);
      \draw[jademist, line width=0.8pt] (b2) -- (b3);
      \draw[dead] (r) -- (1.2,0);   \node[ETHGray, font=\footnotesize] at (1.32,0) {$\times$};
      \draw[dead] (a1) -- (2.5,0.45); \node[ETHGray, font=\footnotesize] at (2.62,0.45) {$\times$};
      \draw[dead] (b1) -- (2.5,-0.4); \node[ETHGray, font=\footnotesize] at (2.62,-0.4) {$\times$};
    \end{scope}
    \draw[jademist, line width=2.4pt] (0,-0.75) -- (0.55,-0.75);
    \node[slbl, anchor=west] at (0.65,-0.75) {source mass $\prefixSource(\srcSymp\mid\srcStr)$};
    \node[live, fill=jademist!12] at (4.6,-0.75) {};
    \node[live, fill=jademist!85] at (5.05,-0.75) {};
    \node[slbl, anchor=west] at (5.3,-0.75) {mass ahead $\twistOptimal_t$ (\cref{eq:optimal-twist})};
    \draw[dead] (9.4,-0.75) -- (9.9,-0.75);
    \node[ETHGray, font=\footnotesize] at (10.05,-0.75) {$\times$};
    \node[slbl, anchor=west] at (10.25,-0.75) {not live};
  \end{tikzpicture}
  \caption{(a)~An example source string aligned to a target prefix under a function $\transFn$, the dashed lines marking the boundaries between target positions.
    A symbol may emit none or several target symbols, so the source spans between boundaries vary in length while the alignment stays monotone.
    At each step a \emph{source} symbol is scanned, and at each boundary a \emph{target} symbol is emitted.
    (b)~When $\twist=\isLive$ we sample by source mass (edge width) and never enter a dead branch ($\times$), so no draw is wasted. Since we only look at the current live symbols, we cannot tell a heavy branch with little mass ahead (top) from a light branch that carries it (bottom).
    Importance weighting corrects for the proposal, while resampling with replacement mitigates particle-weight degeneracy (\cref{sec:resampling,sec:smc-single-rb}).}
  \label{fig:alignment}
\end{figure}

Recall the live set $\liveset{\srcStr}$ of next source symbols (\cref{sec:two-algorithms}), taken here relative to $\tgtStr_{\le t}$, and define the \defn{twisted live mass}
\begin{equation}
  \liveZ_t(\srcStr)
  \defeq
  \sum_{\srcSymp'\in\liveset{\srcStr}}
  \prefixSource(\srcSymp'\mid\srcStr)\,\twist_t(\srcStr\srcSymp').
\end{equation}
The proposal distribution draws the next symbol with probability proportional to $\prefixSource(\srcSymp\mid\srcStr)\,\twist_{t}(\srcStr\srcSymp)$,
\begin{equation}
  \label{eq:proposal-tlm}
  \begin{split}
    &\prefixProposalt(\srcSymp\mid\srcStr)
    =\frac{\prefixSource(\srcSymp\mid\srcStr)\,\twist_{t}(\srcStr\srcSymp)}{\propZ_t(\srcStr)},
    \qquad
    \prefixProposalt(\eos\mid\srcStr)
    =\frac{\indicator{\isMember(\srcStr,\tgtStr_{\le t})}\,\prefixSource(\eos\mid\srcStr)}{\propZ_t(\srcStr)},
    \\
    &\propZ_t(\srcStr)\defeq\indicator{\isMember(\srcStr,\tgtStr_{\le t})}\,\prefixSource(\eos\mid\srcStr)+\!\!\sum_{\srcSymp'\in\liveset{\srcStr}}\!\!\prefixSource(\srcSymp'\mid\srcStr)\,\twist_t(\srcStr\srcSymp'),
  \end{split}
\end{equation}
For the live twist, $\liveZ_t(\srcStr)$ reduces to the raw conditional probability of the live set, so every particle stays live for $\tgtStr_{\le t}$ by construction.
\algSMC{} uses the same live-set restriction.
The downside is that the live set carries no information about how much covering mass lies ahead, so a particle may look good early and carry little weight later (\cref{fig:alignment}b).
We accept the compromise because the live predicate is computable at every step (\cref{sec:background}), whereas the full mass ahead is the optimal twist of \cref{eq:optimal-twist}, exactly the quantity we cannot compute.

\paragraph{Importance sampling in the target recursion.}
\algSMCrb{} iterates over the target and source indices as shown in the example \cref{fig:alignment}a.
A \defn{target step} occurs once when the recursion advances from target position $t-1$ to $t$.
At a fixed source prefix, the weight becomes
\begin{equation}\label{eq:target-step}
  \weight \;\gets\; \weight\cdot\frac{\interTarget_{t}(\srcStr)}{\interTarget_{t-1}(\srcStr)}
  =\weight\cdot\frac{\prefixSource(\srcStr)\twist_t(\srcStr)}{\prefixSource(\srcStr)\twist_{t-1}(\srcStr)}
  =\weight\cdot\frac{\twist_t(\srcStr)}{\twist_{t-1}(\srcStr)}.
\end{equation}
The \defn{source step} advances by drawing one symbol from the twisted proposal.
Its incremental importance weight is
\begin{equation}
  \begin{aligned}
    \frac{\interTarget_t(\srcStr\srcSymp)/\interTarget_t(\srcStr)}
    {\prefixProposalt(\srcSymp\mid\srcStr)}
     & =
    \frac{\prefixSource(\srcSymp\mid\srcStr)\twist_t(\srcStr\srcSymp)}
    {\twist_t(\srcStr)} \cdot
    \frac{\propZ_t(\srcStr)}
    {\prefixSource(\srcSymp\mid\srcStr)\twist_t(\srcStr\srcSymp)} =\frac{\propZ_t(\srcStr)}{\twist_t(\srcStr)}.
  \end{aligned}
\end{equation}
Therefore,
\begin{equation}
  \srcSymp\sim\prefixProposalt(\cdot\mid\srcStr),\qquad
  (\srcStr,\weight)
  \gets
  \left(\srcStr\srcSymp,
  \weight\frac{\propZ_t(\srcStr)}{\twist_t(\srcStr)}\right),
\end{equation}
with $\propZ_t(\srcStr)$ the proposal's normalizer of \cref{eq:proposal-tlm}.
At position $t$, the target-step factor is combined with one source-step factor for every symbol sampled within the call.
A particle that ends position $t-1$ at the prefix $\srcStr=\srcStr_{\le n}$, $n=|\srcStr|$, and takes $k$ source steps through position $t$ extends it to $\srcStr_{\le n+k}$, drawing the symbol $\srcStr_{n+j}$ at step $j$, and updates its weight by
\begin{equation}
  \weight \;\gets\; \weight\cdot
  \frac{\interTarget_{t}(\srcStr_{\le n+k})}{\interTarget_{t-1}(\srcStr_{\le n})\,\prod_{j=1}^{k}\prefixProposalt(\srcStr_{n+j}\mid\srcStr_{<n+j})}
  =\weight\cdot\frac{\twist_t(\srcStr_{\le n})}{\twist_{t-1}(\srcStr_{\le n})}\,
  \prod_{j=1}^{k}\frac{\propZ_t(\srcStr_{<n+j})}{\twist_t(\srcStr_{<n+j})},
\end{equation}
the target step once and one source-step factor per symbol.

For the live twist, $\twist_t(\srcStr)=1$ on every surviving particle.
\algSMCrb{} marginalizes the $\eos$ decision and conditions the proposal of \cref{eq:proposal-tlm} on drawing a live next source symbol.
Substituting the conditional proposal into \cref{eq:sis-weight-update}, the update simplifies to
\begin{equation}
  \label{eq:weight-update-tlm}
  \weight^{(m)}\gets
  \weight^{(m)}\cdot \liveZ_t(\srcStr^{(m)}).
\end{equation}
The sampled particle's weight therefore equals the sum of the weights of its parent's live children.

For a quotient particle carried from position $t-1$, the target-step factor $\twist_t(\srcStr)/\twist_{t-1}(\srcStr)$ is one if the prefix remains live at the new position and zero otherwise.
It re-enters the pool unchanged when the factor is one. When the factor is zero, $\liveZ_t(\srcStr)=0$, so \algSMCrb{} drops it at its next source step.
The recursive call also returns completed remainder entries. The membership re-check retains one exactly when its source string covers the longer target.
Together, these operations implement the update from $\interTarget_{t-1}$ to $\interTarget_t$ (\cref{fig:algos-smc}).\looseness=-1

\begin{figure}[t]
  \centering
  \smaller
  \begin{minipage}[t]{0.48\linewidth}
    \begin{minted}[autogobble,xleftmargin=14pt]{python}
def @$\algSMCrb(\tgtStr, M, \twist, \pruneFunc)$@:
  @$N \gets |\tgtStr|$@
  if @$N = 0$@:
    @$(\algQuotientp, \algRemainderp), \estZ \gets ([(\varepsilon, \tfrac{1}{M})]_{m=1}^{M},\ \emptyset),\ [\,]$@
  else:
    @$(\algQuotientp, \algRemainderp), \estZ  \gets \algSMCrb(\tgtStr_{<N}, M, \twist, \pruneFunc)$@
  @$\queue \gets \textsc{Queue}(\algQuotientp)$@
  @$(\algQuotient, \algRemainder), \accum \gets (\emptyset, \emptyset),\ 0$@
  for @$(\srcStr, \weight) \in \algRemainderp$@:
    if @$\isMember(\srcStr, \tgtStr)$@:
      @$\algRemainder$@.add(@$(\srcStr, \weight)$@)
      @$\accum \texttt{ += } \weight$@
  while @$|\queue| > 0$@:
    @$\queuep \gets \emptyset$@
    for @$\srcStr,\weight \in \queue$@:
      if @$\isCylinder(\srcStr, \tgtStr)$@:
        @$\algQuotient$@.add(@$(\srcStr, \weight)$@)
        @$\accum \texttt{ += } \weight$@
        continue
      if @$\isMember(\srcStr, \tgtStr)$@:
        @$\algRemainder$@.add(@$(\srcStr, \weight \cdot \prefixSource(\eos \mid \srcStr))$@)
        @$\accum \texttt{ += } \weight\cdot \prefixSource(\eos \mid \srcStr)$@
      @$\liveZ \gets \sum_{\srcSymp \in \srcAlphabet} \prefixSource(\srcSymp \mid \srcStr) \cdot \twist(\srcStr\srcSymp)$@
      if @$\liveZ = 0$@:
        continue
      @$\textbf{\texttt{sample }}\srcSymp \sim \prefixSource(\cdot \mid \srcStr)\,\twist(\srcStr\cdot) / \liveZ$@
      @$\queuep$@.add(@$(\srcStr \srcSymp,\ \frac{\weight \cdot \liveZ}{\twist(\srcStr\srcSymp)})$@)
    @$\queue \gets \pruneFunc(\queuep, \accum)$@
  @$\estZ$@.append(@$\accum$@)
  return @$(\algQuotient, \algRemainder), \estZ$@
\end{minted}
  \end{minipage}
  \hfill
  \begin{minipage}[t]{0.48\linewidth}
    \begin{minted}[autogobble,xleftmargin=10pt]{python}
def @$\algResample_{\twist,\threshold}(\queuep)$@:
  @$\mathcal{W}^{(i)} \gets \weight^{(i)} \cdot \twist(\srcStr^{(i)})$@
  @$V \gets \sum_i \mathcal{W}^{(i)}$@
  @$N \gets |\queuep|$@
  if @$V^2 \ge \threshold\, N \sum_i (\mathcal{W}^{(i)})^2$@:
    return @$\queuep$@
  @$j_1, \ldots, j_N \simIID \mathrm{Categorical}(\mathcal{W}^{(i)} / V)$@
  return @$\{(\srcStr^{(j_n)}, \frac{V\,\weight^{(j_n)}}{N\,\mathcal{W}^{(j_n)}})\}_{n=1}^{N}$@
\end{minted}
  \end{minipage}
  \caption{%
    The \algSMCrb{} algorithm and ESS-gated multinomial resampling under a twist.
  }
  \label{fig:algos-smc}
\end{figure}

\subsection{Unbiasedness of \algSMCrb{}}
\label{sec:smc-single-rb}
We now show that \algSMCrb{} is unbiased. The proof follows \cref{sec:swor-correctness} for \algBeam{} but with the sampled symbol extension instead of all live extensions.

\begin{proposition}[Unbiasedness of \algSMCrb{}]
  \label{prop:smcrb-unbiased}
  Fix a nonempty target string $\tgtStr$ and run \algSMCrb{} (\cref{fig:algos-smc}) with a twist.
  Suppose that every prune preserves the pool's expected weight at every prefix (\cref{eq:prune-preserve}) and that the run halts almost surely.
  Then $\mathbb{E}[\estZ_{|\tgtStr|}]=\prefixTarget(\tgtStr)$.
\end{proposition}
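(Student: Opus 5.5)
The plan is to mirror the proof of \cref{thm:swor-unbiased}, replacing only the deterministic all-child extension by the sampled single-child extension. The first and central step is a sampled analogue of \cref{lem:pruned-proper}. It says that at every iteration $i$ of every call and for every source prefix $\srcStr$, $\mathbb{E}[\weight_{\queue_i}(\srcStr)]=\weight_{\queueBar_i}(\srcStr)$, where $\queueBar_i$ are the formal pools of prune-free \algBeam{}.

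To prove this, I will first show that the sampled extension preserves each child's weight in conditional expectation. Fix a non-cylinder particle $(\srcStr,\weight)$ with $\liveZ>0$. It produces the child $\srcStr\srcSymp$ with probability $\prefixSource(\srcSymp\mid\srcStr)\twist(\srcStr\srcSymp)/\liveZ$ and weight $\weight\liveZ/\twist(\srcStr\srcSymp)$. The conditional expected weight placed at $\srcStr\srcSymp$ is therefore $\weight\,\prefixSource(\srcSymp\mid\srcStr)\,\idfunc\{\twist(\srcStr\srcSymp)>0\}$, which is exactly \cref{eq:extension}. If $\liveZ=0$, every admissible child has zero weight in the all-child extension as well, so dropping the particle loses nothing. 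Summing over particles gives $\mathbb{E}[\weight_{\queue'_i}(\srcStr\srcSymp)\mid\queue_i]=\weight_{\queue_i}(\srcStr)\idfunc\{\neg\isCylinder(\srcStr,\tgtStr)\}\prefixSource(\srcSymp\mid\srcStr)\idfunc\{\twist(\srcStr\srcSymp)>0\}$.

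The within-call induction then proceeds as in \cref{lem:pruned-proper}. I condition on the prune inputs and apply \cref{eq:prune-preserve}, then apply the sampled-extension identity via iterated expectation, then the inductive hypothesis. Linearity applies because the right side is linear in $\weight_{\queue_i}(\srcStr)$.

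The initial-pool base case needs one change. At $N=0$ the pool is $M$ copies of $(\varepsilon,1/M)$, so its aggregate weight is still one at $\varepsilon$. The outer induction over target length is then verbatim. The returned quotient has expected aggregate weight equal to the prune-free one, since each prefix enters the quotient at most once at a determined iteration.

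With the lemma in hand, the assembly step repeats the proof of \cref{thm:swor-unbiased}. By induction on $t$, $\mathbb{E}[V_t(\srcStr)]=\VBar_t(\srcStr)$ for every element of $\Quotient(\tgtStr_{\le t})\sqcup\Remainder(\tgtStr_{\le t})$. The argument splits into two cases, extended from a preceding quotient or retained from the preceding remainder, with the zero-$\eos$ convention handled separately. The quotient and remainder contributions are the same functions of the pool weights in both algorithms. This holds because \algSMCrb{} performs the same cylinder and member checks before sampling, and it uses the pre-extension weight $\weight$, not the sampled one. Tonelli then exchanges expectation and the possibly infinite sum in \cref{eq:pos-estimate}, and \cref{eq:simple-qr} finishes. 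Almost-sure halting makes every $V_t$ well defined.

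The main obstacle I expect is making the conditional-expectation step rigorous when the number of iterations is random and unbounded. I would handle it as in \cref{lem:pruned-proper}, with the convention that a halted run has empty pools. Each identity is then stated per iteration index $i$ and involves only nonnegative quantities, so no integrability issue arises.

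A minor check is that $\twist(\srcStr\srcSymp)$ in the denominator is positive whenever the child is sampled. This holds because the proposal gives zero probability to children with $\twist(\srcStr\srcSymp)=0$.
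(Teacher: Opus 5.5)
Your proposal is correct and follows the paper's proof essentially step for step. It uses the same nested induction, with the $M$ copies of $(\varepsilon,1/M)$ as the outer base case, the same sampled-extension identity replacing \cref{eq:extension} (including the $\liveZ=0$ and zero-twist cases), and the same assembly argument as in \cref{thm:swor-unbiased}. The one difference is cosmetic: the paper also notes, via \cref{eq:stop-cancel}, that the remainder term is the conditional expectation of the sampled stop in \algSMC{}, whereas you compare directly with \algBeam{}'s identical remainder contribution, which suffices.
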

\begin{proof}
  We follow the nested induction used in \cref{lem:pruned-proper}, with an outer induction on target length and inner induction on the within call iterations.
  Write $\queue_i$ for the \algSMCrb{} pool at iteration $i$ and $\queueBar_i$ for the corresponding pool in the prune-free \algBeam{} run.
  In the outer base case, the $M$ copies of $\varepsilon$ with weight $1/M$ in \algSMCrb{} have the same aggregate weight as the single copy of weight one in \algBeam{}.
  The recursive initial-pool argument is otherwise unchanged, so it remains to replace the extension identity \cref{eq:extension}.

  Condition on the current pool and fix a non-cylinder particle at $\srcStr$ with weight $\weight$.
  Write $\liveZ(\srcStr)=\sum_{\srcSympp}\prefixSource(\srcSympp\mid\srcStr)\twist(\srcStr\srcSympp)$ for its twisted live mass.
  When $\liveZ(\srcStr)>0$, let $\rvSym$ be drawn from the proposal of \cref{eq:proposal-tlm}, conditioned on a live source symbol.
  For any next source symbol $\srcSymp$ with $\twist(\srcStr\srcSymp)>0$, its conditional expected contribution is
  \begin{equation}
    \label{eq:sampled-cancel}
    \mathbb{E}_{\rvSym}\!\left[
      \idfunc\{\rvSym=\srcSymp\}
      \frac{\weight\,\liveZ(\srcStr)}{\twist(\srcStr\srcSymp)}
      \right]
    =
    \frac{\prefixSource(\srcSymp\mid\srcStr)\twist(\srcStr\srcSymp)}{\liveZ(\srcStr)}
    \frac{\weight\,\liveZ(\srcStr)}{\twist(\srcStr\srcSymp)}
    =\weight\,\prefixSource(\srcSymp\mid\srcStr).
  \end{equation}
  A zero-twist child receives no contribution under either extension.
  If $\liveZ(\srcStr)=0$, both extensions assign every admitted child zero weight, and cylinder prefixes produce no children under either extension.
  Summing \cref{eq:sampled-cancel} over the particles at $\srcStr$ gives the conditional-expectation version of \cref{eq:extension}.
  The tower rule and \cref{eq:prune-preserve} then close the same within-call induction and give, for every iteration $i$ and source prefix $\srcStr$,
  \begin{equation}
    \expected{}{\weight_{\queue_i}(\srcStr)}
    =\weight_{\queueBar_i}(\srcStr).
  \end{equation}

  Marginalizing the stop decision replaces its sampled contribution by its conditional expectation.
  At a member, the baseline proposal stops with probability $\prefixSource(\eos\mid\srcStr)/\propZ_t(\srcStr)$ and returns weight $\weight\propZ_t(\srcStr)$, so\looseness=-1
  \begin{equation}
    \label{eq:stop-cancel}
    \frac{\prefixSource(\eos\mid\srcStr)}{\propZ_t(\srcStr)}
    \weight\,\propZ_t(\srcStr)
    =\weight\,\prefixSource(\eos\mid\srcStr),
  \end{equation}
  which is the remainder contribution added by \algSMCrb{}.
  \algSMCrb{} adds quotient contributions and re-queues prefixes exactly as \algBeam{} does.
  The same assembly argument as in \cref{thm:swor-unbiased} therefore applies to the pool identity above, and \cref{eq:assembly-swap,eq:assembly-means} give the result.\looseness=-1
\end{proof}
The ESS-gated multinomial resample of \cref{fig:algos-smc} satisfies \cref{eq:prune-preserve} for every fixed input pool \citep[Prop.~16.3]{chopin2020introduction}.

\subsection{A Common Tail-Roulette Prune}
\label{sec:halting}
Source-prefix enumeration can continue indefinitely even as the remaining particle weights tend to zero (\cref{sec:sworb}).
The decorator in \cref{fig:tail-roulette} independently removes each sufficiently light particle with probability $1/2$ and doubles its weight otherwise.
For use with this decorator, we represent each particle as a triple $(\srcStr,\weight,\weight_0)$.
At the start of a target-prefix call, we set $\weight_0=\weight$ for every particle in $\queue_1$.
Each child and resampled copy inherits $\weight_0$.
The original pruning function acts on the source prefix and current weight while keeping $\weight_0$ unchanged.
For $\kappa\in(0,1)$, the decorator first applies the \emph{original} pruning function, then applies the coin flip independently to each returned particle $(\srcStr,\weight,\weight_0)$ with $\weight<\kappa\weight_0$.\looseness=-1

\begin{figure}[t]
  \centering
  \begin{minipage}{\linewidth}
    \begin{minted}[autogobble,xleftmargin=14pt]{python}
def with_tail_roulette(@$\pruneFunc,\kappa$@):
  def decorated(@$\queuep,\ldots$@):
    @$\queue \gets \pruneFunc(\queuep,\ldots)$@
    @$\widetilde{\queue} \gets \emptyset$@
    for @$(\srcStr^{(m)},\weight^{(m)},\weight_0^{(m)})\in\queue$@:
      if @$\weight^{(m)} = 0$@:
        continue
      if @$\weight^{(m)} \ge \kappa\,\weight_0^{(m)}$@:
        @$\widetilde{\queue}$@.add(@$(\srcStr^{(m)},\weight^{(m)},\weight_0^{(m)})$@)
        continue
      @$b\sim\mathrm{Bernoulli}(1/2)$@
      if @$b=1$@:
        @$\widetilde{\queue}$@.add(@$(\srcStr^{(m)},2\weight^{(m)},\weight_0^{(m)})$@)
    return @$\widetilde{\queue}$@
  return decorated
\end{minted}
  \end{minipage}
  \caption{Tail-roulette decorator for a pruning function.}
  \label{fig:tail-roulette}
\end{figure}

Fix a finite starting pool $\queue_1$, $\kappa\in(0,1)$, and an original pruning function that returns at most $M$ particles.
We first show that the decorator preserves the particle weight in expectation whenever the original pruning function does.
We then show that the algorithms with a fixed particle cap $M$ in \cref{tab:tlm-zoo} halt almost surely.\looseness=-1

\paragraph{Expected-weight preservation.}
Conditional on the original pruning function's output and the reference weights $\weight_0$, a particle selected for the coin flip contributes $2\weight$ with probability $1/2$ and zero otherwise, giving expected contribution $\weight$.
The coin flip therefore preserves every prefix weight in conditional expectation.
Averaging over the original pruning function's output, the decorated pruning function satisfies \cref{eq:prune-preserve} whenever the original does.

\paragraph{Almost-sure halting.}
Let $\queue_i^\circ$ be the pool returned by the original pruning function at iteration $i$, immediately before the coin flips, and define its total particle weight by
\begin{equation}
  W_i
  \defeq
  \sum_{(\srcStr^{(m)},\weight^{(m)},\weight_0^{(m)})\in\queue_i^\circ}
  \weight^{(m)}.
\end{equation}
By \cref{eq:prune-preserve,lem:pruned-proper}, $\expected{}{W_i}$ equals the total weight in the corresponding prune-free pool after $i$ source extensions for \algBeam{} and hence \algSWOR{}.
The sampled-extension identity \cref{eq:sampled-cancel} gives the same result for \algSMCrb{}, and the importance-weight identity \cref{eq:sis-weight-update,eq:is-weight} gives it for \algSMC{}.
For $\algTopM$, the expected total is no larger because its prune only removes particles.
In every case,
\begin{align}
  \expected{}{W_i}
  &\le
  \sum_{m=1}^{|\queue_1|}
  \weight_0^{(m)}
  \sum_{\srcStrp\in\srcAlphabet^i}
  \prefixSource(\srcStrp\mid\srcStr^{(m)})
  \xrightarrow{i\to\infty} 0.
  \label{eq:tail-roulette-mass}
\end{align}
The inner sum is the conditional probability of generating at least $i$ additional source symbols from $\srcStr^{(m)}$.
Because $\pSource$ is a distribution over finite strings, this probability tends to zero as $i\to\infty$ (\cref{sec:lms}).
Since the outer sum has finitely many terms, the displayed limit follows.

Discard zero-weight particles from $\queue_1$.
If none remain, the call halts immediately.
Otherwise, set
\begin{equation}
  \delta_{\min}
  \defeq
  \kappa\min_{1\le m\le|\queue_1|}\weight_0^{(m)}
  >0.
\end{equation}
Because descendants inherit $\weight_0$, every descendant's coin-flip threshold $\kappa\weight_0$ is at least $\delta_{\min}$.

For $i\ge1$, write $\widetilde{\queue}_i$ for the pool after the coin flips at iteration $i$.
If $W_i<\delta_{\min}$, every particle in $\queue_i^\circ$ is selected for a coin flip.
The original pruning function returns at most $M$ particles, so the independent coin flips empty the pool with conditional probability at least $2^{-M}$.
Since $W_i$ is nonnegative, Markov's inequality and the coin-flip bound above give
\begin{align}
  \Pr{W_i\ge\delta_{\min}}
    &\le
    \frac{\expected{}{W_i}}{\delta_{\min}}
    \longrightarrow 0, \\
  \Pr{\widetilde{\queue}_i\ne\emptyset}
    &\le
    \Pr{W_i\ge\delta_{\min}}
    +(1-2^{-M})
    \Pr{\widetilde{\queue}_{i-1}\ne\emptyset},
    \qquad i\ge2.
\end{align}
Let $E_i\defeq\{\widetilde{\queue}_i\ne\emptyset\}$ be the event that the pool remains nonempty after iteration $i$, and let $p_i\defeq\Pr{E_i}$.
Because an empty pool remains empty, $E_{i+1}\subseteq E_i$.
Thus, $(p_i)_{i\ge1}$ is nonincreasing and bounded below by zero, so it converges to some $p\ge0$.
Taking $i\to\infty$ in the recursion, with $p_i\to p$, $p_{i-1}\to p$, and $\Pr{W_i\ge\delta_{\min}}\to0$, gives $p\le(1-2^{-M})p$.
Since $1-2^{-M}<1$, this forces $p=0$.
The intersection $\bigcap_{i\ge1}E_i$ is the event that the pool remains nonempty at every iteration.
For decreasing events, its probability is the limit of their probabilities, so $\Pr{\bigcap_{i\ge1}E_i}=\lim_{i\to\infty}p_i=0$.
A call using the decorated prune thus halts almost surely.
For the recursive algorithms, the target string has finitely many positions, so the complete run also halts almost surely.
Any $\kappa\in(0,1)$ gives the halting guarantee.\looseness=-1

\paragraph{Settings.}
The threshold $\kappa$ determines when roulette begins.
We choose it small so that roulette is delayed until a particle's current weight is small relative to its reference weight.
We use $\kappa=e^{-15}$ for the WikiText experiments and $\kappa=e^{-30}$ for the DNA and MECO experiments.
\sworB{} does not require the decorator because its adaptive prune uses $\accum_{\epsilon,t}$ to ensure almost-sure halting (\cref{prop:sworb}).\looseness=-1


\section{Sensitivity to Pruning Parameters}
\label{app:knobs}

The experiments use $\epsilon=0$.
For each fixed target prefix, the finite-context source models and finite-state transducers give finitely many live configurations.
Full support gives a positive lower bound on the probability of reaching a member or cylinder from any such configuration, so $\accum$ becomes positive almost surely.
The proof of \cref{prop:sworb} then applies with $\accum_{\epsilon,t}=\accum>0$.
Besides $M$, the adaptive-budget prune uses $\rho$ (\cref{sec:sworb}), and the ESS-gated multinomial prune uses $\threshold$ (\cref{app:smc}).
We vary both on the ten-paragraph bigram corpus of \cref{sec:ptb-lane}, using eight seeds (\cref{fig:knob-pareto}).

Smaller $\rho$ retains more particles.
At nominal $M{=}8000$, reducing $\rho$ from $1$ to $0.02$ lowers the standard deviation from $0.711$ to $0.073$ nats and raises the time per seed from $349$ to $1551$ seconds.
The default $\rho{=}0.1$ gives a standard deviation of $0.200$ nats at $623$ seconds.
Plain \algSWOR{} at $M{=}2000$ gives $0.170$ nats at $3432$ seconds.
With $149$ seeds, the standard deviation at the default is $0.163$ with CI $[0.146,0.184]$, consistent with the fixed-$M$ \algSWOR{} result at $5.5\times$ less CPU time.
At $M{=}2000$, the between-seed standard deviation rises from $0.632$ nats at $\rho{=}0.1$ to $4.815$ at $\rho{=}0.5$ and $5.283$ at $\rho{=}1$.
At $\rho{=}1$, the mean live count is $18$, and most log estimates are several nats low.

Increasing $\threshold$ causes more frequent resampling, while CPU time changes little across the tested values.
At $M{=}2000$, the survivor-conditioned standard deviation rises from $5.1$ nats at $\threshold{=}0.25$, where seven of eight corpus seeds survive, to $7.86$ at the default $\threshold{=}0.5$ and $9.8$ at $\threshold{=}0.75$.
At $\threshold{=}0$, every seed has at least one zero paragraph estimate.
The same holds at $M{=}200$ for every tested value of $\threshold$, despite some nonzero individual paragraph estimates.

\begin{figure}[t]
  \centering
  \includegraphics[width=\linewidth]{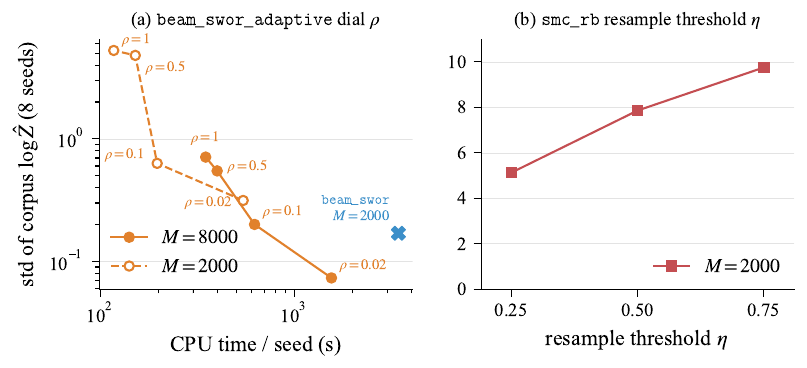}
  \caption{Sensitivity to the pruning parameters on the ten-paragraph bigram corpus (eight seeds).
    (a)~Between-seed standard deviation of the corpus $\log\estZ$ against CPU time per seed as $\rho$ varies for \sworB{} at nominal $M{\in}\{2000,8000\}$. Labels give $\rho$, and plain \algSWOR{} at $M{=}2000$ is the fixed-$M$ reference.
    (b)~Survivor-conditioned standard deviation as the ESS threshold $\threshold$ varies for \algSMCrb{} at $M{=}2000$. A corpus seed survives when all ten paragraph estimates are nonzero. Annotations mark settings with fewer surviving seeds and note that none survives at $M{=}200$.}
  \label{fig:knob-pareto}
\end{figure}

\section{Exact Ground Truth for Bigram Models}
\label{app:tailacc-solve}

The bigram model can be represented as a weighted finite-state acceptor.
Composing it with $\STPTB$ and projecting onto the target labels gives a weighted automaton defining $\pbigram{\circ}\STPTB$.
In the representation used here, the functional transducer $\STPTB$ admits at most one accepting path for each source string, so path sums do not double-count source strings.

For each state $\state$, let $\beta(\state)$ be its backward weight, the sum of the weights of all paths starting at $\state$, including the final weight at the end of each path.
Let $T$ be the transition-weight matrix and $f$ the final-weight vector.
From each state, a path either stops there with its final weight or takes a transition and then continues.
The backward weights therefore satisfy \citep{Mohri2009}
\begin{equation}
  \beta
  =
  f+T\beta,
  \qquad\text{so}\qquad
  \beta
  =
  (I+T+T^2+\cdots)f
  =
  (I-T)^{-1}f.
\end{equation}

Because the composed automaton has no cycle that can be traversed without consuming a source symbol, the number of consumed source symbols tends to infinity with the number of transitions.
The probability of generating at least $k$ source symbols tends to zero as $k\to\infty$, so the series converges.

For each position $t$ and state $\state$, let $\alpha_t(\state)$ be the sum of the weights of all paths that start at the initial state, emit $\tgtStr_{\le t}$, and end in $\state$ immediately after emitting $\tgtStr_t$.
Summing over the states reached after emitting $\tgtStr_t$ gives
\begin{equation}
  \prefixTarget(\tgtStr_{\le t})
  =
  \sum_{\srcStr\in\preCover(\tgtStr_{\le t})}\pSource(\srcStr)
  =
  \sum_{\state\in\States}\alpha_t(\state)\beta(\state)
  =
  \alpha_t^\top\beta.
\end{equation}

The backward weights $\beta(\state)$ are computed once for all states, while $\alpha_t(\state)$ is updated after each target symbol.
The composed state space is finite for an $n$-gram source, but not for a neural language model whose next-symbol probabilities depend on the complete source prefix.

\section{The Genetic-Code Worst Case}
\label{app:worstcase}
This appendix analyzes a worst case for the DNA setting in \cref{sec:dna}. Standard beam search captures an exponentially vanishing share of the target mass, while \algSMC{}, \algSMCrb{}, and fixed-$M$ \algSWOR{} are exact for any $M$.
Consider a uniform source model $\pSource(\srcStr) = (1/4)^{3T}$ over source strings of length $3T$ on a four-symbol alphabet, and a target of length $T$ where each target symbol is the image of $d$ length-three source blocks (``codons'').
Assume that the live proposal samples each of the $d$ codons with probability $1/d$. This holds, for example, for repeated isoleucine, where $d=3$ and the three codons differ only in their final base.
Exactly $d^{T}$ strings in the source model's support belong to the precover, so $\pTarget(\tgtStr) = d^{T} \cdot (1/4)^{3T}$.

\paragraph{Standard beam search.}
\algBeam{} with $\algTopM$ retains at most $M$ of these $d^{T}$ source strings, yielding
\begin{equation}
  \estZ_{\text{beam}} \le M \cdot \left(\frac{1}{4}\right)^{3T},
  \qquad
  \frac{\estZ_{\text{beam}}}{\pTarget(\tgtStr)} \le \frac{M}{d^{T}}.
\end{equation}
This ratio vanishes exponentially in $T$.
For $T = 100$ and $d = 3$, the smallest beam that recovers all the mass has $M = 3^{100} \approx 5 \times 10^{47}$.
The estimate is biased for any $M<d^{T}$, systematically underestimating $\pTarget(\tgtStr)$.

\paragraph{Importance-weighted estimators.}
Each valid codon has source probability $4^{-3}$ and proposal probability $1/d$, so its importance weight gains the factor $(1/4^3)/(1/d)=d/4^3$ under both \algSMC{} and \algSMCrb{} (\cref{eq:sis-weight-update,eq:weight-update-tlm}).
After $T$ amino acids every particle carries
\begin{equation}
  \weight^{(m)} = \frac{1}{M}\frac{d^{T}}{4^{3T}} = \frac{\pTarget(\tgtStr)}{M},
\end{equation}
so $\estZ = \sum_{m=1}^{M}\weight^{(m)} = \pTarget(\tgtStr)$ exactly.
The factor $d^{T}$ emerges in the importance weights, letting $M$ particles represent $d^{T}$ covering source strings.
Thus \algSMC{} and \algSMCrb{} both have zero variance. Fixed-$M$ \algSWOR{} is also exact. At every branch its child weights are equal, and its without-replacement reweighting preserves their total on every draw.

This is an extreme case, a uniform source and a constant alias count $d$.
In practice, non-uniform DNA models introduce variance across particles.
The example does not distinguish among the stochastic estimators, whose variance differences are measured in the non-uniform experiments.

\section{Details of the MECO Rerun}
\label{app:meco-anatomy}
We compare contextual surprisals computed by beam summing with $\pruneTau$ and \sworHalt{} on the same $2{,}264$ PTB units in the $12$ MECO trials (individually read paragraphs).
We use the generalized additive mixed models (GAMMs) of \citet{kiegeland2025proper}, which model reading times with a log-normal distribution.
The baseline includes unit length and unigram surprisal for the current unit and the two preceding units.
The target model adds contextual surprisal at the same three positions.
Both models include a participant random intercept and by-participant random slopes for every predictor.
We use $12$-fold leave-one-trial-out cross-validation and report per-observation held-out $\dll$.
Confidence intervals come from a $1{,}000$-resample trial-level bootstrap, and significance is assessed with one-sided paired sign-flip tests.
We change only the pruning function used to compute contextual surprisal and rerun the fits on the same observations ($n{=}33{,}472$).
The $\pruneTau$ values are the telescoping estimates reported by \citet{kiegeland2025proper}, computed with a threshold of $5\times10^{-3}$.
For \sworHalt{}, we use $\rho{=}0.1$ and nominal $M{=}8000$, and average each prefix-probability estimate over $10$ runs before taking logarithms.
The reported GAMM fits use the resulting surprisals without a jackknife correction.
The analysis below shows that applying the correction changes no conclusion.\looseness=-1

\paragraph{Jackknife check.}
Averaging the prefix-probability estimates preserves unbiasedness, but taking the logarithm of an average introduces bias.
To estimate it, we recompute each unit's surprisal ten times, omitting one run each time.
If $s$ is one unit's surprisal from all ten runs and $s_{\cancel{r}}$ is its surprisal with run $r$ omitted, the jackknife estimates the bias as $9\bigl(\tfrac{1}{10}\sum_{r=1}^{10}s_{\cancel{r}}-s\bigr)$, which we subtract from $s$.

The correction raises the corpus log probability by $1.55$ nats, equivalently lowering its surprisal by the same amount. It changes the $106$-nat uncorrected difference reported in \cref{sec:psycholing} to approximately $107$ nats. Refitting with the corrected surprisals moves every held-out $\dll$ by at most $0.015\times10^{-3}$ nats and changes neither the results in \cref{tab:gamm_smc_within} nor the matched comparison below.

\paragraph{Matched fit comparison.}
Under the experimental setup of \citet{kiegeland2025proper}, one-sided paired sign-flip tests show no evidence of a predictive difference between the matched PTB-word fits ($\significance\ge 0.068$ in both directions for every measure).

\section{Additional Byte-to-Word Results}
\label{app:perpos-grid}

\Cref{fig:perpos-grid,fig:perpos-grid-b} extend the exact bigram validation of \cref{fig:validation} to every byte position in all ten paragraphs. \Cref{tab:gpt2-head-to-head} reports the additional per-paragraph GPT-2 estimates and runtimes.
\begin{figure}[htbp]
  \centering
  \includegraphics[width=0.9\linewidth]{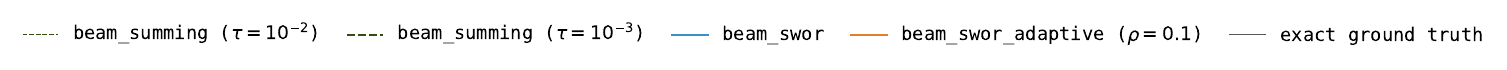}\\[0.6ex]
  \includegraphics[width=0.48\linewidth]{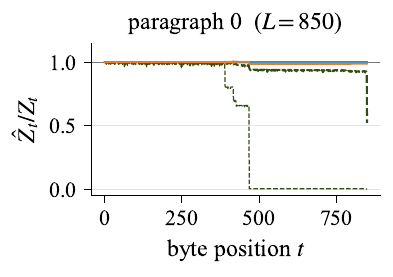}\hfill
  \includegraphics[width=0.48\linewidth]{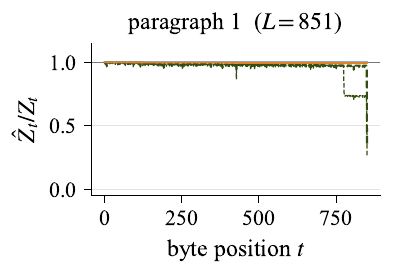}\\[0.4ex]
  \includegraphics[width=0.48\linewidth]{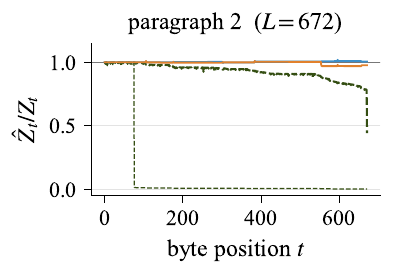}\hfill
  \includegraphics[width=0.48\linewidth]{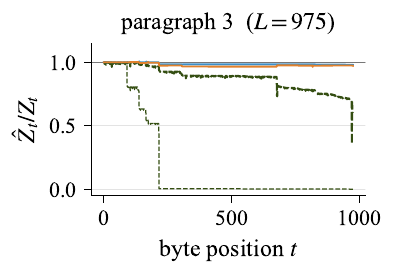}\\[0.4ex]
  \includegraphics[width=0.48\linewidth]{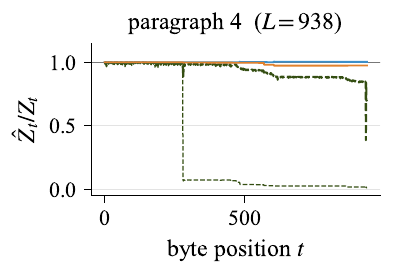}\hfill
  \includegraphics[width=0.48\linewidth]{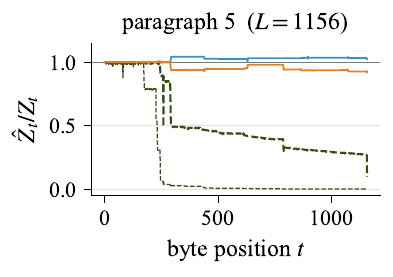}
  \caption{Per-position ratio $\estZ_t / \evZ_t$ for all ten $\pbigram{\circ}\STPTB$ paragraphs at $M{=}200$, averaged over $256$ runs. Paragraphs 0--5 are shown here and 6--9 in \cref{fig:perpos-grid-b}. The \algSWOR{} and \sworB{} averages remain close to one with sampling variation. Threshold-pruned beam summing at $\pruneThreshold{=}10^{-2}$ and $\pruneThreshold{=}10^{-3}$ falls below one where pruning discards positive covering mass.}
  \label{fig:perpos-grid}
\end{figure}

\begin{figure}[htbp]
  \centering
  \includegraphics[width=0.48\linewidth]{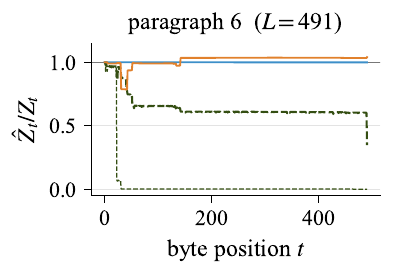}\hfill
  \includegraphics[width=0.48\linewidth]{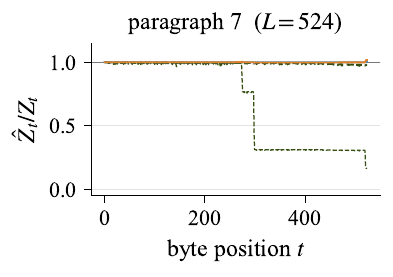}\\[0.4ex]
  \includegraphics[width=0.48\linewidth]{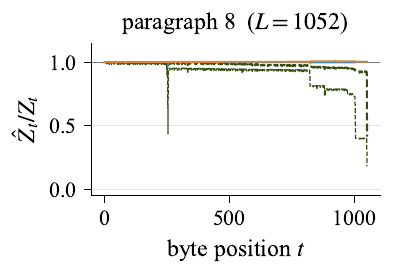}\hfill
  \includegraphics[width=0.48\linewidth]{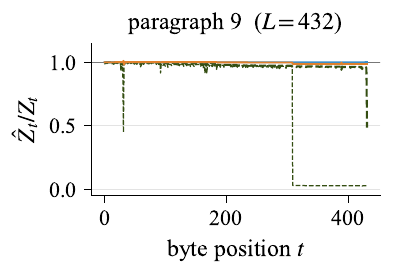}
  \caption{Per-position validation, paragraphs 6--9 (continued from \cref{fig:perpos-grid}), with the same methods and axes.}
  \label{fig:perpos-grid-b}
\end{figure}

\begin{table}[b]
  \centering
  \small
  \caption{Per-paragraph log-prefix probabilities in nats and wall time in seconds under $\gpt{\circ}\STPTB$. \algBeam{} uses $\pruneThreshold{=}10^{-3}$. \sworB{} uses $\rho{=}0.1$, nominal $M{=}800$, and thirty-two runs. Parentheses give the between-run standard deviation, and wall times are per run. The $\Delta$ column is \algBeam{} minus \sworB{}. The total row sums the per-paragraph log estimates. The corpus panel of \cref{fig:gpt2-ptb} instead averages the per-run corpus estimates before taking the logarithm, hence its $-5376.7$.}
  \label{tab:gpt2-head-to-head}
  \begin{tabular}{@{}rrrrrr@{}}
    \toprule
    para  & \algBeam{} & \sworB{}                         & $\Delta$ & $t_{\mathrm{beam}}$ & $t_{\mathrm{adaptive}}$ \\
    \midrule
    0     & $-556.96$   & $-554.96$ {\scriptsize$(0.25)$} & $-2.00$  & $52$                 & $57$                  \\
    1     & $-564.10$   & $-556.41$ {\scriptsize$(1.38)$} & $-7.70$  & $106$                & $210$                 \\
    2     & $-443.86$   & $-443.21$ {\scriptsize$(0.12)$} & $-0.65$  & $54$                 & $102$                 \\
    3     & $-747.96$   & $-743.12$ {\scriptsize$(0.19)$} & $-4.84$  & $170$                & $216$                 \\
    4     & $-635.85$   & $-631.16$ {\scriptsize$(0.20)$} & $-4.69$  & $191$                & $269$                 \\
    5     & $-772.31$   & $-768.42$ {\scriptsize$(0.27)$} & $-3.88$  & $353$                & $647$                 \\
    6     & $-359.56$   & $-355.23$ {\scriptsize$(0.12)$} & $-4.34$  & $190$                & $244$                 \\
    7     & $-330.81$   & $-330.57$ {\scriptsize$(0.13)$} & $-0.24$  & $21$                 & $21$                  \\
    8     & $-687.00$   & $-684.43$ {\scriptsize$(0.13)$} & $-2.57$  & $53$                 & $73$                  \\
    9     & $-311.62$   & $-309.09$ {\scriptsize$(0.27)$} & $-2.54$  & $33$                 & $39$                  \\
    \midrule
    total & $-5410.04$  & $-5376.60$                      & $-33.45$ & $1224$               & $1879$                \\
    \bottomrule
  \end{tabular}
\end{table}

\end{document}